\documentclass[10pt]{article} 
\usepackage{ascmac}
\usepackage[preprint]{tmlr}


\usepackage{hyperref}
\usepackage{url}

\newcommand\blfootnote[1]{%
	\begingroup
	\renewcommand\thefootnote{}\footnote{#1}%
	\addtocounter{footnote}{-1}%
	\endgroup
}

\newcommand{\algname}{KS-LC\textsuperscript{3}}
\newcommand{\algfullname}{Koopman-Spectrum LC\textsuperscript{3}}
\newcommand{\fwname}{KSNR}
\newcommand{\fwfullname}{Koopman Spectrum Nonlinear Regulator}
\title{Koopman Spectrum Nonlinear Regulators and Efficient Online Learning}

\author{\name Motoya Ohnishi \email mohnishi@cs.washington.edu \\
		\addr Paul G. Allen School of Computer Science \& Engineering\\
		University of Washington
		\AND
		\name Isao Ishikawa \email ishikawa.isao.zx@ehime-u.ac.jp \\
		\addr Ehime University \\ RIKEN Center for Advanced Intelligence Project
		\AND
		\name Kendall Lowrey \email kendall.lowrey@gmail.com \\
		\addr Et Cetera Robotics
		\AND
		\name Masahiro Ikeda \email masahiro.ikeda@riken.jp\\
		\addr RIKEN Center for Advanced Intelligence Project \\ Keio University
		\AND
		\name Sham Kakade \email sham@seas.harvard.edu\\
		\addr Harvard University
		\AND
		\name Yoshinobu Kawahara \email kawahara@ist.osaka-u.ac.jp\\
		\addr Graduate School of Information Science and Technology, Osaka University \\ RIKEN Center for Advanced Intelligence Project}

\usepackage[utf8]{inputenc}

\usepackage{amsmath, amsfonts, amsthm, amssymb, graphicx}
\usepackage{mathtools,verbatim}
\usepackage{enumitem}
\usepackage{natbib}
\usepackage{algorithm, algpseudocode}
\usepackage{hyperref}
\hypersetup{colorlinks,citecolor=blue}
\usepackage{fancyhdr}
\usepackage{cleveref}
\usepackage{xcolor}         
 
\graphicspath{{figures/}}
\usepackage{booktabs}
\usepackage{mathrsfs}
\newcommand{\Mtwo}{V_{\max}}

\numberwithin{equation}{section}

\renewcommand{\hbar}{\tau}




{
 \theoremstyle{plain}
      \newtheorem{asm}{Assumption}
}
\newtheorem{nono-theorem}{Theorem}[]
\theoremstyle{plain}
\newtheorem{theorem}{Theorem}[section]

\newtheorem{lemma}[theorem]{Lemma}

\newtheorem{prop}[theorem]{Proposition}
\newtheorem{cor}[theorem]{Corollary}
\theoremstyle{definition}
\newtheorem{definition}{Definition}[section]

\newtheorem{example}{Example}[section]

\newtheorem{remark}[theorem]{Remark}

\renewcommand{\Pr}{\mathrm{Pr}}
\newcommand{\Exp}{\mathbb{E}}


\newcommand{\tr}{\mathrm{tr}}

\newcommand{\inpro}[1]{\left<#1\right>}



\newcommand{\N}{\mathbb{N}}
\newcommand{\Z}{\mathbb{Z}}
\newcommand{\C}{\mathbb{C}}
\newcommand{\R}{\mathbb{R}}



\DeclareMathOperator*{\argmin}{arg\,min}



\begin{document}

\maketitle
\blfootnote{Project page: https://sites.google.com/view/ksnr-dynamics/}
\vspace{-1.5em}
\begin{abstract}
Most modern reinforcement learning algorithms optimize a cumulative single-step cost along a trajectory. The optimized motions are often ‘unnatural’, representing, for example, behaviors with sudden accelerations that waste energy and lack predictability. In this work, we present a novel paradigm of controlling nonlinear systems via the minimization of the {\it Koopman spectrum cost}: a cost over the Koopman operator of the controlled dynamics. This induces a broader class of dynamical behaviors that evolve over stable manifolds such as nonlinear oscillators, closed loops, and smooth movements.  We demonstrate that some dynamics characterizations that are not possible with a cumulative cost are feasible in this paradigm, which generalizes the classical eigenstructure and pole assignments to nonlinear decision making.  Moreover, we present a sample efficient online learning algorithm for our problem that enjoys a sub-linear regret bound under some structural assumptions.
\end{abstract}
\vspace{-0.5em}
\section{Introduction}
Reinforcement learning (RL) has been successfully applied to diverse domains, such as robot control (\citet{kober2013reinforcement, todorov2012mujoco,ibarz2021train}) and playing video games (\citet{mnih2013playing,mnih2015human}).
Most modern RL problems modeled as Markov decision processes consider an immediate (single-step) cost (reward) that accumulates over a certain time horizon to encode tasks of interest.
Although such a cost can encode any single realizable dynamics, which is central to inverse RL problems (\citet{ng2000algorithms}), the generated motions often exhibit undesirable
properties such as high jerk, sudden accelerations that waste energy, and unpredictability.
Intuitively, the motion specified by the task-oriented cumulative cost formulation may ignore ``how to'' achieve the task unless careful design of cumulative cost is in place, necessitating a systematic approach that effectively regularizes or {\em constrains} the dynamics to guarantee predictable global property such as stability. 

Meanwhile, many dynamic phenomena found in nature are known to be represented as simple trajectories, such as nonlinear oscillators, on low-dimensional manifolds embedded in high-dimensional spaces that we observe \citep{strogatz2018nonlinear}. Its mathematical concept is known as {\em phase reduction} \citep{Win01,Nak17}, and recently its connection to the Koopman operator has been attracted much attention in response to the growing abundance of measurement data and the lack of known governing equations for many systems of interest \citep{koopman1931hamiltonian,Mez05,KBB+16}.

In this work, we present a novel paradigm of controlling nonlinear systems based on the spectrum of the Koopman operator. To this end, we exploit the recent theoretical and practical developments of the Koopman operators \citep{koopman1931hamiltonian,Mez05}, and propose the {\it Koopman spectrum cost} as the cost over the Koopman operator of controlled dynamics, defining a preference of the dynamical system in the reduced phase space. The Koopman operator, also known as the composition operator, is a linear operator over an observable space of a (potentially nonlinear) dynamical system,
and is used to extract global properties of the dynamics such as its dominating modes and eigenspectrum through spectral decomposition.
Controlling nonlinear systems via the minimization of the Koopman spectrum cost induces a broader class of dynamical behaviors such as nonlinear oscillators, closed loops, and smooth movements.

Although working in the spectrum (or frequency) domain has been standard in the control community (e.g. \cite{andry1983eigenstructure,hemati2017dynamic}), the use of the Koopman spectrum cost together with function approximation and learning techniques enables us to generate rich class of dynamics evolving over stable manifolds (cf. \citet{strogatz2018nonlinear}). 

\paragraph{Our contributions.}
The contributions of this work are three folds:\@
First, we propose the Koopman spectrum cost that complements the (cumulative) single-step cost for nonlinear control.
Our problem, which we refer to as {\fwfullname} ({\fwname}), is to find an optimal parameter (e.g. policy parameter) that leads to a dynamical system associated to the Koopman operator that minimizes the sum of both the Koopman spectrum cost and the cumulative cost.
Note that ``Regulator'' in {\fwname} means not only {\em controller} in control problems but a more broad sense of regularization of dynamical systems for attaining specific characteristics.
Second, we show that {\fwname} paradigm effectively encodes/imitates some desirable agent dynamics such as limit cycles, stable loops, and smooth movements.  Note that, when the underlying agent dynamics is known, {\fwname} may be approximately solved by extending any nonlinear planning heuristics, including population based methods.
Lastly, we present a (theoretical) learning algorithm for online {\fwname}, which attains the sub-linear regret bound (under certain condition, of order $\tilde{O}(\sqrt{T})$).  Our algorithm ({\algfullname} ({\algname})) is a modification of Lower Confidence-based Continuous Control (LC\textsuperscript{3})~\citep{kakade2020information} to {\fwname} problems with several technical contributions.  We need structural assumptions on the model to simultaneously deal with the Koopman spectrum cost and cumulative cost.  Additionally, we present a certain H\"{o}lder condition of the Koopman spectrum cost that makes regret analysis tractable for some cost such as the spectral radius.

\begin{itembox}[l]{\textbf{Key Takeaways}}
	{\fwname} considers the {\em spectrum cost} that is not subsumed by a classical single-step cost or an episodic cost, and is beyond the MDP framework, which could be viewed as a generalization of eigenstructure\,/\,pole assignments to nonlinear decision making problems.  The framework systematically deals with the {\em shaping} of behavior (e.g., ensuring stability, smoothness, and adherence to a target {\em mode} of behavior).  Because we employ this new cost criterion, we strongly emphasize that {\em this work is not intended to compete against the MDP counterparts, but rather to illustrate the effectiveness of the generalization we make for systematic behavior shaping}.  For online learning settings under unknown dynamics, the spectrum cost is unobservable because of inaccessible system models; it thus differs from other costs such as a policy cost.  As such, we need specific structural assumptions that are irrelevant to the Kernelized Nonlinear Regulator \citep{kakade2020information} to devise sample efficient (if not computationally efficient) algorithm.  Proof techniques include some operator theoretic arguments that are novel in this context.
\end{itembox}

\paragraph{Notation.} Throughout this paper, $\R$, $\R_{\geq 0}$, $\N$, $\Z_{>0}$, and $\C$ denote the set of the real numbers, the nonnegative real numbers, the natural numbers ($\{0,1,2,\ldots\}$), the positive integers, and the complex numbers, respectively.
Also, $\Pi$ is a set of dynamics parameters, and $[H]:=\{0, 1, \ldots H-1\}$ for $H\in\Z_{> 0}$.
The set of bounded linear operators from $\mathcal{A}$ to $\mathcal{B}$ is denoted by $\mathcal{L}(\mathcal{A};\mathcal{B})$, and the adjoint of the operator $\mathscr{A}$ is denoted by $\mathscr{A}^\dagger$.  We let $\det(\cdot)$ be the functional determinant.
Finally, we let $\|x\|_{\R^d}$, $\|x\|_1$, $\|\mathscr{A}\|$, and $\|\mathscr{A}\|_{\rm HS}$ be the Euclidean norm of $x\in\R^d$, the 1-norm (sum of absolute values), the operator norm of $\mathscr{A}\in\mathcal{L}(\mathcal{A};\mathcal{B})$, and the Hilbert–Schmidt norm of a Hilbert-Schmidt operator $\mathscr{A}$, respectively.

\section{Related work}
\label{sec:relatedwork}
Koopman operator was first introduced in~\citet{koopman1931hamiltonian}; and, during the last two decades,
it has gained traction, leading to the developments of theory and algorithm (e.g. \citet{vcrnjaric2019koopman,kawahara2016dynamic,mauroy2016global,IFI+18,IK20,burov2021kernel}) partially due to the surge of interests in data-driven approaches.
The analysis of nonlinear dynamical system with Koopman operator has been applied to control (e.g. \citet{korda2018linear,mauroy2020koopman,kaiser2021data,li2019learning,korda2020optimal}), using model predictive control (MPC) framework and linear quadratic regulator (LQR) although nonlinear controlled systems in general cannot be transformed to LQR problem even by lifting to a feature space.  For unknown systems, active learning of Koopman operator has been proposed (\citet{abraham2019active}).
Note our framework applied to stability regularization considers the solely different problem than that of \citep{mamakoukas2023learning}.
In the context of stability regularization, our framework is {\em not for learning the stable Koopman operator or to construct a control Lyapunov function from the learned operator} but to solve the regulator problem to balance the (possibly task-based) cumulative cost and the spectrum cost that enforces stability.  We will revisit this perspective in Section \ref{subsec:simexp}. 

The line of work that is most closely related to ours is the eigenstructure\,/\,pole assignments problem (e.g.\@ \citet{andry1983eigenstructure,hemati2017dynamic}) classically considered in the control community particularly for linear systems.  In fact, in the literature on control, working in frequency domain has been standard (e.g.~\citet{pintelon2012system,sabanovic2011motion}).  These problems aim at computing a feedback policy that generates the dynamics whose eigenstructure matches the desired one; we note such problems can be naturally encoded by using the Koopman spectrum cost in our framework as well.
In connection with the relation of the Koopman operator to stability, these are in principle closely related to the recent surge of interest in neural networks to learn dynamics with stability \citep{manek2020learning,TK21}.

In the RL community, there have been several attempts of using metrics such as mutual information for acquisitions of the {\it skills} under RL settings, which are often referred to as unsupervised RL (e.g. \citet{eysenbach2018diversity}).  These work provide an approach of effectively generating desirable behaviors through compositions of skills rather than directly optimizing for tasks, but are still within cumulative (single-step) cost framework.  
Historically, the motor primitives investigated in, for example, \citep{peters2008reinforcement, ijspeert2002learning, stulp2013robot} have considered parametric nonlinear dynamics having some desirable properties such as stability, convergence to certain attractor etc., and it is related to the Koopman spectrum regularization in the sense both aim at regulating the global dynamical properties.
Those primitives may be discovered by clustering (cf. \citet{stulp2014simultaneous}), learned by imitation learning (cf. \citet{kober2010imitation}), and coupled with meta-parameter learning (e.g. \citet{kober2012reinforcement}).

Finally, as related to the sample efficient algorithm we propose, provably correct methods (e.g. \citet{jiang2017contextual,sun2019model}) have recently been developed for continuous control problems~\citep{kakade2020information, mania2020active, simchowitz2020naive, curi2020efficient}.

Below, we present our control framework, {\fwname}, in Section \ref{sec:main} with several illustrative numerical examples based on population based policy search (e.g. genetic algorithm), followed by an introduction of its example online learning algorithm (Section \ref{sec:learn}) with its theoretical insights on sample complexity and on reduction of the model to that of eigenstructure assignments problem as a special case.  For more details about population based search that repeatedly evaluates the sampled actions to update the sampling distribution of action sequence so that the agent can achieve lower cost, see for example \citep{beheshti2013review}.
\section{\fwfullname}
\label{sec:main}
In this section, we present the dynamical system model and our main framework.
\subsection{Dynamical system model}
Let $\mathcal{X}\subset\R^{d_\mathcal{X}}$ be the state space, and $\Pi$ a set of parameters each of which corresponds to one random dynamical system (RDS) as described below.
Given a parameter $\Theta\in\Pi$, let $(\Omega_\Theta,P_\Theta)$ be a probability space, where $\Omega_\Theta$ is a measurable space and $P_\Theta$ is a probability measure.  Let $\mu_\Theta:=(\mu_\Theta(r))_{r\in\N}$ be a semi-group of measure preserving measurable maps on $\Omega_\Theta$ (i.e., $\mu_\Theta(r):\Omega_\Theta\rightarrow\Omega_\Theta$).
This work studies the following nonlinear regulator (control) problem: for each parameter $\Theta\in\Pi$, the corresponding nonlinear random dynamical system is given by
\begin{align}
\mathcal{F}^\Theta:\N\times\Omega_\Theta\times\mathcal{X}\rightarrow\mathcal{X},\nonumber
\end{align}
that satisfies
\begin{align}
\mathcal{F}^\Theta(0,\omega,x)=x,~~\mathcal{F}^\Theta(r+s,\omega,x)=&\mathcal{F}^\Theta(r,\mu_\Theta(s)\omega,\mathcal{F}^\Theta(s,\omega,x)),\ \ \forall r,s\in\N,~\omega\in\Omega_\Theta,~x\in\mathcal{X}. \label{nondysys}
\end{align}  
The above definition of random dynamical system is standard in the community studying dynamical systems (refer to ~\citep{arnold1995random}, for example).
Roughly speaking, an RDS consists of the following two models:
\begin{itemize}
	\item A model of the {\em noise};
	\item A function representing the {\em physical} dynamics of the system.
\end{itemize}
RDSs subsume many practical systems including solutions to stochastic differential equations and additive-noise systems, i.e., 
\begin{align*}
	x_{h+1}=f(x_h)+\eta_h,~~x_0\in\R^{d},~~h\in[H],\nonumber
\end{align*}
where $f:\R^{d_\mathcal{X}}\to\R^{d_\mathcal{X}}$ represents the dynamics, and $\eta_h\in\R^{d_\mathcal{X}}$ is the zero-mean i.i.d. additive noise vector.  Let $\Omega_0$ be a probability space of the noise, and one could consider $\Omega:=\Omega_0^{\N}$ (and $\mu$ is the shift) to express the system as an RDS.
Also, Markov chains can be described as an RDS by regarding them as a composition of i.i.d. random transition and by following similar arguments to the above (see \citet[Theorem~2.1.6]{arnold1995random}).  As an example, consider the discrete states $\{s_1,s_2\}$ and the transition matrix
\begin{align*}
	\textsc{Transition Matrix}:=\left[
	\begin{array}{cc}
		0.8 & 0.2\\
		0.1 & 0.9
	\end{array}
	\right]=0.7\left[
	\begin{array}{cc}
		1 & 0\\
		0 & 1
	\end{array}
	\right]+0.2\left[
	\begin{array}{cc}
		0 & 1\\
		0 & 1
	\end{array}
	\right]+0.1\left[
	\begin{array}{cc}
		1 & 0\\
		1 & 0
	\end{array}
	\right],
\end{align*}
where the right hand side shows a decomposition of the transition matrix to deterministic transitions.  For this example, one can naturely treat this Markov chain as an RDS that generates each deterministic transition with corresponding probability at every step.

More intuitively, dynamical systems with an {\it invariant noise-generating mechanism} could be described as an RDS by an appropriate translation (see Figure \ref{fig:illust_dynamical_system} (Left) for an illustration of an RDS).
\begin{figure}[t]
	\begin{center}
		\includegraphics[clip,width=\textwidth]{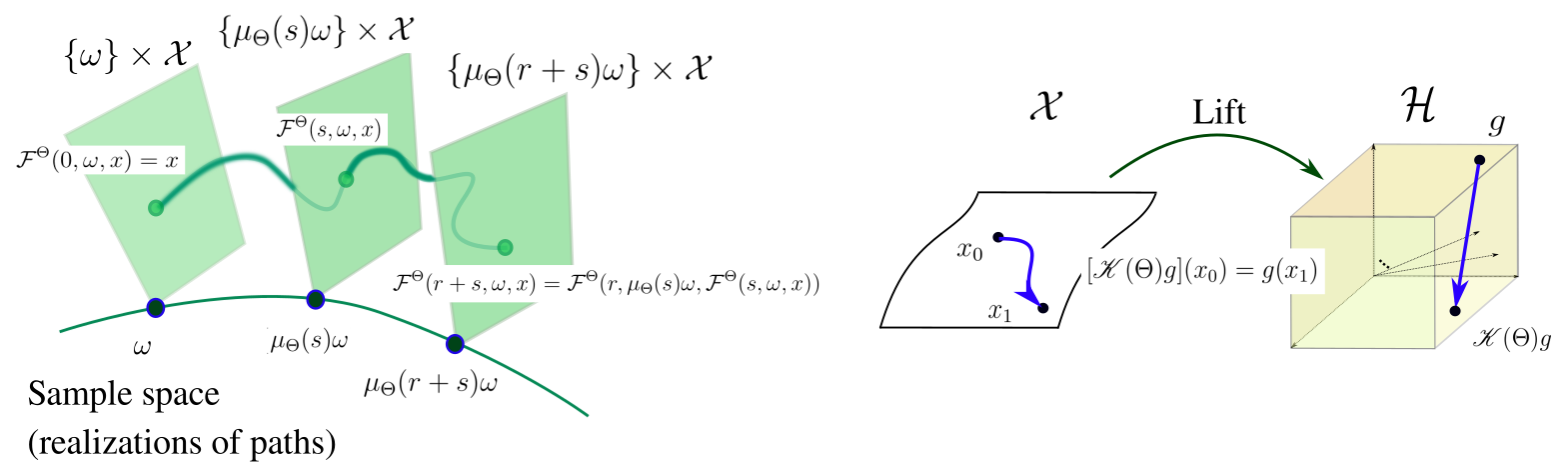}
	\end{center}
	\caption{Left: Random dynamical system consists of a model of the {\em noise} and a function representing the {\em physical} phase space (the illustration is inspired by \citet{arnold1995random, ghil2008climate}).  The RDS flows over sample space and phase space for each realization $\omega$ and for initial state $x_0$.  Right: By lifting the state space to a space of observables, a nonlinear dynamical system over the state space is represented by the linear operator in a lifted space.} 
	\label{fig:illust_dynamical_system}
\end{figure}
\paragraph{Koopman operator}
For the random dynamical systems being defined above, we define the operator-valued map $\mathscr{K}$ below, using the dynamical system model.
\begin{definition}[Koopman operator]
	\label{def:koopman}
	Let $\mathcal{H}$ be a function space on $\mathcal{X}$ over $\C$ and let $\{\mathcal{F}^\Theta\}_{\Theta \in \Pi}$ be a dynamical system model.
	We define an operator-valued map $\mathscr{K}$ by $\mathscr{K}: \Pi \rightarrow \mathcal{L}(\mathcal{H}, \mathcal{H})$ such that for any $\Theta \in \Pi$ and $g \in \mathcal{H}$, 
	\begin{align}
	[\mathscr{K}(\Theta) g](x) := \Exp_{\Omega_\Theta}{\left[g\circ \mathcal{F}^\Theta(1,\omega,x)\right]},~~~x\in\mathcal{X}.\nonumber
	\end{align}
\end{definition}
We will choose a suitable $\mathcal{H}$ to define the map $\mathscr{K}$, and $\mathscr{K}(\Theta)$ is the Koopman operator for $\mathcal{F}^\Theta$. 
Essentially, the Koopman operator represents a nonlinear dynamics as a linear (infinite-dimensional) operator that describes the evolution of {\em observables} in a lifted space (see Figure \ref{fig:illust_dynamical_system} Right). 
\begin{remark}[Choice of $\mathcal{H}$ and existence of $\mathscr{K}$]
	\label{rem:choiceofH}
	In an extreme (but useless) case, one could choose $\mathcal{H}$ to be a one dimensional space spanned by a constant function, and $\mathscr{K}$ can be defined.  In general, the properties of the Koopman operator depend on the choice of the space on which the operator is defined.  As more practical cases, if one employs a Gaussian RKHS for example, the only dynamics inducing bounded Koopman operators are those of affine (e.g. \citet{ishikawa2023bounded, ikeda2022koopman}).  However, some practical algorithms have recently been proposed for an RKHS to approximate the eigenvalues of so-called ``extended'' Koopman operator through some appropriate computations under certain conditions on the dynamics and on the RKHS (cf. \citet{ishikawa2024koopman}).
\end{remark}

With these settings in place, we propose our framework.
\subsection{\fwfullname}
\label{subsec:fwname}
Fix a set $X_0:=\{(x_{0,0},H_{0}),(x_{0,1},H_{1}),\ldots,(x_{0,N-1},H_{N-1})\}\subset\mathcal{X}\times\Z_{>0}$, for $N\in\Z_{>0}$, and define $c:\mathcal{X}\rightarrow\R_{\geq0}$ be a cost function.
The {\fwfullname} (\fwname), which we propose in this paper, is the following optimization problem:
\begin{align}
{\rm Find}~~\Theta^\star\in\argmin_{\Theta\in\Pi}\left\{\Lambda[\mathscr{K}(\Theta)]+J^{\Theta}(X_0;c)\right\}, \label{goal1}
\end{align}
where $\Lambda:\mathcal{L}(\mathcal{H};\mathcal{H})\rightarrow\R_{\geq 0}$
is a mapping that takes a Koopman operator as an input and returns its cost; and
\begin{align}
J^{\Theta}(X_0;c) :=  \sum_{n=0}^{N-1}\Exp_{\Omega_\Theta} \left[ \sum_{h=0}^{H_n-1} c(x_{h,n})  \Big|  \Theta, x_{0,n} \right],\nonumber
\end{align}
where $x_{h,n}(\omega) := \mathcal{F}^\Theta(h,\omega, x_{0,n})$.
In control problem setting, the problem \eqref{goal1} can be read as finding a {\em control policy} $\Theta^*$, which minimizes the cost; note, each control policy $\Theta$ generates a dynamical system that gives the costs $\Lambda[\mathscr{K}(\Theta)]$ and $J^{\Theta}(X_0;c)$ in this case.  However, we mention that the parameter $\Theta$ can be the physics parameters used to design the robot body for automated fabrication or any parameter that uniquely determines the dynamics. 
\begin{example}[Examples of $\Lambda$]
	\label{examplecost}
	Some of the examples of $\Lambda$ are:
	\begin{enumerate}
		\item
		$
		\Lambda[\mathscr{A}]=\max{\left\{1,\rho\left(\mathscr{A}\right)\right\}}
		$, where $\rho(\mathscr{A})$ is the spectral radius of $\mathscr{A}$, prefers stable dynamics.
		\item
		$
		\Lambda[\mathscr{A}]=\ell_{\mathscr{A}^\star}(\mathscr{A})$ can be used for imitation learning, where $\ell_{\mathscr{A}^\star}(\mathscr{A}):\mathcal{L}(\mathcal{H};\mathcal{H})\rightarrow \R_{\geq0}
		$ is a loss function measuring the gap between $\mathscr{A}$ and the given $\mathscr{A}^\star\in\mathcal{L}(\mathcal{H};\mathcal{H})$.
		\item $
		\Lambda[\mathscr{A}]=\sum_i\left|\lambda_i\left(\mathscr{A}\right)\right|
		$, prefers agent behaviors described by fewer dominating modes.  Here, $\{\lambda_i\}_{i\in\Z_{>0}}$ is the set of eigenvalues of the operator $\mathscr{A}$ (assuming that the operator has discrete spectrum).
	\end{enumerate}
	Assuming that the Koopman operator is defined over a finite-dimensional space, an eigenvalue of the operator corresponding to each eigenfunction can be given by that of the matrix realization of the operator.  In practice, one employs a finite-dimensional space even if it is not invariant under the Koopman operator; in such cases, there are several analyses that have recently been made for providing estimates of the spectra of the Koopman operator through computations over such a finite-dimensional space.  Interested readers may be referred to \citep{ishikawa2024koopman, colbrook2024rigorous, colbrook2023residual} for example.  In particular, avoiding {\em spectral pollution}, which refers to the phenomena where discretizations of an infinite-dimensional operator to a finite matrix create spurious eigenvalues, and approximating the continuous spectra have been actively studied with some theoretical convergence guarantees for the approximated spectra.  In order to obtain an estimate of the spectral radius through computations on a matrix of finite rank, the Koopman operator may need to be compact (see the very recent work \citet{akindji2024convergence} for example).
\end{example}
\begin{remark}[Remarks on how the choice of $\mathcal{H}$ affects the Koopman spectrum cost]
	\label{rem:ksc}
	As we have discussed in Remark \ref{rem:choiceofH}, the mathematical properties of the Koopman operator depend on the function space $\mathcal{H}$, and information of the background dynamics implied by this operator depends on the choice of $\mathcal{H}$; however their spectral properties typically capture global characterstics of the dynamics through its linearity.
	
	We mention that the Koopman operator over $\mathcal{H}$ may {\em fully represents} the dynamics in the sense that it can reproduce the dynamical system over the state space (i.e., the original space) under certain conditions (see \citet{ishikawa2024koopman} for example on detailed discussions);
	however, depending on the choice of $\mathcal{H}$, it is often the case that the Koopman operator does not uniquely reproduce the dynamics.  The extreme case of such an example is the function space of single dimension spanned by a constant function, for which the Koopman operator does not carry any information about the dynamics.  Even for the cases where the Koopman operator over the chosen space $\mathcal{H}$ only captures {\em partial information} on the dynamics, the spectrum cost is still expected to act as a regularizer of such partial spectral properties of the dynamics.	
\end{remark}
\begin{figure}[t]
	\begin{center}
	\includegraphics[clip,width=0.95\textwidth]{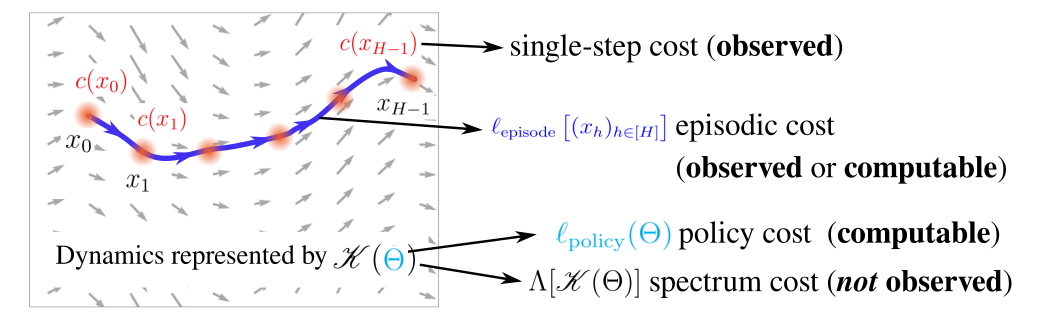}
	\end{center}
	\caption{Comparisons of several costs for decision making problems.  The Koopman spectrum cost is the cost over the global properties of the dynamical system itself which is typically unknown for learning problems, and is unobservable.} 
	\label{fig:cost_comparison}
\end{figure}
As also mentioned in Remark \ref{rem:ksc}, the Koopman spectrum cost regularizes certain global properties of the generated dynamical system; as such, it is advantageous to employ this cost $\Lambda$ over sums of single-step costs $c(x)$ that are the objective in MDPs especially when encoding stability of the system for example.
To illustrate this perspective more, we consider generating a desirable dynamical system (or trajectory) as a solution to some optimization problem.

\begin{figure}[t]
	\begin{center}
		\includegraphics[clip,width=\textwidth]{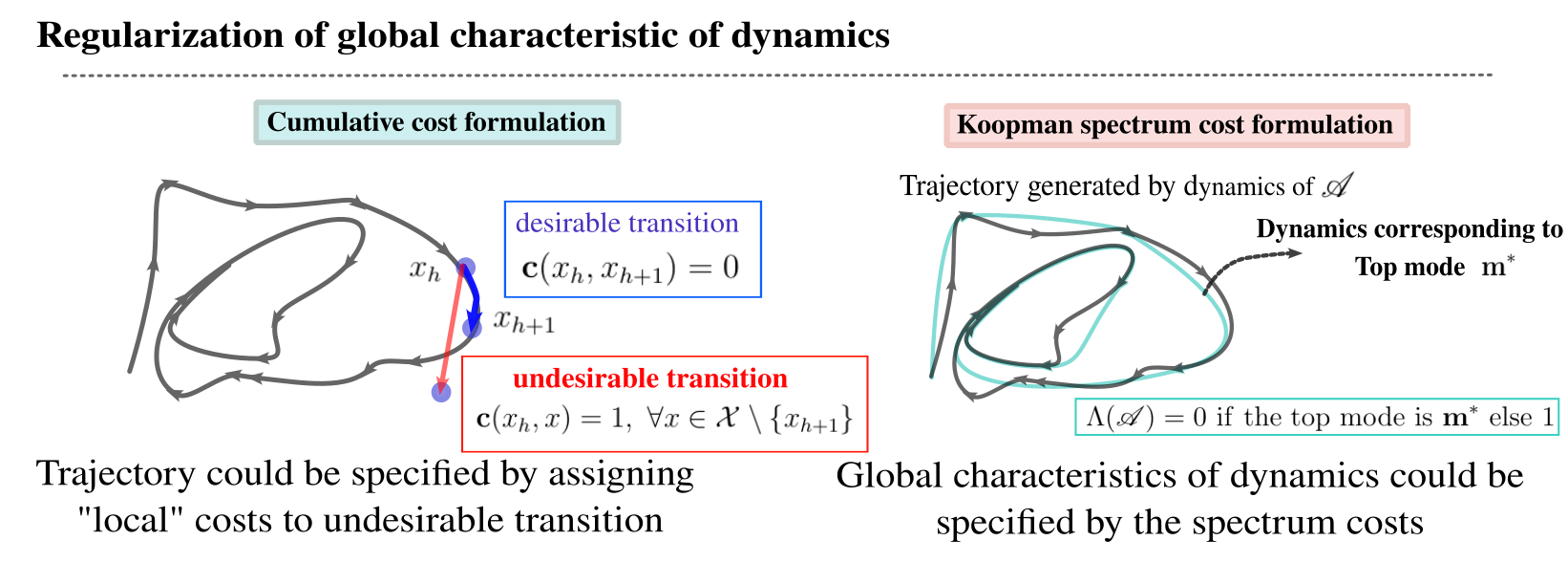}
	\end{center}
	\caption{While single-step costs (taking the current and next states as input) could be used to specify every transition, acting as a ``local'' cost, the Koopman spectrum cost regularizes ``global'' characteristics of the dynamics through specifying its spectral properties (e.g., by forcing the dynamics to have some given mode $\mathbf{m}^*$ as its top mode).  The regularization incurred by the Koopman spectrum cost may not be implemented by the cumulative cost formulation in a straightforward manner.  We mention it has some relations to the skill learning with motor primitives (see Section \ref{sec:relatedwork}) in the sense that both aim at regulating the global dynamical properties.} 
	\label{fig:analogy_to_fourier}
\end{figure}
Let $\mathcal{X} = \R$, $\upsilon\in(0,1]$, and let $c:\mathcal{X}\rightarrow\R_{\geq0}$ be nonconstant cost function.
	We consider the following loss function for a random dynamical system $\mathcal{F}: \N \times \Omega \times \mathcal{X} \rightarrow \mathcal{X}$:  
	\begin{align}
	\ell(\mathcal{F},x):=\Exp_{\Omega}\sum_{h=0}^\infty \upsilon^{h}c\left(\mathcal{F}(h,\omega,x_0)\right). \nonumber
	\end{align}
	Now consider the dynamical system $\mathcal{F}(1,\omega,x)=-x,~\forall x\in\mathcal{X},~\forall \omega\in\Omega$, for example.  Then, it holds that, for any choice of $\upsilon$ and $c$, there exists another random dynamical system $\mathcal{G}$ satisfying that for all $x \in \mathcal{X}$,
	\[ \ell(\mathcal{G},x) < \ell(\mathcal{F},x).  \]
This fact indicates that there exists a dynamical system that cannot be described by a solution to the above optimization.  Note, however, that given any deterministic map $f^\star:\mathcal{X}\rightarrow\mathcal{X}$, if one employs a (different form of) cost $\mathbf{c}:\mathcal{X}\times\mathcal{X}\rightarrow\R_{\geq 0}$, where $\mathbf{c}(x,y)$ evaluates to $0$ only if 
$y=f^\star(x)$ and otherwise evaluates to $1$,
it is straightforward to see that the dynamics $f^\star$ is the one that simultaneously optimizes $\mathbf{c}(x,y)$ for any $x\in\mathcal{X}$.
In other words, this form of single-step cost uniquely identifies the (deterministic) dynamics by defining its evaluation at each state (see Figure \ref{fig:analogy_to_fourier} (Left)).

In contrast, when one wishes to constrain or regularize the dynamics globally in the (spatio-temporal) spectrum domain, to obtain the set of stable dynamics for example, single-step costs become powerless.
Intuitively, while cumulative cost can effectively determine or regularize one-step transitions towards certain state, the Koopman spectrum cost can characterize the dynamics globally (see Figure \ref{fig:analogy_to_fourier} (Right)).  Refer to Appendix \ref{app:formalarg} for more formal arguments.

Although aforementioned facts are simple, they give us some insights on the limitation of the use of (cumulative) single-step cost for characterizing dynamical systems, and imply that enforcing certain constraints on the dynamics requires the Koopman spectrum perspective.  This is similar to the problems treated in the Fourier analysis; where the global (spectral) characteristics of the sequential data are better captured in the frequency domain.

Lastly, we depicted how the spectrum cost differs from other types of costs used in decision making problems in Figure \ref{fig:cost_comparison}.  The figure illustrates that the spectrum cost is not incurred on a single-step or on one trajectory (episode), but is over a (part of) the global properties of the dynamical system model (see also Remark \ref{rem:ksc}).  The dynamics typically corresponds to a policy, but a policy regularization cost used in, for example \citep{haarnoja2018soft}, is computable when given the current policy while the spectrum cost is unobservable if the dynamics is unknown (and hence is not directly computable).

\subsection{Simulated experiments}
\label{subsec:simexp}
We illustrate how one can use the costs in Example \ref{examplecost}.
See Appendix \ref{sec:appsim} for detailed descriptions and results of the experiments.
Throughout, we used Julia language~\citep{bezanson2017julia} based robotics control package, Lyceum~\citep{summers2020lyceum}, for simulations and visualizations.  Also, we use Cross-Entropy Method (CEM) based policy search (\citet{kobilarov2012cross}; one of the population based policy search techniques) to optimize the policy parameter $\Theta$ to minimize the cost in \eqref{goal1}.
Specifically, at each iteration of CEM, we generate many parameters ($\Theta$s) to compute the loss (i.e., the sum of the Koopman spectrum cost and {\it negative} cumulative reward).  This is achieved by fitting the transition data to the chosen feature space to estimate its (finite-dimensional realization of) Koopman operator (see Appendix \ref{app:cem}); here the data are generated by the simulator which we assume to have access to.
\begin{figure}[t]
	\begin{center}
		\hspace{-2em}
		\includegraphics[clip,width=0.9\textwidth]{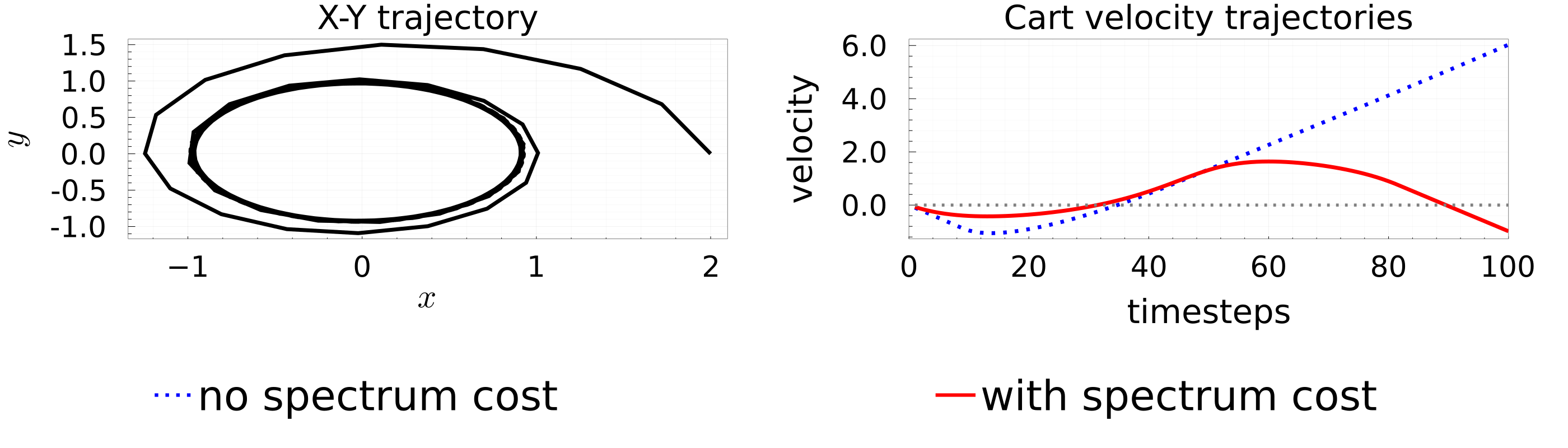}
	\end{center}
	\caption{Left: We minimize solely for Koopman spectrum cost $\Lambda(\mathscr{A})=\|\mathbf{m}-\mathbf{m}^\star\|_1$ to imitate the top mode of a reference spectrum to recover a desired limit cycle behavior for the single-integrator system. Right: By regularizing the spectral radius of Cartpole with a cumulative cost that favors high velocity, the cartpole performs a stable oscillation rather than moving off to infinity.} 
	\label{fig:singleintcartpole}
\end{figure}
\begin{figure}[t]
	\begin{center}
		\hspace{-2em}
		\includegraphics[clip,width=0.95\textwidth]{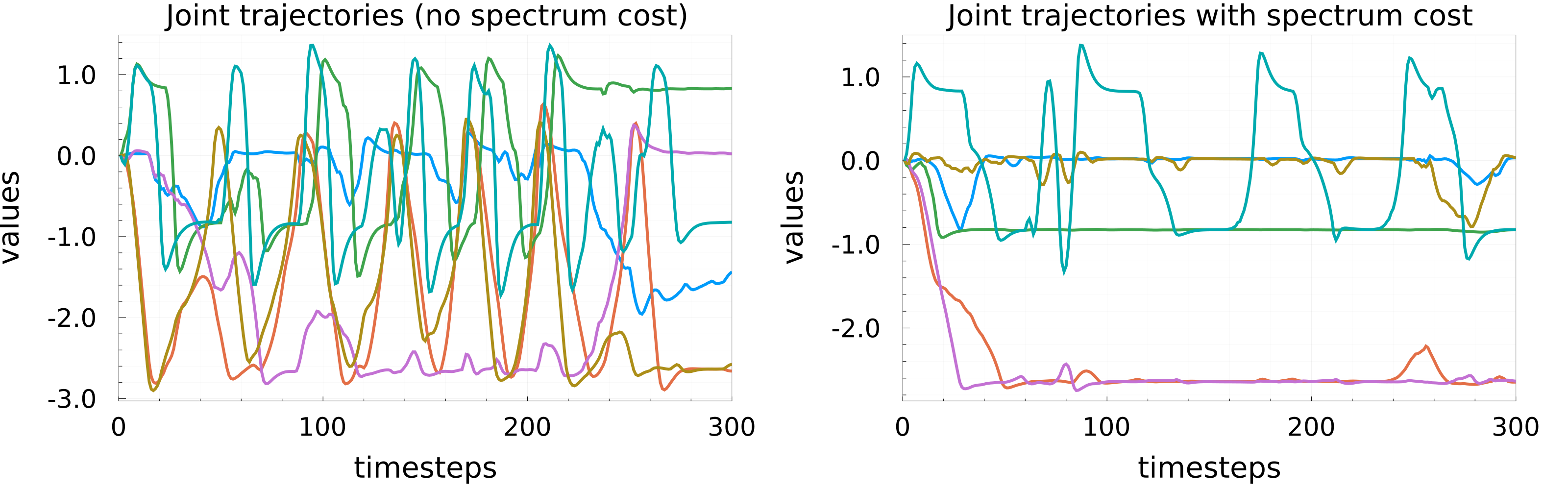}
	\end{center}
	\caption{The joint angle trajectories generated by a combination of linear and RFF policies.  Left: when only cumulative reward is maximized.  Right: when both the cumulative cost and the spectrum cost $\Lambda(\mathscr{A})=5\sum_{i=1}^{d_\phi}|\lambda_i(\mathscr{A})|$ are used, where the factor $5$ is multiplied to balance between the spectrum cost and the cumulative cost.} 
	\label{fig:walker}
\end{figure}
\paragraph{Imitating target behaviors through the Koopman operators} We consider the limit cycle dynamics
\begin{align}
\dot{r}=r(1-r^2),~\dot{\theta}=1,\nonumber
\end{align}
described by the polar coordinates, and find the Koopman operator for this dynamics by sampling transitions, assuming $\mathcal{H}$ is the span of Random Fourier Features (RFFs)~\citep{rahimi2007random}.
We illustrate how {\fwname} is used to imitate the dynamics; in particular, by imitating the {\em Koopman modes}, we expect that some physically meaningful dynamical behaviors of the target system can be effectively reconstructed.

To define Koopman modes, suppose that the Koopman operator $\mathscr{A}^\star$ induced by the target dynamics has eigenvalues $\lambda_i\in\C$ and eigenfunctions $\xi_i:\mathcal{X}\rightarrow\C$ for $i\in\{1,2,\ldots,d_\phi\}$, i.e.,
\begin{align}
	\mathscr{A}^\star\xi_i = \lambda_i\xi_i.\nonumber
\end{align}
If the set of observables $\phi_i$s satisfies
\begin{align}
	\boldsymbol{\phi}_{x}=\sum_{i=1}^{d_\phi}\xi_i(x)\mathbf{m}^\star_i,\nonumber
\end{align}
for $\mathbf{m}^\star_i\in\C^{d_\phi}$, where $\boldsymbol{\phi}_{x}:=[\phi_1(x),\phi_2(x),\ldots,\phi_{d_\phi}(x)]^\top\in\R^{d_\phi}$, then $\mathbf{m}^\star_i$s are called the Koopman modes.
The Koopman modes are closely related to the concept {\it isostable}; interested readers are referred to~\citep{mauroy2013isostables} for example.

In this simulated example, the target system is being imitated by forcing the generated dynamics to have (approximately) the same top mode (i.e., the Koopman modes corresponding to the largest absolute eigenvalue) that dominates the behavior.
To this end, with $\Pi$ a space of RFF policies that define $\dot{r}$ and $\dot{\theta}$ as a single-integrator model, we solve {\fwname} for the spectrum cost $\Lambda(\mathscr{A})=\|\mathbf{m}-\mathbf{m}^\star\|_1$, where $\mathbf{m}\in\C^{d_\phi}$ and $\mathbf{m}^\star$ are the top modes of the Koopman operator induced by the generated dynamics and of the target Koopman operator found previously, respectively.

Figure \ref{fig:singleintcartpole} (Left) plots the trajectories (of the Cartesian coordinates) generated by RFF policies that minimize this cost; it is observed that the agent successfully converged to the desired limit cycle of radius one by imitating the dominant mode of the target spectrum. 
\paragraph{Generating stable loops (Cartpole)} We consider Cartpole environment (where the rail length is extended from the original model).  The single-step reward (negative cost) is $10^{-3}|v|$ where $v$ is the velocity of the cart, plus the penalty $-1$ when the pole falls down (i.e., directing downward).
This single-step reward encourages the cartpole to maximize its velocity while preferring not to let the pole fall down.
The additional spectrum cost considered in this experiment is
$\Lambda(\mathscr{A})=10^4\max(1,\rho(\mathscr{A}))$, which highly penalizes spectral radius larger than one; it therefore regularizes the dynamics to be stable, preventing the velocity from ever increasing its magnitude.

Figure \ref{fig:singleintcartpole} (Right) plots the cart velocity trajectories generated by RFF policies that (approximately) solve {\fwname} with/without the spectrum cost.
It is observed that spectral regularization led to a back-and-forth motion while the non-regularized policy preferred accelerating to one direction to solely maximize velocity.  When the spectrum cost was used, the cumulative rewards were $0.072$ and the spectral radius was $0.990$, while they were $0.212$ and $1.003$ when the spectrum cost was not used; limiting the spectral radius prevents the ever increasing change in position. 

We mention that this experiment is not intended to force the dynamics to have this oscillation, but rather to let the dynamics be stable in the sense that the Koopman operator over a chosen space has spectral radius that is less than or equal to one (see Figure \ref{fig:analogy_to_fourier} to review the difference of concepts and roles of cumulative cost and Koopman spectrum cost).  See more experimental results in Appendix \ref{sec:furtherexp}.\begin{figure}[h]
	\begin{center}
		\includegraphics[clip,width=0.8\textwidth]{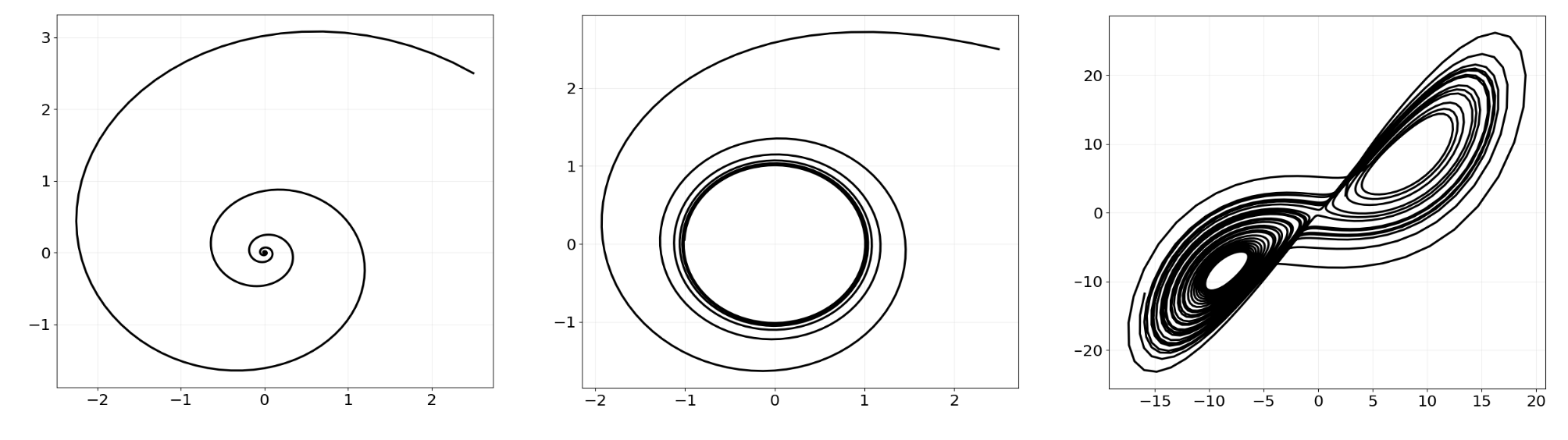}
	\end{center}
	\caption{Examples of attractor dynamics that are stable; from the left, they are a fixed-point attractor, limit cycle attractor, and strange attractor.  Those should be included in the set of stable dynamics.} 
	\label{fig:attractor}
\end{figure}
\begin{remark}[Remarks on the stability regularization]
	As mentioned in Section \ref{sec:relatedwork}, this stability regularization is not meant to learn the stable Koopman operator but to find an RDS that balances the cumulative cost and the spectrum cost that enforces stability; in other words, the task is to find a dynamical system that minimizes the cumulative cost under the hard stability constraint encoded by the spectrum cost.  Here, we emphasize that the set of stable dynamics should include a variety of dynamics ranging from a fixed-point attractor to a strange attractor as portrayed in Figure \ref{fig:attractor}.  On the other hand, constraining dynamics by a single control Lyapunov function (cf. \citet{freeman1996control}) will form a strictly smaller set of stable dynamics (see the discussions in Section \ref{subsec:fwname} as well).  Also, in this simulation example, the stability does not necessarily mean the behaviors converging to the zero state as long as the velocity does not increase indefinitely, and the preference of keeping the pole straight up is encoded by the cumulative cost term instead.
\end{remark}
\paragraph{Generating smooth movements (Walker)} We use the Walker2d environment and compare movements with/without the spectrum cost.  The single-step reward (negative cost) is given by $v-0.001\|a\|^2_{\R^{6}}$, where $v$ is the velocity and $a$ is the action vector of dimension $6$. 
The spectrum cost is given by
$\Lambda(\mathscr{A})=5\sum_{i=1}^{d_\phi}|\lambda_i(\mathscr{A})|$, where the factor $5$ is multiplied to balance between the spectrum and cumulative cost.
The single-step reward encourages the walker to move to the right as fast as possible with a small penalty incurred on the action input, and the spectrum cost regularizes the dynamics to have fewer dominating modes, which intuitively leads to {\em smoother} motions.

Figure \ref{fig:walker} plots typical joint angles along a trajectory generated by a combination of linear and RFF policies that (approximately) solves {\fwname} with/without the spectrum cost.  It is observed that the spectrum cost led to simpler (i.e., some joint positions converge) and smoother dynamics while doing the task sufficiently well. With the spectrum cost, the cumulative rewards and the spectrum costs averaged across four random seeds (plus-minus standard deviation) were $584\pm 112$ and $196\pm 8.13$. Without the spectrum cost, they were $698\pm 231$ and $310\pm 38.6$.
We observe that, as expected, the spectrum cost is lower for {\fwname} while the classical cumulative reward is higher for the behavior generated by the optimization without spectrum cost.
Again, we emphasize that we are {\em not} competing against the MDP counterparts in terms of the cumulative reward, but rather showing an effect of additional spectrum cost.
Please also refer to Appendix \ref{sec:appsim} and \ref{sec:furtherexp} for detailed results.

\section{Theoretical algorithm of online {\fwname} and its insights on the complexity}
\label{sec:learn}
In this section, we present a (theoretical) learning algorithm for online {\fwname}.
Although the {\fwname} itself is a general regulator framework, we need some structural assumptions to the problem in order to devise a sample efficient (if not computation efficient) algorithm.  Nevertheless, despite the unobservability of the spectrum cost, {\fwname} admits a sample efficient model-based algorithm through operator theoretic arguments under those assumptions.  Here, ``model-based'' simply means we are not directly learning the spectrum cost itself but the Koopman operator model.  We believe the algorithm and its theoretical analysis clarify the intrinsic difficulty of considering the spectrum cost, and pave the way towards future research.

The learning goal is to find a parameter $\Theta^{\star t}$ that satisfies (\refeq{goal1}) at each episode $t\in[T]$.
We employ episodic learning, and let $\Theta^t$ be a parameter employed at the $t$-th episode.
Adversary chooses $X_0^t:=\{(x^t_{0,0},H^t_{0}),(x^t_{0,1},H^t_{1}),\ldots,(x^t_{0,N^t-1},H^t_{N^t-1})\}\subset\mathcal{X}\times\Z_{>0}$, where $N^t\in\Z_{>0}$,
and the cost function $c^t$ at the beginning of each episode.
Let $c_{h,n}^t:=c^t(x^t_{h,n})$.
$\omega\in\Omega_{\Theta^t}$ is chosen according to $P_{\Theta^t}$.
\paragraph{Algorithm evaluation.}
In this work, we employ the following performance metric, namely, the cumulative regret:
\begin{align}
\textsc{Regret}_T := 
\sum_{t=0} ^{T-1}  \left(\Lambda[\mathscr{K}(\Theta^t)]+\sum_{n=0}^{N^t-1}\sum_{h=0}^{H^t_n-1} c_{h,n}^t\right) - \sum_{t=0} ^{T-1} \min_{\Theta\in\Pi}\left(\Lambda[\mathscr{K}(\Theta)]+J^{\Theta}(X_0^t;c^t)\right).\nonumber
\end{align}
Note the minimum on the second term on the right hand side is taken at every episode.
Below, we present model assumptions and an algorithm with a regret bound.  
\subsection{Models and algorithms}
We make the following modeling assumptions.
\begin{asm}
	\label{asm1}
	Let $\mathscr{K}(\Theta)$ be the Koopman operator corresponding to a parameter $\Theta$.
	Then, assume that
	there exists a finite-dimensional subspace $\mathcal{H}_0$ on $\mathcal{X}$ over $\R$ and its basis $\phi_1, \dots, \phi_{d_\phi}$ such that
	the random dynamical system (\refeq{nondysys}) satisfies the following:
	\begin{align}
	&\forall \Theta\in\Pi,~\forall x\in\mathcal{X}:~\phi_i(\mathcal{F}^\Theta(1,\omega,x))=[\mathscr{K}(\Theta)\phi_i](x)+\epsilon_i(\omega),\nonumber
	\end{align}
	where the additive noise $\epsilon_i(\omega)\sim\mathcal{N}(0,\sigma^2)$ is assumed to be independent across timesteps, parameters $\Theta$, and indices $i$. 
\end{asm}
\begin{remark}[On Assumption \ref{asm1}]
	\label{rem:asm1}
	Although the added noise term is expected to deal with stochasticity and misspecification to some extent in practice, Assumption \ref{asm1} is strong to ask for; in fact, claiming existence of the Koopman operator over a useful RKHS (e.g., with Gaussian kernel) is not trivial for most of the practical problems.  Studying {\em misspecified case} with small error margin is an important future direction of research; however, as our regulator problem is novel, we believe this work guides the future attempts of further algorithmic research.
\end{remark}
\begin{asm}[Function-valued RKHS (see Appendix \ref{appendix_functionvaluedRKHS} for details)]
	\label{assump:RKHS}
	$\mathscr{K}(\cdot)\phi$ for $\phi\in\mathcal{H}$ is assumed to live in a known function valued RKHS with the operator-valued reproducing kernel $K(\cdot,\cdot):\Pi\times\Pi\rightarrow\mathcal{L}(\mathcal{H};\mathcal{H})$, 
	or equivalently, there exists a known map $\Psi:\Pi\rightarrow\mathcal{L}(\mathcal{H};\mathcal{H}')$ for a specific Hilbert space $\mathcal{H}'$ satisfying for any $\phi\in\mathcal{H}$, there exists $\psi \in \mathcal{H}'$ such that 
	\begin{align}
	\mathscr{K}(\cdot )\phi=\Psi(\cdot )^\dagger\psi. \label{rksheq}
	\end{align}
\end{asm}
\begin{remark}[On Assumption \ref{assump:RKHS}]
	\label{rem:assump:RKHS}
	The assumption intuitively states that the {\em structure} on how the closeness of parameters $\Theta$s relates to that of the resulting Koopman operators is known.  One can always consider an expressive function-valued RKHS in practice, but it may lead to increased (effective) dimensions that will require more samples to learn.
\end{remark}
When Assumption \ref{assump:RKHS} holds, we obtain the following claim which is critical for our learning framework.
\begin{lemma}
	\label{veclemma}
	Suppose Assumption \ref{assump:RKHS} holds.  Then, there exists a linear operator $M^\star:\mathcal{H}\rightarrow\mathcal{H}'$ such that
	\begin{align}
	\mathscr{K}(\Theta)=\Psi(\Theta)^\dagger\circ M^\star.\nonumber
	\end{align}
\end{lemma}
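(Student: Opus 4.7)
My approach is a direct construction: Assumption \ref{assump:RKHS} already provides, for each input $\phi\in\mathcal{H}$, an element $\psi\in\mathcal{H}'$ with $\mathscr{K}(\cdot)\phi=\Psi(\cdot)^\dagger\psi$. The lemma is really asking me to promote this pointwise selection to a single linear map $M^\star$ that acts on all of $\mathcal{H}$ at once. Since Assumption \ref{asm1} gives a finite basis $\phi_1,\dots,\phi_{d_\phi}$ of $\mathcal{H}_0$, I can make the selection on the basis and extend linearly, which completely sidesteps any subtlety about simultaneous/choice-free selection across an infinite-dimensional domain.

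\textbf{Step 1 (selection on a basis).} Apply Assumption \ref{assump:RKHS} once for each basis element $\phi_i$ to obtain $\psi_i\in\mathcal{H}'$ with
\[
\mathscr{K}(\Theta)\phi_i \;=\; \Psi(\Theta)^\dagger \psi_i \qquad \text{for every }\Theta\in\Pi.
\]
\textbf{Step 2 (define $M^\star$).} Define $M^\star:\mathcal{H}\to\mathcal{H}'$ as the unique linear operator satisfying $M^\star\phi_i:=\psi_i$ for $i=1,\dots,d_\phi$, extended by linearity to the whole (finite-dimensional) space $\mathcal{H}$.
\textbf{Step 3 (verify the factorization).} For any $\Theta\in\Pi$ and any $\phi=\sum_i a_i\phi_i\in\mathcal{H}$, linearity of $\mathscr{K}(\Theta)$, of $\Psi(\Theta)^\dagger$, and of $M^\star$ give
\[
\mathscr{K}(\Theta)\phi \;=\; \sum_i a_i\,\mathscr{K}(\Theta)\phi_i \;=\; \sum_i a_i\,\Psi(\Theta)^\dagger \psi_i \;=\; \Psi(\Theta)^\dagger\Bigl(\sum_i a_i\,\psi_i\Bigr) \;=\; \Psi(\Theta)^\dagger M^\star \phi,
\]
which is the claimed identity $\mathscr{K}(\Theta)=\Psi(\Theta)^\dagger\circ M^\star$.

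\textbf{Expected difficulty.} There is no serious obstacle here; the content of the lemma is essentially a repackaging of Assumption \ref{assump:RKHS}. The only point that deserves care is that Assumption \ref{assump:RKHS} asserts existence of $\psi$ for each $\phi$ separately, not a coherent linear dependence $\phi\mapsto\psi$. In full generality this would require either a uniqueness/closability argument on $\Psi(\cdot)^\dagger$ or an appeal to the axiom of choice; however, because Assumption \ref{asm1} pins $\mathcal{H}$ down to a finite-dimensional space spanned by $\phi_1,\dots,\phi_{d_\phi}$, choosing $\psi_i$ on the basis and extending linearly is unambiguous and trivially gives a linear $M^\star$. Boundedness of $M^\star$ is automatic in finite dimensions, so no further analytic work is needed.
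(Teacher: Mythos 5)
Your construction is correct and the verification in Step 3 is sound, but it takes a genuinely different route from the paper and proves a slightly narrower statement. The paper never passes to a basis: it starts from an arbitrary (a priori nonlinear) selection $M_0$ with $M_0\phi=\psi$, sets $M^\star:=PM_0$ where $P$ is the orthogonal projection onto $\overline{\sum_{\Theta\in\Pi}\ker(\Psi(\Theta)^\dagger)^\perp}$, and deduces linearity by showing that any linearity defect $\tilde\psi:=PM_0(a\phi_1+b\phi_2)-aPM_0(\phi_1)-bPM_0(\phi_2)$ is annihilated by each projection $P_\Theta$ onto $\ker(\Psi(\Theta)^\dagger)^\perp$ (because $P_\Theta M_0=(\Psi(\Theta)^\dagger)^{+}\mathscr{K}(\Theta)$ is linear), hence lies in $\bigcap_{\Theta\in\Pi}\ker(\Psi(\Theta)^\dagger)$, while by construction it also lies in the orthogonal complement of that space; therefore $\tilde\psi=0$. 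That argument uses no finite-dimensionality and no basis, so it establishes the lemma for a general Hilbert space $\mathcal{H}$ --- which matters because the lemma's hypothesis is only Assumption \ref{assump:RKHS}, and the identification $\mathcal{H}=\mathcal{H}_0\cong\R^{d_\phi}$ is made only \emph{after} the lemma is stated. Your route buys brevity and automatic boundedness, but it silently imports Assumption \ref{asm1} into the hypotheses; as you yourself observe, in infinite dimensions a Hamel-basis extension sacrifices boundedness and an orthonormal-basis extension needs a separate uniqueness/closability argument --- which is precisely the work the paper's projection argument performs. For the finite-dimensional setting the paper actually uses downstream, your proof is a valid and more elementary substitute.
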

In the reminder of this paper, we work on the invariant subspace $\mathcal{H}_0$ in Assumption \ref{asm1} and thus we regard $\mathcal{H} = \mathcal{H}_0 \cong \R^{d_\phi}$, $\mathcal{L}(\mathcal{H};\mathcal{H})\cong\R^{d_\phi\times d_\phi}$, and, by abuse of notations, we view $\mathscr{K}(\Theta)$ as the realization of the operator over $\R^{d_\phi}$, i.e.,
\begin{align}
\boldsymbol{\phi}_{\mathcal{F}^\Theta(1,\omega,x)}=\mathscr{K}(\Theta)\boldsymbol{\phi}_{x}+\epsilon(\omega)=[\Psi(\Theta)^\dagger\circ M^\star]\boldsymbol{\phi}_{x}+\epsilon(\omega),\nonumber
\end{align}
where $\boldsymbol{\phi}_{x}:=[\phi_1(x),\phi_2(x),\ldots,\phi_{d_\phi}(x)]^\top\in\R^{d_\phi}$, and $\epsilon(\omega):=[\epsilon_1(\omega),\epsilon_2(\omega),\ldots,\epsilon_{d_\phi}(\omega)]^\top\in\R^{d_\phi}$.

Finally, we assume the following.
\begin{asm}[Realizability of costs]
	\label{assump:const}
	For all $t$, the single-step cost $c^t$ is known and satisfies $c^t(x)=w^t(\boldsymbol{\phi}_{x})$ for some known map $w^t:\R^{d_\phi}\rightarrow\R_{\geq 0}$.
\end{asm}
\begin{remark}[On Assumption \ref{assump:const}]
	\label{rem:assump:const}
	As mentioned in Remark \ref{rem:ksc}, the space $\mathcal{H}_0$ over which the Koopman operator is acting should be properly chosen so that the Koopman operator exists and that its spectrum cost has desirable regularization effect over the dynamics.  At the same time, Assumption \ref{assump:const} requires that $\mathcal{H}_0$ is sufficiently {\em expressive} in the sense that it can capture the immediate cost $c^t$.
\end{remark}

For later use, we define, for all $t\in[T]$, $n\in[N^t]$, and $h\in[H_n^t]$;
$\mathscr{A}^t_{h,n}\in\mathcal{L}\left(\mathcal{L}(\mathcal{H};\mathcal{H}');\mathcal{H}\right)$ and $\mathscr{B}^t\in\mathcal{L}\left(\mathcal{L}(\mathcal{H};\mathcal{H}');\mathcal{L}(\mathcal{H};\mathcal{H})\right)$ by
\begin{align}
\mathscr{A}^t_{h,n}(M)=\left[\Psi(\Theta^t)^\dagger\circ M\right]\left(\boldsymbol{\phi}_{x_{h,n}^t}\right),~~~~~~~\mathscr{B}^t(M)=\Psi(\Theta^t)^\dagger\circ M.\nonumber
\end{align}
\begin{remark}[Hilbert-Schmidt operators]
	Both $\mathscr{A}^t_{h,n}$ and $\mathscr{B}^t$ are Hilbert-Schmidt operators because the ranges $\mathcal{H}$ and $\mathcal{L}(\mathcal{H};\mathcal{H})$ are of finite dimension.
	Note, in case $\mathcal{H}'$ is finite, we obtain
	\begin{align}
	\boldsymbol{\phi}_{\mathcal{F}^\Theta(1,\omega,x)} =\Psi(\Theta)^\dagger M^\star \boldsymbol{\phi}_{x} +\epsilon(\omega) =(\boldsymbol{\phi}_{x}^\dagger\otimes\Psi(\Theta)^\dagger){\rm vec}(M^\star)+\epsilon(\omega), \label{eq:finite}
	\end{align}
	where ${\rm vec}$ is the vectorization of matrix.
\end{remark}
With these preparations in mind, our proposed information theoretic algorithm, which is an extension of LC\textsuperscript{3} to {\fwname} problem (estimating the true operator $M^\star$) is summarized in Algorithm \ref{alg:algorithm}.\footnote{See Appendix \ref{app:definition} for the definitions of the values in this algorithm.}
\begin{algorithm}[!t]
	\begin{algorithmic}[1]	
		\Require Parameter set $\Pi$; regularizer $\lambda$
		\State Initialize $\textsc{Ball}_{M}^{0}$ to be a set containing $M^\star$.
		\For{$t = 0 \dots T-1$}
		\State Adversary chooses $X^t_0$.
		\State $\Theta^t,\hat{M}_t = \argmin_{\Theta\in\Pi,~M\in\textsc{Ball}_M^{t}}\Lambda[\Psi(\Theta)^\dagger\circ M]+J^{\Theta}(X^t_0;M;c^t)$ 
		\State Under the dynamics $\mathcal{F}^{\Theta^t}$, sample transition data $\tau^t := \{\tau^t_n\}_{n=0}^{N^t-1}$, where $\tau^t_n:=\{x^t_{h,n}\}_{h=0}^{H^t_n}$
		\State Update $\textsc{Ball}_M^{t+1}$.
		\EndFor
	\end{algorithmic}
	\caption{{\algfullname} (\algname)}
	\label{alg:algorithm}
\end{algorithm}
This algorithm assumes the following oracle.
\begin{definition}[Optimal parameter oracle]
	Define the oracle, \texttt{OptDynamics}, that computes the minimization problem (\refeq{goal1}) for any $\mathscr{K}$, $X_0$, $\Lambda$ and $c^t$ satisfying Assumption \ref{assump:const}.
\end{definition}

\subsection{Information theoretic regret bounds}
Here, we present the regret bound analysis.  To this end, we make the following assumptions.
\begin{asm}
	\label{asm:lambda}
	The operator $\Lambda$ satisfies the following (modified) H\"{o}lder condition:
	\begin{align}
	&\exists L\in \R_{\geq 0},~ \exists\alpha\in(0,1],~\forall\mathscr{A}\in\mathcal{L}\left(\mathcal{H},\mathcal{H}\right),~\forall\Theta\in\Pi,~\nonumber\\
	&\left|\Lambda[\mathscr{A}]-\Lambda[\mathscr{K}(\Theta)]\right|\leq L\cdot
	\max{\left\{\left\|\mathscr{A}-\mathscr{K}(\Theta)\right\|^2,\left\|\mathscr{A}-\mathscr{K}(\Theta)\right\|^\alpha\right\}}.\nonumber
	\end{align}
	Further, we assume there exists a constant $\Lambda_{\rm max}\ge 0$ such that, for any $\Theta\in\Pi$ and for any $M\in\textsc{Ball}_{M}^{0}$, 
	\begin{align}
	\left|\Lambda[\Psi(\Theta)^\dagger\circ M]\right|\leq \Lambda_{\rm max}.\nonumber
	\end{align}
\end{asm}
\begin{remark}[On Assumption \ref{asm:lambda}]
	\label{rem:assump:lambda}
	This assumption does not preclude practical examples such as matrix norms (because of the triangle inequality) and the spectral radius as described below.
	However, we note that the spectrum cost in Section \ref{subsec:simexp} used for top mode imitation, namely, $\Lambda(\mathscr{A})=\|\mathbf{m}_1-\mathbf{m}_1^\star\|_1$, does not satisfy this (modified) H\"{o}lder condition in general; therefore this cost might not be used for {\algname}.
\end{remark}
For example, for spectral radius $\rho$, the following proposition holds using the result from ~\citep[Corollary 2.3]{song2002note}.
\begin{prop}
	\label{jordan}
	Assume there exists a constant $\Lambda_{\rm max}\ge 0$ such that, for any $\Theta\in\Pi$ and for any $M\in\textsc{Ball}_{M}^{0}$, $\rho(\Psi(\Theta)^\dagger\circ M)\leq \Lambda_{\rm max}$.
	Let the Jordan condition number of $\mathscr{K}(\Theta)$ be the following:
	\begin{align}
	\kappa:=\sup_{\Theta\in\Pi}{\inf_{\mathcal{Q}(\Theta)}{\left\{\|\mathcal{Q}(\Theta)\|\|\mathcal{Q}(\Theta)^{-1}\|:\mathcal{Q}(\Theta)^{-1}\mathscr{K}(\Theta)\mathcal{Q}(\Theta)=\mathscr{J}(\Theta)\right\}}},\nonumber
	\end{align}
	where $\mathscr{J}(\Theta)$ is a Jordan canonical form of $\mathscr{K}(\Theta)$ transformed by a nonsingular matrix $\mathcal{Q}(\Theta)$.
	Also, let $m$ be the order of the largest Jordan block.
	Then, if $\kappa < \infty$, the cost $\Lambda(\mathscr{A})=\rho(\mathscr{A})$ satisfies the Assumption \ref{asm:lambda} for
	\begin{align}
	L:=(1+\kappa)d_\phi^2(1+\sqrt{d_\phi-1}),~~~~\alpha=\frac{1}{m}.\nonumber
	\end{align}
\end{prop}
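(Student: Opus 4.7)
The plan is to reduce the claim to a standard matrix spectral perturbation bound and then invoke the cited result of Song directly. First, I would observe that, since $\rho(\mathscr{A})=\max_{\lambda\in\spec(\mathscr{A})}|\lambda|$ and the map $S\mapsto\max_{\lambda\in S}|\lambda|$ is $1$-Lipschitz on nonempty compact subsets of $\C$ under the Hausdorff distance $d_H$, we have
\begin{align*}
\bigl|\rho(\mathscr{A})-\rho(\mathscr{K}(\Theta))\bigr|\;\le\;d_H\bigl(\spec(\mathscr{A}),\spec(\mathscr{K}(\Theta))\bigr).
\end{align*}
This converts the Hölder inequality we want into a purely spectral-perturbation question on two $d_\phi\times d_\phi$ matrices (recall that after Assumption \ref{asm1} we identify $\mathcal{L}(\mathcal{H};\mathcal{H})\cong\R^{d_\phi\times d_\phi}$).

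Next, I would apply Song's Corollary 2.3, which, for two complex $n\times n$ matrices, bounds $d_H$ between their spectra by a constant times $\max\{\|A-B\|^{1/m},\|A-B\|\}$, where the constant scales with the Jordan condition number of the reference matrix $A$ and $m$ is the size of its largest Jordan block. Specializing to $A=\mathscr{K}(\Theta)$, $B=\mathscr{A}$, $n=d_\phi$, and taking the worst case over $\Theta\in\Pi$ using the definition of $\kappa$, the bound reads
\begin{align*}
d_H\bigl(\spec(\mathscr{A}),\spec(\mathscr{K}(\Theta))\bigr)\;\le\;(1+\kappa)\,d_\phi^2\,(1+\sqrt{d_\phi-1})\,\max\!\left\{\|\mathscr{A}-\mathscr{K}(\Theta)\|^{1/m},\,\|\mathscr{A}-\mathscr{K}(\Theta)\|\right\}.
\end{align*}
Since $1/m\le 1\le 2$, one readily checks that $\max\{x^{1/m},x\}\le\max\{x^{1/m},x^2\}$ for every $x\ge 0$ (split at $x=1$: for $x\le 1$ the $x^{1/m}$ term dominates both, for $x\ge 1$ the $x^2$ term dominates both), so the bound above implies exactly the Hölder condition of Assumption \ref{asm:lambda} with $\alpha=1/m$ and $L=(1+\kappa)d_\phi^2(1+\sqrt{d_\phi-1})$. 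The uniform boundedness $|\Lambda[\Psi(\Theta)^\dagger\circ M]|\le\Lambda_{\max}$ required in Assumption \ref{asm:lambda} is immediate from the hypothesis of the proposition.

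The main obstacle I anticipate is bookkeeping of constants: Song's result is typically stated with respect to a specific matrix norm (often the spectral or Frobenius norm), and one must check that the dimensional factors $d_\phi^2(1+\sqrt{d_\phi-1})$ come out of that statement precisely as claimed (or, if Song uses Frobenius, one pays an additional $\sqrt{d_\phi}$ factor that must be absorbed into the $d_\phi^2$). A secondary subtlety is that the supremum over $\Theta$ inside $\kappa$ must be taken outside Song's bound; this is harmless because the bound is monotone in the Jordan condition number and we are allowed to replace $\kappa(\mathscr{K}(\Theta))$ by $\kappa$. Beyond these bookkeeping points the argument is a direct application of the cited corollary, so no new technique is needed.
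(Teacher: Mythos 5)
Your proof is correct and hinges on the same key ingredient as the paper's: Song's Corollary 2.3 applied to the pair $(\mathscr{K}(\Theta),\mathscr{A})$ with the Jordan condition number bounded uniformly by $\kappa$, yielding the same $L=(1+\kappa)d_\phi^2(1+\sqrt{d_\phi-1})$ and $\alpha=1/m$. The only genuine difference is the front-end reduction. The paper sorts both spectra by decreasing modulus and runs the chain $\bigl||\mu_1|-|\lambda_1|\bigr|\le|\mu_1-\lambda_1|\le\sum_i|\mu_i-\lambda_i|\le\sum_i|\mu_{\pi(i)}-\lambda_i|$, where the last inequality implicitly asserts that the modulus-sorted matching minimizes the total matching cost over all permutations --- a step that is not automatic for complex eigenvalues. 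Your reduction via the Hausdorff distance sidesteps this: $|\rho(\mathscr{A})-\rho(\mathscr{K}(\Theta))|\le d_H(\spec(\mathscr{A}),\spec(\mathscr{K}(\Theta)))$ follows from the triangle inequality, and $d_H$ is dominated by $\max_i|\mu_{\pi(i)}-\lambda_i|\le\sum_i|\mu_{\pi(i)}-\lambda_i|$ for \emph{any} bijective matching $\pi$, in particular the one Song's corollary supplies; so your decomposition is slightly more robust. You also make explicit the elementary step $\max\{x^{1/m},x\}\le\max\{x^{1/m},x^2\}$ needed to land on the exact form of Assumption \ref{asm:lambda}, which the paper leaves as ``simple computations,'' and you correctly identify the one place requiring care --- the norm and dimensional bookkeeping in Song's statement, which is precisely where the paper's factors $d_\phi\sqrt{d_\phi}\cdot\sqrt{d_\phi}=d_\phi^2$ and $\sqrt{d_\phi-p}\le\sqrt{d_\phi-1}$ arise.
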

Note when $\mathscr{K}(\Theta)$ is diagonalizable for all $\Theta$, $\alpha=1$.
\begin{asm}
	\label{assump:boundcost}
	For every $t$, adversary chooses $N^t$ trajectories such that
$\{\boldsymbol{\phi}_{x_{0,n}^t}\}$ satisfies that the smallest eigenvalue of
$
\sum_{n=0}^{N^t-1} \boldsymbol{\phi}_{x_{0,n}^t}\boldsymbol{\phi}_{x_{0,n}^t}^\dagger
$
is bounded below by some constant $C>0$.  Also, there exists a constant $H\in\Z_{>0}$ such that for every $t$, $\sum_{n=0}^{N^t-1}H^t_n\leq H$.
\end{asm}
\begin{remark}[On Assumption \ref{assump:boundcost}]
In practice, user may wait to end an episode until a sufficient number of trajectories is collected in order for the assumption to hold.
Intuitively, this condition ensures that fitting the transition data over the feature space is not ill-posed.
Note, this assumption does not preclude the necessity of exploration: If the sole purpose is just to fit the data for single parameter $\Theta$, we may not need a well-designed exploration strategy in this case; however, because the smallest eigenvalue of $\sum_{n=0}^{N^t-1}\sum_{h=0}^{H_n^t-1}{\mathscr{A}^t_{h,n}}^\dagger\mathscr{A}^t_{h,n}$ is in general not bounded below by a positive constant, we still need careful design of exploration.\\
We mention that this assumption plays a role in our regret analysis to simultaneously manage the cumulative cost and the spectrum cost through the use of common confidence balls; note the former cares about each single-step transition while the latter deals with the global dynamical properties represented as the Koopman operator realized over a given space.\\
The direct use of this assumption is highlighted by Lemma \ref{boundlemma} in Appendix \ref{app:regret} (derived from our {\em positive operator norm bounding lemma}; see Lemma \ref{operatorlemma}) which is used to basically bound the norm of difference between the true Koopman operator and the estimated one at each episode by the multiple of {\em radius} of the confidence ball.  \\
We hope that this assumption can be relaxed by assuming the bounds on the norms of $\boldsymbol{\phi}_{x_{0,n}}$ and $\mathscr{K}(\Theta)$ and by using matrix Bernstein inequality under additive Gaussian noise assumption.
\end{remark}
Lastly, the following assumption is the modified version of the one made in~\citep{kakade2020information}.
\begin{asm}\label{asm:moment_bound}[Modified version of Assumption 2 in~\citep{kakade2020information}] Assume there exists a constant $\Mtwo>0$ such that, for every $t$,
	\begin{align}
	\sup_{\Theta\in\Pi}  \, \sum_{n=0}^{N^t-1}\Exp_{\Omega_\Theta}\left[\left(\sum_{h=0}^{H^t_n-1} c^t(x_{h,n}^t)\right)^2
	\ \bigg\vert \ \Theta, x_{0,n}^t\right]
	\leq \Mtwo.\nonumber
	\end{align}
\end{asm}
\begin{remark}[On Assumption \ref{asm:moment_bound}]
	\label{rem:moment_bound}
	This is a slight modification of Assumption 2 in~\citep{kakade2020information} to adjust to our problem settings; this assumption {\em does not} state that the cost function is bounded over the state space but the second moment of {\em realized} cumulative cost is bounded, and the complexity depends on this bound. 
\end{remark}
\begin{theorem}
	\label{regrettheorem}
	Suppose Assumption \ref{asm1} to \ref{asm:moment_bound} hold.  Set $\lambda=\frac{\sigma^2}{\|M^\star\|^2}$.
	Then, there exists an absolute constant $C_1$ such that, for all $T$, \algname (Algorithm \ref{alg:algorithm}) using \texttt{OptDynamics} enjoys the following regret bound:
	\begin{align}
	\Exp_{\rm \algname}\left[\textsc{Regret}_T\right]\leq C_1(\tilde{d}_{T,1}+\tilde{d}_{T,2})T^{1-\frac{\alpha}{2}},\nonumber
	\end{align}
	where
	\begin{align}
	\tilde{d}_{T,1}^2&:=(1+\gamma_{T,\mathscr{B}})\left[L^2(1+C^{-1})^2\tilde{\beta}_{2,T}+(L^2+\Lambda_{\rm max}L)(1+C^{-1})\tilde{\beta}_{1,T}+\Lambda_{\rm max}^2+L^2\right],\nonumber\\
	\tilde{d}_{T,2}^2&:=\gamma_{T,\mathscr{A}}H\Mtwo\left(\gamma_{T,\mathscr{A}}+d_\phi+\log(T)+H\right),\nonumber\\
	\tilde{\beta}_{1,T}&:=\sigma^2(d_\phi+\log(T)+\gamma_{T,\mathscr{A}}),\nonumber\\
	\tilde{\beta}_{2,T}&:=\sigma^4((d_\phi+\log(T)+\gamma_{T,\mathscr{A}})^2+\gamma_{2,T,\mathscr{A}}).\nonumber
	\end{align}
	Here, $\gamma_{T,\mathscr{A}}$, $\gamma_{2,T,\mathscr{A}}$, and $\gamma_{T,\mathscr{B}}$ are the expected maximum information gains that scale (poly-)logarithmically with $T$ under practical settings (see Appendix \ref{app:definition} for details).
\end{theorem}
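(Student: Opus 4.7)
The plan is the standard optimism-in-the-face-of-uncertainty (OFU) template, instantiated through the factorization $\mathscr{K}(\Theta)=\Psi(\Theta)^\dagger\circ M^\star$ furnished by Lemma~\ref{veclemma}. Under Assumption~\ref{asm1} the per-step observations $\boldsymbol{\phi}_{\mathcal{F}^{\Theta^s}(1,\omega,x^s_{h,n})}$ form a linear-Gaussian regression with regressors $\mathscr{A}^s_{h,n}$ and unknown operator $M^\star$; I would construct $\textsc{Ball}_M^{t}$ as a ridge-regularized, self-normalized confidence ellipsoid (Abbasi-Yadkori style) for this regression with $\lambda=\sigma^2/\|M^\star\|^2$, and carry two confidence radii: a first-moment radius $\tilde\beta_{1,T}$ of order $\sigma^2(d_\phi+\log T+\gamma_{T,\mathscr{A}})$ and a second-moment radius $\tilde\beta_{2,T}$ of order $\sigma^4((d_\phi+\log T+\gamma_{T,\mathscr{A}})^2+\gamma_{2,T,\mathscr{A}})$. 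On the good event $\mathcal{E}$ that $M^\star\in\textsc{Ball}_M^{t}$ for every $t$ (which has probability $\ge 1-1/T$), the choice $(\Theta^t,\hat M_t)$ is optimistic, so the per-round regret is bounded by the additive decomposition
\begin{align*}
r_t \;\le\; \underbrace{\Lambda[\Psi(\Theta^t)^\dagger\circ M^\star]-\Lambda[\Psi(\Theta^t)^\dagger\circ\hat M_t]}_{\Delta^{\mathrm{spec}}_t}\;+\;\underbrace{J^{\Theta^t}(X_0^t;M^\star;c^t)-J^{\Theta^t}(X_0^t;\hat M_t;c^t)}_{\Delta^{\mathrm{cum}}_t}.
\end{align*}

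\textbf{Controlling the spectrum term.} By Assumption~\ref{asm:lambda} and $\max\{a,b\}\le a+b$ for nonnegatives, $|\Delta^{\mathrm{spec}}_t|\le L\|\mathscr{B}^t(\hat M_t-M^\star)\|^2+L\|\mathscr{B}^t(\hat M_t-M^\star)\|^\alpha$, with the trivial uniform cap $2\Lambda_{\max}$ from the second part of Assumption~\ref{asm:lambda}. For $\alpha\in(0,1]$, Hölder's inequality for sums with exponent $2/\alpha$ gives
\begin{align*}
\sum_{t=0}^{T-1}\|\mathscr{B}^t(\hat M_t-M^\star)\|^\alpha \;\le\; T^{1-\alpha/2}\Bigl(\sum_{t=0}^{T-1}\|\mathscr{B}^t(\hat M_t-M^\star)\|^2\Bigr)^{\alpha/2},
\end{align*}
which reduces matters to an $L^2$ prediction-error bound on the $\mathscr{B}$-queries. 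Here I would invoke Assumption~\ref{assump:boundcost} to convert the trajectory-start Gram matrix into an operator-Gram lower bound: the minimum-eigenvalue condition on $\sum_n \boldsymbol{\phi}_{x^t_{0,n}}\boldsymbol{\phi}_{x^t_{0,n}}^\dagger\succeq C\,I$ lets me relate the $\mathscr{B}^t$ information about $M$ to the $\mathscr{A}^t_{h,n}$ information (with distortion $1+C^{-1}$). The resulting squared-error sum is bounded by the elliptic-potential / maximum-information-gain lemma as $\sum_t\|\mathscr{B}^t(\hat M_t-M^\star)\|^2\lesssim (1+C^{-1})\tilde\beta_{1,T}\,\gamma_{T,\mathscr{B}}$, and the quadratic branch by $(1+C^{-1})^2\tilde\beta_{2,T}\,\gamma_{T,\mathscr{B}}$. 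Collecting terms produces the factor $\tilde d_{T,1}$ with the $T^{1-\alpha/2}$ rate.

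\textbf{Controlling the cumulative term.} This is where I would extend the LC\textsuperscript{3} analysis of \citet{kakade2020information}. The simulation lemma expresses $\Delta^{\mathrm{cum}}_t$ as a sum of one-step model errors $\mathscr{A}^t_{h,n}(\hat M_t-M^\star)$ evaluated against the (optimistic) value functions along a rollout of $\mathcal{F}^{\Theta^t}$; Assumption~\ref{asm:moment_bound} bounds the second moment of these value functions by $\Mtwo$ uniformly in $\Theta$, and Assumption~\ref{assump:const} ensures the value functions are well-defined functions of $\boldsymbol{\phi}$. A Cauchy-Schwarz step over $(t,n,h)$ combined with the elliptic-potential bound $\sum_{t,n,h}\|\mathscr{A}^t_{h,n}(\hat M_t-M^\star)\|^2\lesssim \tilde\beta_{1,T}\,\gamma_{T,\mathscr{A}}$ then yields $\sum_t|\Delta^{\mathrm{cum}}_t|\lesssim\sqrt{\gamma_{T,\mathscr{A}}H\Mtwo(\gamma_{T,\mathscr{A}}+d_\phi+\log T+H)\,T}$, which is $\tilde d_{T,2}\sqrt{T}\le \tilde d_{T,2}T^{1-\alpha/2}$. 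Finally, on the complement $\mathcal{E}^c$ with probability $\le 1/T$, each round contributes at most an $O(\Lambda_{\max}+\sqrt{\Mtwo})$ penalty, absorbed into the stated bound.

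\textbf{Main obstacle.} The delicate point is handling the sub-Lipschitz regime $\alpha<1$ of the spectrum cost jointly with the cumulative cost using a \emph{single} estimator $\hat M_t$. The cumulative term naturally lives on the per-step regressors $\mathscr{A}^t_{h,n}$ and thus on $\gamma_{T,\mathscr{A}}$, while the spectrum term lives on the per-episode regressors $\mathscr{B}^t$ and on $\gamma_{T,\mathscr{B}}$; bridging these two information-gain notions — so that learning from $\mathscr{A}$-style observations actually sharpens the $\mathscr{B}$-ellipsoid — is the role of Assumption~\ref{assump:boundcost}, and the $(1+C^{-1})$ dilation in $\tilde d_{T,1}^2$ is its price. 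The Hölder-in-sum conversion producing the $T^{1-\alpha/2}$ factor is what prevents a $\sqrt{T}$ bound in general, and this factor is intrinsic rather than an artifact of the argument, since Proposition~\ref{jordan} allows $\alpha=1/m$ arbitrarily small for non-diagonalizable Koopman operators.
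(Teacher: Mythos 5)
Your proposal is correct and follows essentially the same route as the paper: the same optimism-based decomposition into a spectrum term and a cumulative term, the same $(1+C^{-1})$ bridge from the $\mathscr{A}$-ellipsoid to the $\mathscr{B}$-queries via Assumption \ref{assump:boundcost}, the same H\"{o}lder/power-mean treatment of the exponent $\alpha$ yielding $T^{1-\alpha/2}$ (the paper applies it to the normalized quantities inside its elliptic-potential lemma rather than to the raw errors, but this is cosmetic), and the same LC\textsuperscript{3}-style bound with $\gamma_{T,\mathscr{A}}$ for the cumulative term. The only substantive deviation is bookkeeping of the failure event (you use probability $1/T$; the paper settles for $1/2$ total), which does not change the argument.
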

\begin{remark}[On the order]
	We note that, when $\alpha=1$, we obtain the order of $\tilde{O}(\sqrt{T})$.
\end{remark}
\begin{remark}[On the adversary]
	In our setting, the adversary only chooses the initial states, their time horizons, and the immediate cost function.  The trajectories themselves are generated by the learner's parameter $\Theta$.  Although the regret bound given above is valid if the assumptions hold regardless of the adversary, a caveat here is the bound $H$ and $\Mtwo$ appearing in the regret bound can potentially depend on the choice of the adversary.
\end{remark}

The proof techniques include our {\em positive operator norm bounding lemma} (see Lemma \ref{operatorlemma} in Appendix \ref{app:regret}), which is another crucial operator theoretic lemma in this work in addition to Lemma \ref{veclemma}.

\subsection{Relations to the kernelized nonlinear regulator and to eigenstructure assignments}
\label{subsec:reduction}
First, we mention that our theoretical results apply some of the techniques developed in the work of KNR; and in fact, additional theoretical arguments that are necessitated for KSNR framework highlight the essential difference of our framework from the MDP counterpart.

As mentioned in Remark \ref{rem:asm1}, for a given dynamics described by the system model studied in the KNR (i.e., the transition map from the current state and control to the next state is in a known RKHS), the existence of a (finite-dimensional) space $\mathcal{H}_0$ in Assumption \ref{asm1} that can represent the given cumulative cost is in general not guaranteed.  As such, the system model considered in this section is no more general than that of the KNR (although our spectrum cost formulation is indeed a generalization of that of the KNR).
Now, we consider the finite dimensional description (see \eqref{eq:finite}) for simplicity.
The equation \eqref{eq:finite} can be rewritten by
\begin{align}
	\boldsymbol{\phi}_{\mathcal{F}^\Theta(1,\omega,x)} = \left(I\otimes {\rm vec}(M^\star)^{\top}\right){\rm vec}
	\left[\left(\boldsymbol{\phi}_{x}^\dagger\otimes\Psi(\Theta)^\dagger\right)^{\top}\right]+\epsilon(\omega),\nonumber
\end{align}
and is a special case of the system model of the KNR.

We see that the model associated with our spectrum cost formulation reduces to the eigenstructure assignment problem for the linear systems described by
\begin{align}
	x_{h+1} = Ax_{h}+Bu_h+\epsilon,~~A\in\R^{d_{\mathcal{X}}\times d_{\mathcal{X}}},~B\in\R^{d_{\mathcal{X}}\times d_{\mathcal{U}}},~\epsilon\sim \mathcal{N}(0,\sigma^2I), \nonumber
\end{align}
where $u\in d_\mathcal{U}$ is a control input.
In particular, considering the feedback policy in the form of $u_h=Kx_h$ where $K\in\R^{d_{\mathcal{U}}\times d_{\mathcal{X}}}$ (or $\Theta=K$ in this case), the system becomes $x_{h+1}=(A+BK)x_{h}+\epsilon$.  By letting $\mathcal{H}_0$ be $\R^{d_\mathcal{X}}$ where the canonical basis is taken and by properly designing $\Psi(\Theta)$ (see Appendix \ref{app:reduction_to_linear}), our system model reduces to the linear case.  The spectrum cost may be designed not only to constrain the eigenstructure but also to balance with the cumulative single-step cost in our framework as well.

\subsection{Simulated experiments}
\paragraph{Linear case: as eigenstructure assignment problem}
First, to demonstrate the correctness of our theoretical algorithm, we consider a linear system case.  In particular, the following simple system is considered:
\begin{align}
	x_{h+1}=x_{h}+\Delta t\cdot u_{h}, \nonumber
\end{align}
where $\Delta t=0.05$.
The target linear system is given by
\begin{align}
	x_{h+1} = \left(I+0.05 K^*\right)x_{h}, \nonumber
\end{align}
where
\begin{align}
	K^* := \left[
	\begin{array}{cc}
		-2 & 4\\
		-6 & -2
	\end{array}
	\right]. \nonumber
\end{align}
The spectrum cost given by $\|(I+0.05 K^*)-(A+BK)\|^2_{\rm HS}$ enforces that the parameter $\Theta$ (which is a feedback matrix $K$ in this case) generates the dynamical system that follows this target linear dynamics; where $A$ and $B$ are matrices that are part of the Koopman operator to learn (see Section \ref{subsec:reduction}).
Figure \ref{fig:learning_linear} plots the result showing that the trajectories generated by the ground-truth target linear system and by the learned model match almost exactly, and that the spectrum cost decreases along episodes.  The spectrum cost curve evaluated approximately using the current trajectory data is given in Figure \ref{fig:learning_linear} (Middle).  Note the curve is averaged across four random seeds and across episode window of size four.  Additionally, the estimated spectrum cost curve obtained by using the current estimate $\hat{M}$ is plotted in Figure \ref{fig:learning} (Right).

\begin{figure}[t]
	\begin{center}
		\includegraphics[clip,width=0.95\textwidth]{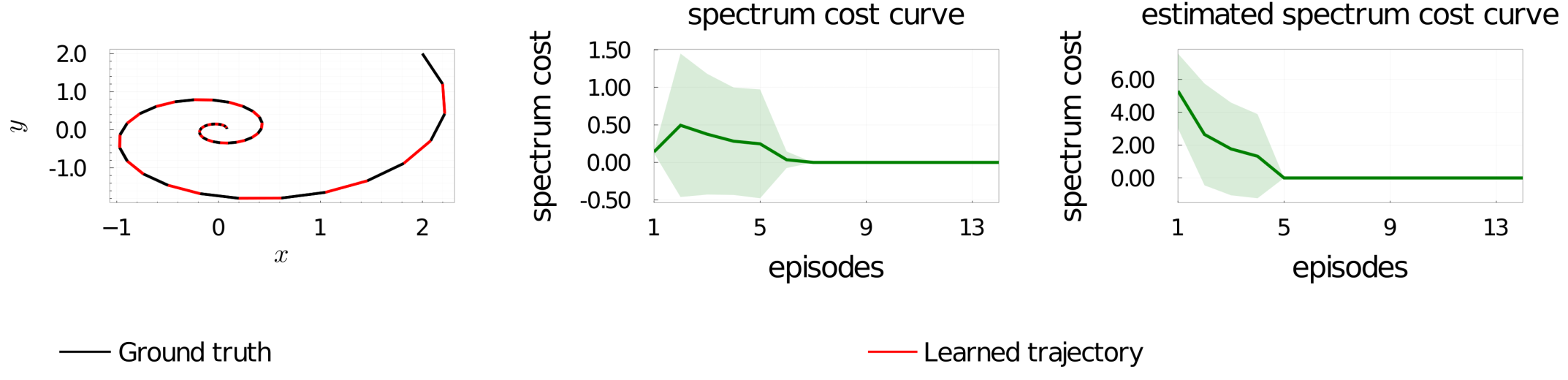}
	\end{center}
	\caption{Linear system imitation (as eigenstructure assignment) by {\algname} using the spectrum cost $\Lambda(\mathscr{A})=\|(I+0.05 K^*)-(A+BK)\|^2_{HS}$ (where $A$ and $B$ are extracted from $\mathscr{A}$).  Left: ground-truth linear system position trajectory and a trajectory generated by the learned model with {\algname}.  We see both overlap exactly, indicating that the algorithm learned the feedback matrix properly.  Middle: The spectrum cost curve evaluated approximately using the current trajectory data.  Right: The estimated spectrum cost curve obtained by using the current estimate $\hat{M}$.} 
	\label{fig:learning_linear}
\end{figure}

\paragraph{Cartpole problem with reduced policy space} 
We again consider a Cartpole environment for \algname{}. To reduce the policy search space, we compose three pre-trained policies to balance the pole while 1) moving the cart position $p$ to $-0.3$ and then to $0.3$, 2) moving the cart with velocity $v=-1.5$, 3) moving cart with velocity $v=1.5$. We also extract the Koopman operator $\mathscr{A}^\star$ for the first policy. Then, we let $\Pi$ to be the space of linear combinations of the three policies, reducing the dimension of $\Theta$.  We let the first four features $\phi_1(x)$ to $\phi_4(x)$ be $x_1$ to $x_4$, where $x=[x_1,\ldots,x_4]^\top$ is the state, and the rest be RFFs.  

The single-step cost is given by $10^{-4}(|v|-1.5)^2$ plus the large penalty when the pole falls down (i.e., directing downward), and the spectrum cost is $\Lambda(\mathscr{A})=\|\mathscr{A}[1:4,:]-\mathscr{A}^\star[1:4,:]\|^2_{HS}+0.01\sum_{i=1}^{d_\phi}|\lambda_i(\mathscr{A})|$, where $\mathscr{A}[1:4,:]\in\R^{4\times d_\phi}$ is the first four rows of $\mathscr{A}$; this imitates the first policy while also regularizing the spectrum.

When CEM is used to (approximately) solve {\fwname}, the resulting cart position trajectories are given by Figure \ref{fig:learning} (Left).  With the spectrum cost, the cumulative costs and the spectrum costs averaged across four random seeds were $0.706\pm 0.006$ and $2.706\pm 0.035$. Without the spectrum cost, they were $0.428\pm 0.045$ and $3.383\pm 0.604$. 

We then use the Thompson sampling version of {\algname}; the resulting cart position trajectories are given by Figure \ref{fig:learning} (Left) as well, and the spectrum cost curves are also shown.  
It is observed that the addition of the spectrum cost favored the behavior corresponding to the target Koopman operator to oscillate while balancing the pole, and achieved similar performance to the ground-truth model optimized with CEM.
\begin{figure}[t]
	\begin{center}
		\includegraphics[clip,width=0.95\textwidth]{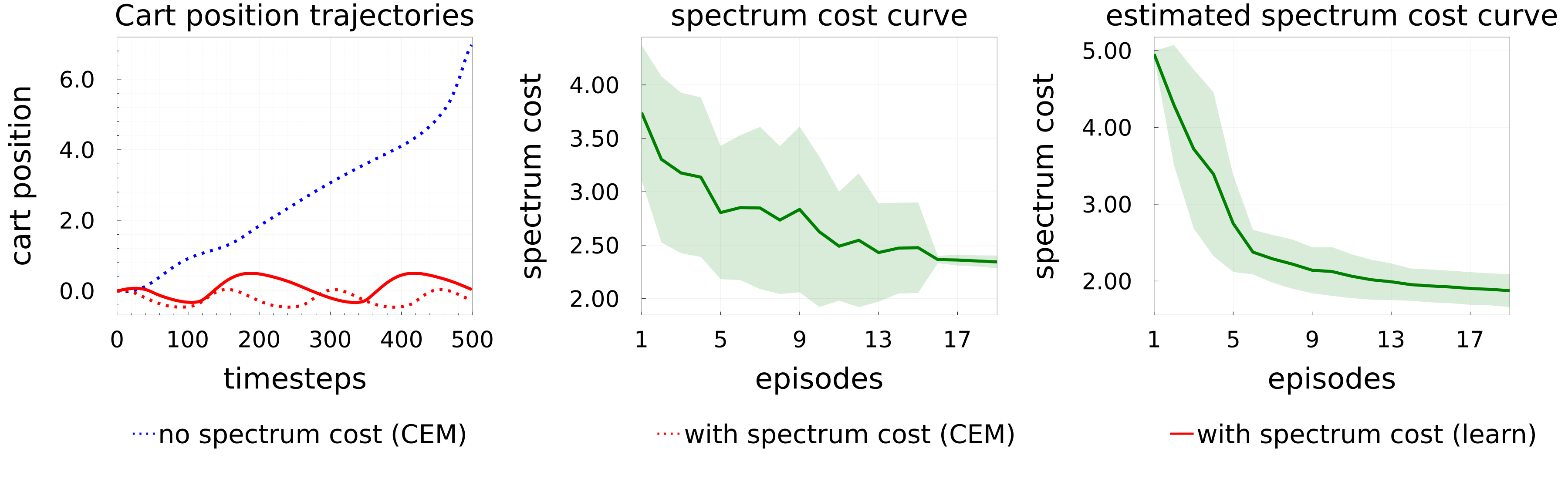}
	\end{center}
	\caption{Cartpole imitation by {\algname} using the spectrum cost $\Lambda(\mathscr{A})=\|\mathscr{A}[1:4,:]-\mathscr{A}^\star[1:4,:]\|^2_{HS}+0.01\sum_{i=1}^{d_\phi}|\lambda_i(\mathscr{A})|$.  Left: Cart position trajectories with/without the spectrum cost with CEM on a known model versus the learned model with {\algname}. Middle: The spectrum cost curve evaluated approximately using the current trajectory data. Right: The estimated spectrum cost curve obtained by using the current estimate $\hat{M}$.} 
	\label{fig:learning}
\end{figure}

\section{Discussion and conclusion}
This work proposed a novel paradigm of regulating (controlling) dynamical systems, which we refer to as {\fwfullname}, and presented an information theoretic algorithm that achieves a sublinear regret bound under model assumptions.  We showed that behaviors such as limit cycles of interest, stable motion, and smooth movements are effectively realized within our framework, which is an effective generalization of classical eigenstructure (pole) assignments.  
In terms of learning algorithms, we stress that there is a significant potential of future work for inventing sophisticated and scalable methods that elicit desired and interesting behaviors from dynamical systems.
Our motivation of this work stems from the fact that some preferences over dynamical systems cannot be encoded by cumulative single-step cost based control/learning algorithms; we believe that this work enables a broader representation of dynamical properties that can enable more intelligent and robust agent behaviors.

\section{Limitations}
\label{sec:limitation}
First, when the dynamical systems of interests do not meet certain conditions, Koopman operators might not be well defined over functional spaces that can be practically managed (e.g. RKHSs).  Studying such conditions is an on-going research topic, and we will need additional studies of misspecified cases where only an approximately invariant Koopman space is employed with errors.

Second, to solve {\fwname}, one needs heuristic approximations when the (policy) space $\Pi$ or the state space $\mathcal{X}$ is continuous.  Even when the dynamical system model is known, getting better approximation results would require certain amount of computations, and additional studies on the relation between the amount of computations (or samples) and approximation errors would be required.  Also, studying a better heuristic algorithm that is well suited to our problem to robustly scale the methodology to more complicated domains is indeed an interesting direction of research.

Third, {\algname} requires several assumptions for tractable regret analysis.  It is again a somewhat strong to assume that one can find a Koopman invariant space $\mathcal{H}$.  %
Further, additive Gaussian noise assumption may not be satisfied exactly in some practical applications, and robustness against the violations of the assumptions is an important future work.  However, we stress that we believe the analysis of provably correct methods deepens our understandings of the problem.
Eventually, we hope our framework will match the maturity of the MDP counterpart, for example by studying gradient-based algorithms that scale better to the more complicated domains and their theoretical analysis with a sample complexity guarantee.

Lastly, we note that our empirical experiment results are only conducted on simulators; to apply to real robotics problems, we need additional studies, such as computation-accuracy trade-off, safety/robustness issues, and simulation-to-real, if necessary.  Also, balancing between cumulative cost and the Koopman spectrum cost is necessary to avoid unexpected negative outcomes.

\section{Broader impact statement}
The work, along with other robotics/control literature, encourages the further automation of robots; which would lead to loss of jobs that are currently existent, or may lead to developments of harmful military robots.  Further, when our proposed framework requires additional (computational or other) resources, it would benefit entities with larger amount of such resources.
Also, there is always a risk of robots misbehaving under partially unknown environments or under adversarial attacks.  

However, overall, we believe our work alone does not immediately carry a significant risk of harm.
Moreover, we stress that the use of the Koopman spectrum cost has no implications on ``good'' and ``bad'' about any particular human behaviors.

\section*{Acknowledgments}
We thank the constructive comments by anonymous reviewers for improving this work.
Motoya Ohnishi thanks Colin Summers for instructions on Lyceum.  Kendall Lowrey and Motoya Ohnishi thank Emanuel Todorov for valuable
discussions and Roboti LLC for computational supports.  
This work of Motoya Ohnishi, Isao Ishikawa, Masahiro Ikeda, and Yoshinobu Kawahara was supported by JST CREST Grant Number JPMJCR1913,
including AIP challenge program, Japan.
Also, Motoya Ohnishi is supported in part by Funai Overseas Fellowship.
Sham Kakade acknowledges funding from the ONR award N00014-18-1-2247.

\bibliographystyle{tmlr}

\clearpage
\appendix
\section{Function-valued RKHS}
\label{appendix_functionvaluedRKHS}
Function-valued reproducing kernel Hilbert spaces (RKHSs) are defined below.
\begin{definition}[\citet{kadri2016operator}]
	A Hilbert space $(\mathcal{H}_K,\inpro{\cdot,\cdot}_{\mathcal{H}_K})$ of functions from $\Pi$ to a Hilbert space $(\mathcal{H},\inpro{\cdot,\cdot}_{\mathcal{H}})$ is called a reproducing kernel Hilbert space if there is a nonnegative $\mathcal{L}(\mathcal{H};\mathcal{H})$-valued kernel $K$ on $\Pi\times\Pi$ such that:
	\begin{enumerate}
		\item $\Theta\mapsto K(\Theta',\Theta)\phi$ belongs to $\mathcal{H}_K$ for all $\Theta'\in\Pi$ and $\phi\in\mathcal{H}$,
		\item for every $\mathscr{G}\in\mathcal{H}_K,~\Theta\in\Pi$ and $\phi\in\mathcal{H}$,~$\inpro{\mathscr{G},K(\Theta,\cdot)\phi}_{\mathcal{H}_K}=\inpro{\mathscr{G}(\Theta),\phi}_{\mathcal{H}}$.
	\end{enumerate}
\end{definition}
For function-valued RKHSs, the following proposition holds.
\begin{prop}[Feature map~\citep{brault2016random}]
	Let $\mathcal{H}'$ be a Hilbert space and $\Psi:\Pi\rightarrow\mathcal{L}(\mathcal{H};\mathcal{H}')$.  Then the operator $W:\mathcal{H}'\rightarrow\mathcal{H}^{\Pi}$ defined by $[W\psi](\Theta):=\Psi(\Theta)^\dagger \psi,~\forall \psi\in\mathcal{H}'$ and $~\forall \Theta\in\Pi$, is a partial isometry from $\mathcal{H}'$ onto the reproducing kernel Hilbert space $\mathcal{H}_K$ with a reproducing kernel $K(\Theta_2,\Theta_1)=\Psi(\Theta_2)^\dagger\Psi(\Theta_1),~~\forall \Theta_1,\Theta_2\in\Pi$.
\end{prop}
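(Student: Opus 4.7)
The plan is to: (i) check that $K(\Theta_2,\Theta_1):=\Psi(\Theta_2)^\dagger\Psi(\Theta_1)$ is a positive operator-valued kernel and invoke the operator-valued Moore--Aronszajn construction to obtain $\mathcal{H}_K$; (ii) exhibit a dense subspace of $\mathcal{H}'$ on which $W$ is obviously an isometry into $\mathcal{H}_K$ with dense image; (iii) identify $\ker W$ with the orthogonal complement of that subspace and glue everything together. For (i), given any finite $\{(\Theta_i,\phi_i)\}_{i=1}^n\subset\Pi\times\mathcal{H}$,
\[
\sum_{i,j}\inpro{\phi_i,K(\Theta_i,\Theta_j)\phi_j}_{\mathcal{H}}=\sum_{i,j}\inpro{\Psi(\Theta_i)\phi_i,\Psi(\Theta_j)\phi_j}_{\mathcal{H}'}=\Big\|\sum_i\Psi(\Theta_i)\phi_i\Big\|_{\mathcal{H}'}^2\ge 0,
\]
so $K$ is positive definite and operator-valued Moore--Aronszajn produces a unique RKHS $\mathcal{H}_K\subset\mathcal{H}^{\Pi}$ with reproducing kernel $K$: it is the completion of $\mathcal{D}:=\mathrm{span}\{K(\cdot,\Theta)\phi:\Theta\in\Pi,\phi\in\mathcal{H}\}$ under the pre-inner product $\inpro{K(\cdot,\Theta_i)\phi_i,K(\cdot,\Theta_j)\phi_j}_{\mathcal{H}_K}:=\inpro{\phi_i,K(\Theta_i,\Theta_j)\phi_j}_{\mathcal{H}}$.

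For (ii), introduce the feature-side set $\mathcal{D}':=\mathrm{span}\{\Psi(\Theta)\phi:\Theta\in\Pi,\phi\in\mathcal{H}\}\subset\mathcal{H}'$. By definition of $W$ and of $K$, $[W(\Psi(\Theta)\phi)](\Theta')=\Psi(\Theta')^\dagger\Psi(\Theta)\phi=K(\Theta',\Theta)\phi$, so $W$ sends generators of $\mathcal{D}'$ to generators of $\mathcal{D}$. For any $\psi=\sum_i\Psi(\Theta_i)\phi_i\in\mathcal{D}'$, combining the pre-inner-product formula on $\mathcal{D}$ with the adjoint identity on $\mathcal{H}'$ yields
\[
\|W\psi\|_{\mathcal{H}_K}^2=\sum_{i,j}\inpro{\phi_i,K(\Theta_i,\Theta_j)\phi_j}_{\mathcal{H}}=\sum_{i,j}\inpro{\Psi(\Theta_i)\phi_i,\Psi(\Theta_j)\phi_j}_{\mathcal{H}'}=\|\psi\|_{\mathcal{H}'}^2,
\]
so $W|_{\mathcal{D}'}$ is well-defined (the null vector maps to a null-norm element) and isometric, and extends by continuity to an isometry $\overline{W}:\overline{\mathcal{D}'}\to\mathcal{H}_K$ whose image contains the dense set $\mathcal{D}$, hence equals all of $\mathcal{H}_K$.

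For (iii), $\psi\in\ker W$ iff $\Psi(\Theta)^\dagger\psi=0$ for every $\Theta$, which by pairing against every $\phi\in\mathcal{H}$ is equivalent to $\psi\perp\Psi(\Theta)\phi$ for all $(\Theta,\phi)$; thus $\ker W=(\overline{\mathcal{D}'})^\perp$ and $(\ker W)^\perp=\overline{\mathcal{D}'}$. For arbitrary $\psi\in\mathcal{H}'$ write $\psi=\psi_\parallel+\psi_\perp$ with $\psi_\parallel\in\overline{\mathcal{D}'}$ and $\psi_\perp\in\ker W$; then $W\psi=\overline{W}\psi_\parallel\in\mathcal{H}_K$ and $\|W\psi\|_{\mathcal{H}_K}=\|\psi_\parallel\|_{\mathcal{H}'}$, which simultaneously gives $\mathrm{range}(W)\subseteq\mathcal{H}_K$, the isometric action of $W$ on $(\ker W)^\perp$, the vanishing on $\ker W$, and the surjectivity onto $\mathcal{H}_K$---precisely the statement that $W$ is a partial isometry from $\mathcal{H}'$ onto $\mathcal{H}_K$. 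I expect the only delicate point to be a correct invocation of operator-valued Moore--Aronszajn so that the pre-inner product on $\mathcal{D}$ is well-defined and positive definite; this reduces exactly to the computation in (i), after which everything else is algebra with adjoints and the reproducing property.
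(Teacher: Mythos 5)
Your proof is correct. Note that the paper does not actually prove this proposition---it is imported verbatim from \citet{brault2016random} as a known fact---so there is no in-paper argument to compare against; your write-up is the standard feature-map argument for operator-valued kernels (positivity of $K$ via the adjoint identity, isometry of $W$ on the span of $\{\Psi(\Theta)\phi\}$ matching the pre-inner product on the span of the kernel sections, and the orthogonal decomposition $\mathcal{H}'=\overline{\mathcal{D}'}\oplus\ker W$ yielding the partial isometry onto $\mathcal{H}_K$), and each step checks out. The only point you gloss over, harmlessly, is that the continuous extension $\overline{W}$ agrees with the pointwise-defined $W$ on all of $\overline{\mathcal{D}'}$; this follows since convergence in $\mathcal{H}'$ pushes through the bounded operator $\Psi(\Theta)^\dagger$ while convergence in $\mathcal{H}_K$ implies pointwise convergence, so the two limits coincide as functions.
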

\begin{remark}[Decomposable kernel]
	In practice, one can use {\it decomposable kernel} ~\citep{brault2016random}; 
	when the kernel $K$ is given by $K(\Theta_1,\Theta_2)=k(\Theta_1,\Theta_2)A$ for some scalar-valued kernel $k(\Theta_1,\Theta_2)$ and for positive semi-definite operator $A\in\mathcal{L}(\mathcal{H},\mathcal{H})$, the kernel $K$ is called decomposable kernel.
	For an RKHS of a decomposable kernel $K$, \eqref{rksheq} becomes
	\begin{align}
	\mathscr{K}(\Theta)\phi=(\zeta(\Theta)^\dagger\otimes B)\psi,\nonumber
	\end{align}
	where $\zeta:\Pi\rightarrow\mathcal{H}''$ is known ($\mathcal{H}''$ is some Hilbert space), and $A=BB^\dagger$. 
	Further, to use RFFs, one considers a shift-invariant kernel $k(\Theta_1,\Theta_2)$.
\end{remark}

\section{Some definitions of the values}
\label{app:definition}
The value $J^{\Theta}(X^t_0;M;c^t)$ in Algorithm \ref{alg:algorithm} is defined by
\begin{align}
J^{\Theta}(X^t_0;M;c^t) :=  \sum_{n=0}^{N^t-1}\Exp_{\Omega_\Theta} \left[ \sum_{h=0}^{H^t_n-1} c^t(x^t_{h,n})  \Big|  \Theta, M, x^t_{0,n} \right],\nonumber
\end{align}
where the expectation is taken over the trajectory following $\boldsymbol{\phi}_{x_{h+1}}=[\Psi(\Theta)^\dagger\circ M]\boldsymbol{\phi}_{x_h}+\epsilon(\omega)$.
Also the confidence ball at $t$ is given by
\begin{align}
\textsc{Ball}_M^{t}:=\left\{M\bigg\vert\left\|(\Sigma_{\mathscr{A}}^t)^{\frac{1}{2}}\left(M-\overline{M}^t\right)\right\|^2\leq \beta_{M}^t\right\}\cap\textsc{Ball}_M^{0},~~
\Sigma_{\mathscr{A}}^t :=\lambda I +\sum_{\tau=0}^{t-1}\sum_{n=0}^{N^\tau-1}\sum_{h=0}^{H^\tau_n-1}{\mathscr{A}^\tau_{h,n}}^\dagger{\mathscr{A}^\tau_{h,n}},\nonumber
\end{align}
and $\Sigma_{\mathscr{A}}^0:=\lambda I$,
where $\beta_{M}^t :=  20\sigma^2\left(d_\phi+ \log\left(t \frac{\det(\Sigma_{\mathscr{A}}^t)}{\det(\Sigma_{\mathscr{A}}^0)} \right) \right)$ and
\begin{align}
\overline{M}^t:=\argmin_{M}\left[\sum_{\tau=0}^{t-1}\sum_{n=0}^{N^\tau-1}\sum_{h=0}^{H^\tau_n-1}\left\|\boldsymbol{\phi}_{x^\tau_{h+1,n}}-\mathscr{A}^\tau_{h,n}(M)\right\|_{\R^{d_\phi}}^2+\lambda\left\|M\right\|_{\rm HS}^2\right].\nonumber
\end{align}

Similar to the work~\citep{kakade2020information}, we define the expected maximum information gains as:
\begin{align}
\gamma_{T,\mathscr{A}}(\lambda)  := 
2\max_{\mathcal{A}}  \Exp_\mathcal{A} \left[\log \left(\frac{\det \left(\Sigma_{\mathscr{A}}^{T}\right)
}{\det \left(\Sigma_{\mathscr{A}}^{0})\right)} \right) \right].\nonumber
\end{align}
Here, $\det$ is a properly defined functional determinant of a bounded linear operator.
Further,
\begin{align}
\gamma_{2,T,\mathscr{A}}(\lambda)  := 
2\max_{\mathcal{A}}  \Exp_\mathcal{A} \left[\left(\log \left(\frac{\det \left(\Sigma_{\mathscr{A}}^{T}\right)
}{\det \left(\Sigma_{\mathscr{A}}^{0})\right)} \right)\right)^2 \right].\nonumber
\end{align}
Also, we define
\begin{align}
\Sigma_{\mathscr{B}}^t :=\lambda I +\sum_{\tau=0}^{t-1}{\mathscr{B}^\tau}^\dagger \mathscr{B}^\tau,~~~\Sigma_{\mathscr{B}}^0:=\lambda I,~~
\gamma_{T,\mathscr{B}}(\lambda)  := 
2\max_{\mathcal{A}}  \Exp_\mathcal{A} \left[\log \left(\frac{\det \left(\Sigma_{\mathscr{B}}^{T}\right)}
{\det \left(\Sigma_{\mathscr{B}}^{0})\right)} \right) \right].\nonumber
\end{align}
\begin{lemma}
	\label{gainlemma}
	Assume that $\Psi(\Theta)\in\R^{d_\Psi\times d_\phi}$ and that $\|\mathscr{B}^t\|_{\rm HS}\leq B_{\mathscr{B}},~\|\mathscr{A}^t_{h,n}\|_{\rm HS}\leq B_{\mathscr{A}}$ for all $t\in[T]$, $n\in[N^t]$, and $h\in[H^t_n]$ and some $B_{\mathscr{B}}\ge 0$ and $B_{\mathscr{A}}\ge 0$.  Then, $\gamma_{T,\mathscr{A}}(\lambda)=O(d_\phi d_\Psi\log(1+THB_{\mathscr{A}}^2/\lambda))$, and $\gamma_{T,\mathscr{B}}(\lambda)=O(d_\phi d_\Psi\log(1+TB_{\mathscr{B}}^2/\lambda))$.
\end{lemma}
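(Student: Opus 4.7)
The plan is to reduce both bounds to the classical finite-dimensional elliptical potential / log-determinant calculation. Under the stated hypothesis $\Psi(\Theta)\in\R^{d_\Psi\times d_\phi}$, the Hilbert space $\mathcal{H}'$ can be identified with $\R^{d_\Psi}$, so $\mathcal{L}(\mathcal{H};\mathcal{H}')\cong\R^{d_\Psi\times d_\phi}\cong\R^{d_\phi d_\Psi}$ via vectorization. Consequently each $\mathscr{A}^t_{h,n}$ is a linear map from $\R^{d_\phi d_\Psi}$ into $\R^{d_\phi}$ and each $\mathscr{B}^t$ is a linear map from $\R^{d_\phi d_\Psi}$ into $\R^{d_\phi^2}$; both Gram-type operators ${\mathscr{A}^t_{h,n}}^\dagger\mathscr{A}^t_{h,n}$ and ${\mathscr{B}^t}^\dagger\mathscr{B}^t$ are positive semidefinite on the same $d_\phi d_\Psi$-dimensional space. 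Hence $\Sigma_{\mathscr{A}}^t$ and $\Sigma_{\mathscr{B}}^t$ are ordinary $d_\phi d_\Psi \times d_\phi d_\Psi$ positive definite matrices, and the functional determinants coincide with matrix determinants on this space.

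Given this reduction, I would write
\begin{align}
\log\frac{\det(\Sigma_{\mathscr{A}}^T)}{\det(\Sigma_{\mathscr{A}}^0)}
=\log\det\!\left(I+\lambda^{-1}\sum_{\tau=0}^{T-1}\sum_{n=0}^{N^\tau-1}\sum_{h=0}^{H^\tau_n-1}{\mathscr{A}^\tau_{h,n}}^\dagger\mathscr{A}^\tau_{h,n}\right),\nonumber
\end{align}
and apply the standard AM--GM bound $\log\det(I+A)\le d\log(1+\tr(A)/d)$, valid for any $A\succeq 0$ on a $d$-dimensional space, with $d=d_\phi d_\Psi$. The trace inside is
\begin{align}
\sum_{\tau,n,h}\tr\!\left({\mathscr{A}^\tau_{h,n}}^\dagger\mathscr{A}^\tau_{h,n}\right)
=\sum_{\tau,n,h}\|\mathscr{A}^\tau_{h,n}\|_{\rm HS}^{\,2}
\le T H B_{\mathscr{A}}^{2},\nonumber
\end{align}
using both the uniform Hilbert--Schmidt bound and the episode-length constraint $\sum_{n=0}^{N^t-1}H^t_n\le H$ from Assumption~\ref{assump:boundcost}, which caps the number of summands per episode. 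Substituting yields a pathwise bound $\log\det(\Sigma_{\mathscr{A}}^T)/\det(\Sigma_{\mathscr{A}}^0)\le d_\phi d_\Psi\log(1+T H B_{\mathscr{A}}^{2}/(d_\phi d_\Psi\lambda))$. Because this bound is independent of the realization of $\mathcal{A}$, taking expectations and the outer maximum is immediate and gives $\gamma_{T,\mathscr{A}}(\lambda)=\BigOh{d_\phi d_\Psi\log(1+T H B_{\mathscr{A}}^{2}/\lambda)}$.

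The argument for $\gamma_{T,\mathscr{B}}(\lambda)$ is structurally identical, with ${\mathscr{A}^\tau_{h,n}}^\dagger\mathscr{A}^\tau_{h,n}$ replaced by ${\mathscr{B}^\tau}^\dagger\mathscr{B}^\tau$ and with only $T$ summands rather than up to $TH$; the total trace is then bounded by $TB_{\mathscr{B}}^{2}$, yielding the claimed $\BigOh{d_\phi d_\Psi\log(1+T B_{\mathscr{B}}^{2}/\lambda)}$. I do not anticipate any substantive obstacle; the only bookkeeping point to verify is that $\Sigma_{\mathscr{A}}^t-\lambda I$ and $\Sigma_{\mathscr{B}}^t-\lambda I$ have range contained in the $d_\phi d_\Psi$-dimensional image of the relevant adjoints, so the ``functional determinant'' in the definition is well-posed and agrees with the ordinary determinant after vectorization. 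Once this identification is in place, the lemma is a one-step application of the elliptical potential inequality.
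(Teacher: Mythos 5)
Your proposal is correct and follows essentially the same route as the paper: the paper likewise reduces the claim to the trace bound $\tr\bigl(\sum_{t,n,h}{\mathscr{A}^t_{h,n}}^\dagger\mathscr{A}^t_{h,n}\bigr)\le THB_{\mathscr{A}}^2$ via the Hilbert--Schmidt norm and then invokes the standard log-determinant lemma (Lemma C.5 of \citealp{kakade2020information}), which is exactly the $\log\det(I+A)\le d\log(1+\tr(A)/d)$ step you carry out by hand with $d=d_\phi d_\Psi$. Your version is simply a more self-contained rendering of the same argument, with the dimension-counting and the well-posedness of the functional determinant made explicit.
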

\begin{proof}
	For $\gamma_{T,\mathscr{A}}(\lambda)$, from the definition of Hilbert-Schmidt norm, we have
	\begin{align}
	\tr{\left(\sum_{t=0}^{T-1}\sum_{n=0}^{N^t-1}\sum_{h=0}^{H^t_n-1}{\mathscr{A}^t_{h,n}}^\dagger{\mathscr{A}^t_{h,n}}\right)}\leq THB_{\mathscr{A}}^2,\nonumber
	\end{align}
	and the result follows from ~\citep[Lemma C.5]{kakade2020information}.
	The similar argument holds for $\gamma_{T,\mathscr{B}}(\lambda)$ too. 
\end{proof}

\section{Formal argument on the limitation of cumulative costs}
\label{app:formalarg}
In this section, we present a formal argument on the limitation of cumulative costs, mentioned in Section \ref{subsec:fwname}.  To this end, we give the following proposition.
\begin{prop}
	\label{prop2}
	Let $\mathcal{X} = \R$.  
	Consider the set $\mathcal{S}_f$ of dynamics converging to some point in $\mathcal{X}$.
	Then, for any choice of $\upsilon\in(0,1]$, set-valued map $\mathscr{S}: \mathcal{X} \rightarrow 2^{\mathcal{X}}\setminus\emptyset$, and cost $\mathbf{c}$ of the form
	\begin{align}
	\mathbf{c}(x,y)=\begin{cases}
	0&(y\in \mathscr{S}(x)),\\
	1&({\rm otherwise}),
	\end{cases}\nonumber
	\end{align}
	we have $\{\mathcal{F}\} \neq \mathcal{S}_f$.
	Here, $\{\mathcal{F}\}$ are given by 
	\begin{align}
	\bigcap_{x_0\in\mathcal{X}}
	\left\{ \argmin_{\mathcal{F}\text{\rm :r.d.s.}} \Exp_{\Omega}\sum_{h=0}^\infty \upsilon^{h}\mathbf{c}\left(\mathcal{F}(h,\omega,x_0),\mathcal{F}(h+1,\omega,x_0)\right)\right\}.\nonumber
	\end{align}
\end{prop}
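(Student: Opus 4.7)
My plan is to argue by contradiction: assume $\{\mathcal{F}\}=\mathcal{S}_f$ and push the setting all the way down to $\mathbf{c}\equiv 0$, from which any single divergent r.d.s.\ produces the desired contradiction. The construction that drives the argument is the family of \emph{jump-and-stay} dynamics indexed by $x^{\star}\in\mathcal{X}$: set $\mathcal{F}^{x^{\star}}(0,\omega,y)=y$ and $\mathcal{F}^{x^{\star}}(h,\omega,y)=x^{\star}$ for $h\geq 1$. Each $\mathcal{F}^{x^{\star}}$ is a deterministic r.d.s.\ that satisfies the semigroup identity trivially and converges to $x^{\star}$ from every initial state, so $\mathcal{F}^{x^{\star}}\in\mathcal{S}_f$ for every $x^{\star}\in\mathcal{X}$.

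I would first observe that the discounted optimum is zero everywhere. By the axiom of choice pick a selector $f\colon\mathcal{X}\to\mathcal{X}$ of $\mathscr{S}$; the deterministic r.d.s.\ $\mathcal{F}_f(h,\omega,x)=f^{h}(x)$ satisfies $\mathbf{c}(f^{h}(x_0),f^{h+1}(x_0))=0$ at every step since $f^{h+1}(x_0)\in\mathscr{S}(f^{h}(x_0))$, so its cumulative cost from any $x_0$ is zero. Hence the minimum value satisfies $V^{\star}(x_0)=0$ for all $x_0\in\mathcal{X}$ and every $\upsilon\in(0,1]$.

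Under the contradiction hypothesis, every $\mathcal{F}^{x^{\star}}$ lies in $\{\mathcal{F}\}$ and must attain $V^{\star}(x_0)=0$ at every starting state. A direct computation along the trajectory $x_0,x^{\star},x^{\star},\ldots$ gives
\[
V_{\mathcal{F}^{x^{\star}}}(x_0) \;=\; \mathbf{c}(x_0,x^{\star}) + \mu\,\mathbf{c}(x^{\star},x^{\star}),\qquad \mu:=\sum_{h\geq 1}\upsilon^{h},
\]
with $\mu=\infty$ and the convention $\infty\cdot 0=0$ when $\upsilon=1$. Equating this to zero and using that each summand is nonnegative forces $\mathbf{c}(x_0,x^{\star})=0$ for every $x_0,x^{\star}\in\mathcal{X}$, i.e.\ $\mathscr{S}(x_0)=\mathcal{X}$ for every $x_0$, and consequently $\mathbf{c}\equiv 0$.

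Once $\mathbf{c}\equiv 0$, \emph{every} r.d.s.\ lies in $\{\mathcal{F}\}$. But the shift $\mathcal{G}(h,\omega,y)=y+h$ is an r.d.s.\ whose orbit $y,y+1,y+2,\ldots$ diverges from every initial state, so $\mathcal{G}\in\{\mathcal{F}\}\setminus\mathcal{S}_f$, contradicting $\{\mathcal{F}\}=\mathcal{S}_f$ and completing the proof. The main technical obstacle I anticipate is the $\upsilon=1$ endpoint: one must (i) verify that the selector-based $\mathcal{F}_f$ still produces a well-defined zero cumulative sum and (ii) interpret the product $\infty\cdot\mathbf{c}(x^{\star},x^{\star})$ consistently so that the nonnegativity-plus-indicator step survives. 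Once those conventions are fixed, the argument is uniform across $\upsilon\in(0,1]$, and everything else reduces to indicator-function bookkeeping that does not depend on any structure of $\mathscr{S}$ beyond its nonemptiness.
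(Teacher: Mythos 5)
Your proof is correct and follows essentially the same route as the paper's: both argue by contradiction, use a selector of $\mathscr{S}$ to show the optimal discounted value is $0$ from every initial state, and exhibit convergent dynamics that make a transition outside $\mathscr{S}$ (you use the constant ``jump-and-stay'' maps where the paper uses a one-point perturbation of the identity), with the degenerate case $\mathbf{c}\equiv 0$ handled by a divergent dynamics that would then be optimal. The only difference is organizational --- you force $\mathbf{c}\equiv 0$ first and then contradict, while the paper splits into the two cases up front --- so the mathematical content coincides.
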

\begin{proof}
Assume there exist $\upsilon$, a set-valued map $\mathscr{S}$, and $\mathbf{c}$ such that the set $\{\mathcal{F}^\Theta\}=\mathcal{S}_f$. 
Then, because $\mathcal{S}_f$ is strictly smaller than the set of arbitrary dynamics, it must be the case that
$\exists x_0\in\mathcal{X},~\exists y_0\in\mathcal{X},~~y_0\notin\mathscr{S}(x_0)$.
However, the dynamics $f^\star$, where $f^\star(x_0)=y_0$ and $f^\star(x)=x,~\forall x\in\mathcal{X}\setminus\{x_0\}$, is an element of $\mathcal{S}_f$.
Therefore, the proposition is proved.
\end{proof}
\begin{remark}[Interpretation of Proposition \ref{prop2}]
	Proposition \ref{prop2} intuitively says that the set of ``stable dynamics'' cannot be determined by specifying every single transition.  This is because, the dynamics that does not follow any pre-specified transition can always be ``stable''.
\end{remark}

\section{Proof of Lemma \ref{veclemma}}
\label{app:veclemma}
It is easy to see that one can define a map $M_0$ so that it satisfies $M_0(\phi)=\psi$ such that
Assumption \ref{assump:RKHS} holds.
Define 
\begin{align}
M^\star:=PM_0,\nonumber
\end{align}
where $P$ is the orthogonal projection operator onto the sum space of the orthogonal complement of the null space of $\Psi(\Theta)^\dagger$ for all $\Theta\in\Pi$, namely, 
\begin{align}
\overline{\sum_{\Theta\in\Pi}{\rm ker}(\Psi(\Theta)^\dagger)^\perp}.\nonumber
\end{align}
Also, define $P_\Theta$ by the orthogonal projection onto 
\begin{align}
    {\rm ker}(\Psi(\Theta)^\dagger)^\perp.\nonumber
\end{align}
Then, for any $\Theta\in\Pi$, we obtain
\begin{align}
    P_\Theta M_0={\Psi(\Theta)^\dagger}^{+}\mathscr{K}(\Theta),\nonumber
\end{align}
where $\mathscr{A}^{+}$ is the pseudoinverse of the operator $\mathscr{A}$, and $P_\Theta M_0$ is linear.
Let $a,b\in\R$ and $\phi_1,\phi_2\in\mathcal{H}$, and define $\tilde{\psi}$ by
\begin{align}
    \tilde{\psi}:=PM_0(a\phi_1+b\phi_2)-aPM_0(\phi_1)-bPM_0(\phi_2).\nonumber
\end{align}
Because $P_\Theta PM_0=P_\Theta M_0$, it follows that
$P_\Theta\tilde{\psi}=0$ for all $\Theta\in\Pi$, which implies 
\begin{align}
\tilde{\psi}\in\bigcap_{\Theta\in\Pi}{\rm ker}P_\Theta=\bigcap_{\Theta\in\Pi}{\rm ker}(\Psi(\Theta)^\dagger).\nonumber
\end{align}
On the other hand, we have
\begin{align}
    \tilde{\psi}\in\overline{\sum_{\Theta\in\Pi}{\rm ker}(\Psi(\Theta)^\dagger)^\perp}.\nonumber
\end{align}
Therefore, it follows that $\tilde{\psi}=0$, which proves
that $M^\star$ is linear.

\section{Proof of Proposition \ref{jordan}}
Fix $\Theta\in\Pi$ and
suppose that the eigenvalues $\lambda_i$ ($i\in\mathcal{I}:=\{1,2,\ldots,d_\phi\}$) of $\mathscr{K}(\Theta)$ are in descending order according to 
their absolute values, i.e., $|\lambda_1|\geq|\lambda_2|\geq\ldots\geq|\lambda_{d_\phi}|$.
Given $\mathscr{A}\in\mathcal{L}(\mathcal{H};\mathcal{H})$,
suppose also that the eigenvalues $\mu_i$ ($i\in\mathcal{I}$) of $\mathscr{A}$ are in descending order according to 
their absolute values.
Let 
\begin{align}
\kappa_\Theta:={\inf_{\mathcal{Q}(\Theta)}{\left\{\|\mathcal{Q}(\Theta)\|\|\mathcal{Q}(\Theta)^{-1}\|:\mathcal{Q}(\Theta)^{-1}\mathscr{K}(\Theta)\mathcal{Q}(\Theta)=\mathscr{J}(\Theta)\right\}}},\nonumber
\end{align}
where $\mathscr{J}(\Theta)$ is a Jordan canonical form of $\mathscr{K}(\Theta)$ transformed by a nonsingular matrix $\mathcal{Q}(\Theta)$.
Also, let
\begin{align}
\overline{\kappa}_\Theta:=\max_{m\in\{1,\ldots,d_\phi\}}\left\{\kappa_\Theta^{\frac{1}{m}}\right\}.\nonumber
\end{align}
Then, by~\citep[Corollary 2.3]{song2002note}, we have
\begin{align}
	&||\mu_i|-|\lambda_i||\leq|\mu_i-\lambda_i|\leq\sum_i|\mu_i-\lambda_i|\leq\sum_i|\mu_{\pi(i)}-\lambda_i|\nonumber\\
	&\leq d_\phi\sqrt{d_\phi}(1+\sqrt{d_\phi-p})\max\left\{\kappa_\Theta\sqrt{d_\phi}\|\mathscr{A}-\mathscr{K}(\Theta)\|,(\kappa_\Theta\sqrt{d_\phi})^{\frac{1}{m}}\|\mathscr{A}-\mathscr{K}(\Theta)\|^{\frac{1}{m}}\right\},\nonumber
\end{align}
for any $i\in\mathcal{I}$ and for some permutation $\pi$,
where $p\in\{1,2,\ldots,d_\phi\}$ is the number of the Jordan block of $\mathcal{Q}(\Theta)^{-1}\mathscr{K}(\Theta)\mathcal{Q}(\Theta)$ and $m\in\{1,2,\ldots,d_\phi\}$ is the order of the largest Jordan block.
Because $\sqrt{d_\phi-p}\leq\sqrt{d_\phi-1}$ for any $p\in\{1,2,\ldots,d_\phi\}$, and because

\begin{align}
\max_{m\in\{1,\ldots,d_\phi\}}\left[\sqrt{d_\phi}\cdot\max\{\sqrt{d_\phi},(\sqrt{d_\phi})^{\frac{1}{m}}\}\cdot\max\left\{\kappa_\Theta,\kappa_\Theta^{\frac{1}{m}}\right\}\right]\leq d_\phi\overline{\kappa}_\Theta,\nonumber
\end{align}
 it follows that
\begin{align}
&|\rho(\mathscr{A})-\rho(\mathscr{K}(\Theta))|=||\mu_1|-|\lambda_1||\nonumber\\
&\leq \overline{\kappa}_\Theta d_\phi^2(1+\sqrt{d_\phi-1})\max\left\{\|\mathscr{A}-\mathscr{K}(\Theta)\|,\|\mathscr{A}-\mathscr{K}(\Theta)\|^{\frac{1}{m}}\right\}.\nonumber
\end{align}
Since $\overline{\kappa}_\Theta < 1+ \kappa$, simple computations complete the proof.

\section{Regret analysis} 
\label{app:regret}
Throughout this section, suppose Assumptions \ref{asm1} to \ref{asm:moment_bound} hold.
Note that Assumption \ref{assump:const} is required for \texttt{OptDynamics}.
We first give the {\em positive operator norm bounding lemma} followed by another lemma based on it.

\begin{lemma}[Positive operator norm bounding lemma]
	\label{operatorlemma}
	Let $\mathcal{H}$ be a Hilbert space and $A_i,~B_i\in\mathcal{L}(\mathcal{H};\mathcal{H})~(i\in\{1,2,\ldots,n\})$.
	Assume, for all $i\in\{1,2,\ldots,n\}$, that $A_i$ is positive definite.  Also, assume $B_1$ is positive definite, and for all $i\in\{2,3,\ldots,n\}$, $B_i$ is positive semi-definite.
	Then, 
	\begin{align}
	\left\|\left(\sum_i B^{\frac{1}{2}}_iA_iB^{\frac{1}{2}}_i\right)^{-\frac{1}{2}}\left(\sum_i B_i\right)\left(\sum_i B^{\frac{1}{2}}_iA_iB^{\frac{1}{2}}_i\right)^{-\frac{1}{2}}\right\|\leq \max_{i}\left\|A_i^{-1}\right\|.\nonumber
	\end{align}
	If $A_iB_i=B_iA_i$ for all $i\in\{1,2,\ldots,n\}$, then 
	\begin{align}
	\left\|\left(\sum_i A_iB_i\right)^{-\frac{1}{2}}\left(\sum_i B_i\right)\left(\sum_i A_iB_i\right)^{-\frac{1}{2}}\right\|\leq \max_{i}\left\|A_i^{-1}\right\|.\nonumber
	\end{align}
\end{lemma}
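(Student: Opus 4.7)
The plan is to reduce both claims to a single scalar-level observation, namely that every $A_i$ is bounded below by the reciprocal of its own inverse's operator norm, and then push this through conjugations and sums.

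\medskip
\noindent\textbf{Step 1: Lower bound on each $A_i$.} For each $i$, since $A_i$ is self-adjoint positive definite,
$A_i \succeq \lambda_{\min}(A_i)\,I = \|A_i^{-1}\|^{-1}\,I.$
Setting $c := \max_i \|A_i^{-1}\|$, we obtain $A_i \succeq c^{-1} I$ uniformly in $i$.

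\medskip
\noindent\textbf{Step 2: Conjugation and summation.} For any self-adjoint $X \succeq Y$ and any $C \succeq 0$ with square root $C^{1/2}$ one has $C^{1/2} X C^{1/2} \succeq C^{1/2} Y C^{1/2}$. Applying this to $A_i \succeq c^{-1} I$ with $C = B_i$ (and its self-adjoint positive semi-definite square root $B_i^{1/2}$) yields
\[
B_i^{1/2} A_i B_i^{1/2} \;\succeq\; c^{-1} B_i.
\]
Summing over $i$ and writing $S := \sum_i B_i^{1/2} A_i B_i^{1/2}$ and $T := \sum_i B_i$, we obtain $S \succeq c^{-1} T$. Because $B_1$ is positive definite and $A_1 \succeq c^{-1} I$, the summand $B_1^{1/2} A_1 B_1^{1/2}$ is positive definite, so $S$ is positive definite and $S^{-1/2}$ is well defined.

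\medskip
\noindent\textbf{Step 3: From operator inequality to operator norm.} Multiplying $S \succeq c^{-1} T$ on both sides by $S^{-1/2}$ gives $I \succeq c^{-1} S^{-1/2} T S^{-1/2}$, i.e.\ $S^{-1/2} T S^{-1/2} \preceq c\,I$. Since $S^{-1/2} T S^{-1/2}$ is self-adjoint and positive semi-definite, this is equivalent to $\|S^{-1/2} T S^{-1/2}\| \le c = \max_i \|A_i^{-1}\|$, which is the first claim.

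\medskip
\noindent\textbf{Step 4: The commuting case.} If $A_i B_i = B_i A_i$, then $B_i^{1/2}$ (a norm-limit of polynomials in $B_i$) also commutes with $A_i$, hence
$B_i^{1/2} A_i B_i^{1/2} = A_i B_i^{1/2} B_i^{1/2} = A_i B_i.$
In particular each $A_i B_i$ is self-adjoint (and positive semi-definite, positive definite for $i=1$), so $\sum_i A_i B_i = S$ and the second inequality is identical to the first. The only technical point to keep in mind throughout is that in the general Hilbert-space setting $B_i^{1/2}$ is defined via the continuous functional calculus, but this already supplies the positive semi-definiteness, self-adjointness, and (in the commuting case) the required commutation with $A_i$, so no real obstacle arises.
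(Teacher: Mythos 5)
Your proof is correct and follows essentially the same route as the paper's: bound $A_i \succeq c^{-1}I$ with $c = \max_i\|A_i^{-1}\|$, conjugate by $B_i^{1/2}$, sum, and conjugate by the inverse square root of the sum. Your Step 4 merely spells out the commutation argument that the paper leaves as ``the second claim follows immediately,'' which is a welcome clarification but not a different approach.
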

\begin{proof}
	Let $c:=\max_{i}\left\|A_i^{-1}\right\|$.  Then, we have, for all $i$,
	\begin{align}
	I\preceq\left\|A_i^{-1}\right\|A_i\preceq cA_i,\nonumber
	\end{align}
	from which it follows that
	\begin{align}
	B_i\preceq cB^{\frac{1}{2}}_iA_iB^{\frac{1}{2}}_i.\nonumber
	\end{align}
	Therefore, we obtain
	\begin{align}
	\sum_iB_i\preceq c\sum_i B^{\frac{1}{2}}_iA_iB^{\frac{1}{2}}_i.\nonumber
	\end{align}
	From the assumptions, $\left(\sum_iB^{\frac{1}{2}}_iA_iB^{\frac{1}{2}}_i\right)^{-1}$ exists and
	\begin{align}
	\left(\sum_iB^{\frac{1}{2}}_iA_iB^{\frac{1}{2}}_i\right)^{-\frac{1}{2}}\left(\sum_iB_i\right)\left(\sum_iB^{\frac{1}{2}}_iA_iB^{\frac{1}{2}}_i\right)^{-\frac{1}{2}}\preceq cI,\nonumber
	\end{align}
	from which we obtain
	\begin{align}
	\left\|\left(\sum_iB^{\frac{1}{2}}_iA_iB^{\frac{1}{2}}_i\right)^{-\frac{1}{2}}\left(\sum_iB_i\right)\left(\sum_iB^{\frac{1}{2}}_iA_iB^{\frac{1}{2}}_i\right)^{-\frac{1}{2}}\right\|\leq c.\nonumber
	\end{align}
	The second claim follows immediately.
\end{proof}
\begin{lemma}
	\label{boundlemma}
	Suppose Assumptions \ref{asm1}, \ref{assump:RKHS}, and \ref{assump:boundcost} hold.
	Then, it follows that, for all $t\in[T]$,
	\begin{align}
	\left\|(\Sigma_{\mathscr{B}}^t)^{\frac{1}{2}}\left(M-{\overline{M}^t}\right)\right\|^2\leq (1+C^{-1})\left\|(\Sigma_{\mathscr{A}}^t)^{\frac{1}{2}}\left(M-\overline{M}^t\right)\right\|^2.\nonumber
	\end{align}

\end{lemma}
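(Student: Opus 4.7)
}

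The plan is to recast the claimed inequality as the operator inequality $\Sigma_{\mathscr{B}}^t \preceq (1+C^{-1})\Sigma_{\mathscr{A}}^t$ on the Hilbert space of test operators $\tilde M := M - \overline{M}^t$, and then verify it by expanding the two quadratic forms $\langle \tilde M, \Sigma_{\mathscr{A}}^t \tilde M\rangle$ and $\langle \tilde M, \Sigma_{\mathscr{B}}^t \tilde M\rangle$ and bounding term by term. I would first observe that the $\lambda I$ regularizer contributes the same $\lambda\|\tilde M\|_{\mathrm{HS}}^2$ to each side, so the work concentrates on the data-dependent pieces.

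Next, I would rewrite the data terms using the definitions of $\mathscr{A}^\tau_{h,n}$ and $\mathscr{B}^\tau$. Introducing the PSD operator
\[ Q_\tau := \tilde M^\dagger\, \Psi(\Theta^\tau)\Psi(\Theta^\tau)^\dagger\, \tilde M \in \mathcal{L}(\mathcal{H};\mathcal{H}), \]
one computes $\|\mathscr{A}^\tau_{h,n}(\tilde M)\|_{\R^{d_\phi}}^2 = \boldsymbol{\phi}_{x^\tau_{h,n}}^\dagger Q_\tau \boldsymbol{\phi}_{x^\tau_{h,n}} = \tr(\boldsymbol{\phi}_{x^\tau_{h,n}}\boldsymbol{\phi}_{x^\tau_{h,n}}^\dagger\, Q_\tau)$ and $\|\mathscr{B}^\tau(\tilde M)\|_{\mathrm{HS}}^2 = \tr(Q_\tau)$. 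Summing, and defining $\Phi_\tau := \sum_{n,h}\boldsymbol{\phi}_{x^\tau_{h,n}}\boldsymbol{\phi}_{x^\tau_{h,n}}^\dagger$, I arrive at
\[ \langle \tilde M, \Sigma_{\mathscr{A}}^t\tilde M\rangle = \lambda\|\tilde M\|_{\mathrm{HS}}^2 + \sum_{\tau=0}^{t-1} \tr(\Phi_\tau Q_\tau), \qquad \langle \tilde M, \Sigma_{\mathscr{B}}^t\tilde M\rangle = \lambda\|\tilde M\|_{\mathrm{HS}}^2 + \sum_{\tau=0}^{t-1}\tr(Q_\tau). \]

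The core step is then a trace inequality: for PSD $A,B$ with $A \succeq cI$, one has $\tr(AB) = \tr(B^{1/2}AB^{1/2}) \geq c\,\tr(B)$. Assumption \ref{assump:boundcost} gives $\sum_n \boldsymbol{\phi}_{x^\tau_{0,n}}\boldsymbol{\phi}_{x^\tau_{0,n}}^\dagger \succeq C I$, and since $\Phi_\tau$ only adds further PSD terms for $h\geq 1$, it follows that $\Phi_\tau \succeq C I$; applying the trace bound with $A=\Phi_\tau$, $B=Q_\tau$ yields $\tr(\Phi_\tau Q_\tau)\geq C\,\tr(Q_\tau)$ for every $\tau$. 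Summing and chaining gives
\[ (1+C^{-1})\langle \tilde M, \Sigma_{\mathscr{A}}^t\tilde M\rangle \;\geq\; \lambda\|\tilde M\|_{\mathrm{HS}}^2 + (1+C^{-1})\,C\sum_\tau\tr(Q_\tau) \;=\; \lambda\|\tilde M\|_{\mathrm{HS}}^2 + (C+1)\sum_\tau\tr(Q_\tau) \;\geq\; \langle \tilde M, \Sigma_{\mathscr{B}}^t\tilde M\rangle, \]
which is exactly the claim.

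I do not expect a real obstacle here beyond bookkeeping. The only mild subtlety is to recognize that the PSD operator $Q_\tau$ is the common factor on both sides, so that the ambient Hilbert-Schmidt-level inequality reduces to a scalar trace inequality against the Gram matrix $\Phi_\tau$ of the observed feature vectors. One could alternatively try to invoke Lemma \ref{operatorlemma} with $A_\tau = \Phi_\tau\otimes I$ and $B_\tau = I\otimes \Psi(\Theta^\tau)\Psi(\Theta^\tau)^\dagger$ (these commute and $\|A_\tau^{-1}\|\le C^{-1}$), but this route implicitly uses a tensor-product realization of $\mathcal{L}(\mathcal{H};\mathcal{H}')$ and is less clean than the direct trace argument above.
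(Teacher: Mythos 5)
Your proof is correct. The paper reaches the same conclusion by a slightly more abstract route: it introduces the right-composition operator $\mathscr{C}^\tau(M)=M\circ\bigl[\sum_{n,h}\boldsymbol{\phi}_{x^\tau_{h,n}}\boldsymbol{\phi}_{x^\tau_{h,n}}^\dagger\bigr]$, observes that $\sum_{n,h}{\mathscr{A}^\tau_{h,n}}^\dagger\mathscr{A}^\tau_{h,n}=\mathscr{C}^\tau\circ\bigl({\mathscr{B}^\tau}^\dagger\mathscr{B}^\tau\bigr)$ with the two factors commuting and $\|(\mathscr{C}^\tau)^{-1}\|\le C^{-1}$, and then invokes the auxiliary Lemma \ref{operatorlemma} to get $\bigl\|(\Sigma_{\mathscr{A}}^t)^{-\frac12}\Sigma_{\mathscr{B}}^t(\Sigma_{\mathscr{A}}^t)^{-\frac12}\bigr\|\le 1+C^{-1}$, finishing by submultiplicativity of the norm. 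Your version bypasses Lemma \ref{operatorlemma} entirely: testing both Gram operators against the single element $\tilde M$ and factoring out the common PSD matrix $Q_\tau$ collapses the whole comparison to the scalar trace inequality $\tr(\Phi_\tau Q_\tau)\ge C\,\tr(Q_\tau)$, which is exactly where the assumption $\sum_n\boldsymbol{\phi}_{x^\tau_{0,n}}\boldsymbol{\phi}_{x^\tau_{0,n}}^\dagger\succeq CI$ enters in the paper (there disguised as $\|(\mathscr{C}^\tau)^{-1}\|\le C^{-1}$). So the two arguments are the same in substance---your closing remark correctly identifies the paper's commuting pair---but yours is more self-contained and makes the role of the feature Gram matrix transparent, while the paper's buys a reusable operator lemma at the cost of an extra layer of abstraction. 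One shared (and harmless) convention worth being aware of: both proofs read $\|(\Sigma^t)^{1/2}(M-\overline{M}^t)\|$ as the norm of $M-\overline{M}^t$ in the Hilbert space of Hilbert--Schmidt operators, which is the reading consistent with the definition of the confidence ball.
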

\begin{proof}
	Under Assumptions \ref{asm1} and \ref{assump:RKHS}, define $\mathscr{C}^t\in\mathcal{L}\left(\mathcal{L}(\mathcal{H};\mathcal{H}');\mathcal{L}(\mathcal{H};\mathcal{H}')\right)$ by
	\begin{align}
	\mathscr{C}^t(M)=M\circ\left[\sum_{n=0}^{N^t-1}\sum_{h=0}^{H^t_n-1}\boldsymbol{\phi}_{x^t_{h,n}}\boldsymbol{\phi}_{x^t_{h,n}}^\dagger\right].\nonumber
	\end{align}
	Also, define $\mathscr{X}^t:=\sum_{n=0}^{N^t-1}\sum_{h=0}^{H^t_n-1}{\mathscr{A}_{h,n}^t}^\dagger\mathscr{A}_{h,n}^t$ and $\mathscr{Y}^t:={\mathscr{B}^t}^\dagger\mathscr{B}^t$.
	We have $\mathscr{C}^t\mathscr{Y}^t=\mathscr{Y}^t\mathscr{C}^t=\mathscr{X}^t$ (and thus $\mathscr{X}^t\mathscr{Y}^t=\mathscr{Y}^t\mathscr{X}^t$), 
	and
	\begin{align}
	\Sigma_{\mathscr{A}}^t=\lambda I +\sum_{\tau=0}^{t-1}\mathscr{X}^\tau,~~~~~\Sigma_{\mathscr{B}}^t=\lambda I +\sum_{\tau=0}^{t-1}\mathscr{Y}^\tau.\nonumber
	\end{align}
	From Assumption \ref{assump:boundcost}, we obtain, for all $t\in[T]$, $(\mathscr{C}^t)^{-1}$ exists and
	\begin{align}
	\|{(\mathscr{C}^t)}^{-1}\|\leq\left\|\left(\sum_{n=0}^{N^t-1}\sum_{h=0}^{H^t_n-1}\boldsymbol{\phi}_{x^t_{h,n}}\boldsymbol{\phi}_{x^t_{h,n}}^\dagger\right)^{-1}\right\|\leq
	\left\|\left(\sum_{n=0}^{N^t-1} \boldsymbol{\phi}_{x_{0,n}^t}\boldsymbol{\phi}_{x_{0,n}^t}^\dagger\right)^{-1}\right\|\leq C^{-1}.\nonumber
	\end{align}
	Therefore, using Lemma \ref{operatorlemma} by substituting $I$ and $\mathscr{C}$ to $A$, $\lambda I$ and $\mathscr{Y}$ to $B$, it follows that
	\begin{align}
	\left\|(\Sigma_{\mathscr{A}}^t)^{-\frac{1}{2}}\Sigma_{\mathscr{B}}^t(\Sigma_{\mathscr{A}}^t)^{-\frac{1}{2}}\right\|\leq\max\left\{1,\max_{\tau\in[t]}\{\|{(\mathscr{C}^\tau)}^{-1}\|\}\right\}\leq 1+C^{-1},\nonumber
	\end{align}

and 
	\begin{align}
	&\left\|(\Sigma_{\mathscr{B}}^t)^{\frac{1}{2}}\left(M-{\overline{M}^t}\right)\right\|^2
	=\left\|(\Sigma_{\mathscr{B}}^t)^{\frac{1}{2}}(\Sigma_{\mathscr{A}}^t)^{-\frac{1}{2}}(\Sigma_{\mathscr{A}}^t)^{\frac{1}{2}}\left(M-{\overline{M}^t}\right)\right\|^2\nonumber\\
	&\leq \left\|(\Sigma_{\mathscr{A}}^t)^{\frac{1}{2}}\left(M-\overline{M}^t\right)\right\|^2
	\left\|(\Sigma_{\mathscr{A}}^t)^{-\frac{1}{2}}(\Sigma_{\mathscr{B}}^t)^{\frac{1}{2}}\right\|^2
	=\left\|(\Sigma_{\mathscr{A}}^t)^{-\frac{1}{2}}\Sigma_{\mathscr{B}}^t(\Sigma_{\mathscr{A}}^t)^{-\frac{1}{2}}\right\|\left\|(\Sigma_{\mathscr{A}}^t)^{\frac{1}{2}}\left(M-\overline{M}^t\right)\right\|^2\nonumber\\
	&
	\leq (1+C^{-1})\left\|(\Sigma_{\mathscr{A}}^t)^{\frac{1}{2}}\left(M-\overline{M}^t\right)\right\|^2.\nonumber
	\end{align}
	Here, the second equality used $\|\mathscr{A}\|^2=\|\mathscr{A}\mathscr{A}^\dagger\|$.
\end{proof}
Now, let $\mathcal{E}^t$ be the event $M^\star\in\textsc{Ball}_M^{t}$.
Assume $\bigcap_{t=0}^{T-1}\mathcal{E}^t$.
Then, using Assumption \ref{asm:lambda},
\begin{align}
&\left(\Lambda[\mathscr{K}(\Theta^t)]+J^{\Theta^t}(X^t_0;c^t)\right)-
\left(\Lambda[\mathscr{K}(\Theta^{\star t})]+ J^{\Theta^{\star t}}(X^t_0;c^t)\right)\nonumber\\
&=\left(\Lambda[\mathscr{K}(\Theta^t)]-\Lambda[\mathscr{K}(\Theta^{\star t})]\right)+\left(J^{\Theta^t}(X^t_0;c^t)-J^{\Theta^{\star t}}(X^t_0;c^t)\right)\nonumber\\
&\leq \left(\Lambda[\mathscr{K}(\Theta^t)]-\Lambda[\mathscr{B}^t\circ \hat{M}^t]\right)+\left(J^{\Theta^t}(X^t_0;c^t)-J^{\Theta^t}\left(X^t_0;\hat{M}^t;c^t\right)\right)\nonumber\\
&\leq \left(\left|\Lambda[\mathscr{K}(\Theta^t)]-\Lambda[\mathscr{B}^t\circ \hat{M}^t]\right|\right)+\left(J^{\Theta^t}(X^t_0;c^t)-J^{\Theta^t}\left(X^t_0;\hat{M}^t;c^t\right)\right)\nonumber\\
&\leq \underbrace{ \min\left\{L\cdot\max{\left\{\left\|\mathscr{B}^t\left(M^\star-\hat{M}^t\right)\right\|^2,\left\|\mathscr{B}^t\left(M^\star-\hat{M}^t\right)\right\|^\alpha\right\}},2\Lambda_{\rm max}\right\}}_{\rm term_1}+\underbrace{\left(J^{\Theta^t}(X^t_0;c^t)-J^{\Theta^t}\left(X^t_0;\hat{M}^t;c^t\right)\right)}_{\rm term_2}. \label{regret1}
\end{align}
Here, the first inequality follows because we assumed $\mathcal{E}^t$ and because the algorithm selects $\hat{M}^t$ and $\Theta^t$ such that 
\begin{align}
\Lambda[\mathscr{B}^t\circ \hat{M}^t]+J^{\Theta^t}\left(X^t_0;\hat{M}^t;c^t\right)\leq\Lambda[\mathscr{B}^t\circ M]+J^{\Theta}\left(X^t_0;M;c^t\right)\nonumber
\end{align} 
for any $M\in\textsc{Ball}_{M}^{t}$ and for any $\Theta\in\Pi$.
The third inequality follows from Assumption \ref{asm:lambda}.

Using Lemma \ref{boundlemma}, we have
\begin{align}
&\left\|\mathscr{B}^t\left({M^\star}-{\hat{M}^t}\right)\right\|\leq  \left\|(\Sigma_{\mathscr{B}}^t)^{\frac{1}{2}}\left({M^\star}-{\hat{M}^t}\right)\right\|\left\|\mathscr{B}^t(\Sigma_{\mathscr{B}}^t)^{-\frac{1}{2}}\right\|\nonumber\\
&\leq \sqrt{(1+C^{-1})}\left\|(\Sigma_{\mathscr{A}}^t)^{\frac{1}{2}}\left(M^\star-\hat{M}^t\right)\right\|\left\|\mathscr{B}^t(\Sigma_{\mathscr{B}}^t)^{-\frac{1}{2}}\right\|\nonumber\\
&\leq  \sqrt{(1+C^{-1})}\left(\left\|(\Sigma_{\mathscr{A}}^t)^{\frac{1}{2}}\left(M^\star-\overline{M}^t\right)\right\|+\left\|(\Sigma_{\mathscr{A}}^t)^{\frac{1}{2}}\left(\overline{M}^t-\hat{M}^t\right)\right\|\right)
\left\|\mathscr{B}^t(\Sigma_{\mathscr{B}}^t)^{-\frac{1}{2}}\right\|\nonumber\\
&\leq 2\sqrt{(1+C^{-1})\beta_{M}^t}\left\|\mathscr{B}^t(\Sigma_{\mathscr{B}}^t)^{-\frac{1}{2}}\right\|~~~~~(\because \mathcal{E}^t).\nonumber
\end{align}
Therefore, if $\mathcal{E}^t$, it follows that
\begin{align}
{\rm term_1}\leq \min\left\{ L\left\{4(1+C^{-1})\beta_{M}^t+1\right\}\max\left\{\left\|\mathscr{B}^t(\Sigma_{\mathscr{B}}^t)^{-\frac{1}{2}}\right\|^2,\left\|\mathscr{B}^t(\Sigma_{\mathscr{B}}^t)^{-\frac{1}{2}}\right\|^\alpha\right\},2\Lambda_{\rm max}\right\}.  \label{bound1_2}
\end{align}
Then, we use the following lemma which is an extension of ~\citep[Lemman B.6]{kakade2020information} to our H\"{o}lder condition.
\begin{lemma}
	\label{psilogdet}
	For any sequence of $\mathscr{B}^t$ and for any $\alpha\in(0,1]$, we have
	\begin{align}
	\sum_{t=0}^{T-1}\min\left\{\left\|\mathscr{B}^t(\Sigma_{\mathscr{B}}^t)^{-\frac{1}{2}}\right\|^{2\alpha},1\right\}\leq 2T^{1-\alpha}\left[1+\log \left(\frac{\det \left(\Sigma_{\mathscr{B}}^{T}\right)}
	{\det \left(\Sigma_{\mathscr{B}}^{0}\right)} \right)\right].\nonumber
	\end{align}
\end{lemma}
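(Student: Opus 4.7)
The plan is to reduce the general exponent $\alpha\in(0,1]$ to the $\alpha=1$ case already established by Lemma B.6 of~\citet{kakade2020information}, and then clean up constants. Write $a_t := \|\mathscr{B}_t(\Sigma_{\mathscr{B}}^t)^{-1/2}\|^{2}\ge 0$. Since $x\mapsto x^\alpha$ is monotone on $[0,\infty)$ and $1^\alpha=1$, we have $\min\{a_t,1\}^\alpha = \min\{a_t^\alpha,1\}$, so the quantity to be bounded equals $\sum_{t}\min\{a_t,1\}^\alpha$.

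First, I would invoke the base elliptical--potential bound
\begin{align}
\sum_{t=0}^{T-1}\min\{a_t,1\}\ \leq\ 2\log\!\left(\frac{\det(\Sigma_{\mathscr{B}}^T)}{\det(\Sigma_{\mathscr{B}}^0)}\right),\nonumber
\end{align}
which follows from the multiplicative determinant identity $\log\det(\Sigma_{\mathscr{B}}^{t+1})-\log\det(\Sigma_{\mathscr{B}}^{t})=\log\det\!\bigl(I+(\Sigma_{\mathscr{B}}^t)^{-1/2}\mathscr{B}_t^\dagger\mathscr{B}_t(\Sigma_{\mathscr{B}}^t)^{-1/2}\bigr)$ together with the elementary inequality $\min\{y,1\}\le 2\log(1+y)$ applied to the largest eigenvalue of $(\Sigma_{\mathscr{B}}^t)^{-1/2}\mathscr{B}_t^\dagger\mathscr{B}_t(\Sigma_{\mathscr{B}}^t)^{-1/2}$ (whose operator norm is $a_t$). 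Nothing in this step really changes when $\mathscr{B}_t$ is a Hilbert--Schmidt operator rather than a finite matrix, since $\mathscr{B}_t^\dagger\mathscr{B}_t$ is trace-class and the functional determinant is well defined.

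Next, the power $\alpha$ is handled by Jensen's inequality: since $x\mapsto x^\alpha$ is concave on $[0,\infty)$ for $\alpha\in(0,1]$,
\begin{align}
\frac{1}{T}\sum_{t=0}^{T-1}\min\{a_t,1\}^\alpha\ \leq\ \left(\frac{1}{T}\sum_{t=0}^{T-1}\min\{a_t,1\}\right)^{\!\alpha},\nonumber
\end{align}
which rearranges to $\sum_{t}\min\{a_t,1\}^\alpha\le T^{1-\alpha}\bigl(\sum_{t}\min\{a_t,1\}\bigr)^{\alpha}$. Substituting the base bound and using $y^\alpha\le 1+y$ for $y\ge 0,\ \alpha\in(0,1]$ (split into $y\le 1$, giving $y^\alpha\le 1$, and $y\ge 1$, giving $y^\alpha\le y$), we obtain
\begin{align}
\left(2\log\!\frac{\det(\Sigma_{\mathscr{B}}^T)}{\det(\Sigma_{\mathscr{B}}^0)}\right)^{\!\alpha}\ \leq\ 1+2\log\!\frac{\det(\Sigma_{\mathscr{B}}^T)}{\det(\Sigma_{\mathscr{B}}^0)}\ \leq\ 2\!\left[1+\log\!\frac{\det(\Sigma_{\mathscr{B}}^T)}{\det(\Sigma_{\mathscr{B}}^0)}\right],\nonumber
\end{align}
and multiplying through by $T^{1-\alpha}$ yields the claimed inequality.

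The argument is almost entirely mechanical: the only genuine idea is the Jensen reduction to the known $\alpha=1$ elliptical--potential inequality, and the only thing that requires mild care is the constant bookkeeping at the end, ensuring the final prefactor collapses to exactly $2$ rather than something like $1+2^{\alpha}$. There is no dimension dependence hiding anywhere, so porting the standard finite-dimensional potential lemma to the operator setting used in the paper is immediate.
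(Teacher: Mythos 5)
Your proof is correct and follows essentially the same route as the paper's: both rest on $\min\{y,1\}\le 2\log(1+y)$, the telescoping $\log\det$ identity, a concavity (power-mean/Jensen) step to extract the $T^{1-\alpha}$ factor, and $y^\alpha\le 1+y$ to clean up the constant. The only cosmetic differences are that you telescope first and then apply Jensen to the full sum (the paper applies the power-mean inequality to the per-step increments and then telescopes), and you bound the operator norm by the top eigenvalue of $(\Sigma_{\mathscr{B}}^t)^{-1/2}\mathscr{B}_t^\dagger\mathscr{B}_t(\Sigma_{\mathscr{B}}^t)^{-1/2}$ directly where the paper detours through the Hilbert--Schmidt norm and the trace; both are valid.
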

\begin{proof}
	Using $x\leq 2\log(1+x)$ for $x\in[0,1]$,
	\begin{align}
	&\sum_{t=0}^{T-1}\min\left\{\left\|\mathscr{B}^t(\Sigma_{\mathscr{B}}^t)^{-\frac{1}{2}}\right\|^{2\alpha},1\right\}
	\leq \sum_{t=0}^{T-1}\min\left\{\left\|\mathscr{B}^t(\Sigma_{\mathscr{B}}^t)^{-\frac{1}{2}}\right\|_{\rm HS}^{2\alpha},1\right\}
	~~(\because \|\mathscr{A}\|\leq\|\mathscr{A}\|_{\rm HS})
	\nonumber\\
	&
	= \sum_{t=0}^{T-1}\left(\min\left\{\left\|\mathscr{B}^t(\Sigma_{\mathscr{B}}^t)^{-\frac{1}{2}}\right\|_{\rm HS}^{2},1\right\}\right)^{\alpha}
	\leq \sum_{t=0}^{T-1}\left[2\log{\left(1+\left\|\mathscr{B}^t (\Sigma_{\mathscr{B}}^t)^{-\frac{1}{2}}\right\|_{\rm HS}^{2}\right)}\right]^\alpha\nonumber\\
	&
	=\sum_{t=0}^{T-1}\left[2\log{\left(1+{\tr{\left\{(\Sigma_{\mathscr{B}}^t)^{-\frac{1}{2}}{\mathscr{B}^t}^\dagger\mathscr{B}^t (\Sigma_{\mathscr{B}}^t)^{-\frac{1}{2}}\right\}}}\right)}\right]^\alpha\nonumber\\
	&\leq 2^\alpha \sum_{t=0}^{T-1}\left[\log\det{\left(I+(\Sigma_{\mathscr{B}}^t)^{-\frac{1}{2}}{\mathscr{B}^t}^\dagger\mathscr{B}^t (\Sigma_{\mathscr{B}}^t)^{-\frac{1}{2}}\right)}\right]^\alpha\nonumber\\
	& \leq 2^\alpha T^{1-\alpha} \left[\sum_{t=0}^{T-1}\log\det{\left(I+(\Sigma_{\mathscr{B}}^t)^{-\frac{1}{2}}{\mathscr{B}^t}^\dagger\mathscr{B}^t (\Sigma_{\mathscr{B}}^t)^{-\frac{1}{2}}\right)}\right]^\alpha
	\nonumber\\
	&\leq 2^\alpha T^{1-\alpha}\left[\sum_{t=0}^{T-1}\left(\log\det\left(\Sigma_{\mathscr{B}}^{t+1}\right)-\log\det \left(\Sigma_{\mathscr{B}}^{t}\right)\right)\right]^\alpha
	\leq 2T^{1-\alpha}\left[\log \left(\frac{\det \left(\Sigma_{\mathscr{B}}^{T}\right)}
	{\det \left(\Sigma_{\mathscr{B}}^{0}\right)} \right)\right]^\alpha\nonumber\\
	&\leq 2T^{1-\alpha}\left(1+\log \left(\frac{\det \left(\Sigma_{\mathscr{B}}^{T}\right)}
	{\det \left(\Sigma_{\mathscr{B}}^{0}\right)} \right)\right).\nonumber
	\end{align}
\end{proof}
Here, the fifth inequality follows from ~\citep[{\rm Exercise~1.1.4}]{grafakos2008classical}.

\begin{cor}
	\label{alphacorr}
	For any sequence of $\mathscr{B}^t$ and for any $\alpha\in(0,1]$, we have
	\begin{align}
	\sum_{t=0}^{T-1}\min\left\{\max\left\{\left\|\mathscr{B}^t(\Sigma_{\mathscr{B}}^t)^{-\frac{1}{2}}\right\|^{4},\left\|\mathscr{B}^t(\Sigma_{\mathscr{B}}^t)^{-\frac{1}{2}}\right\|^{2\alpha}\right\},1\right\}\leq 2T^{1-\alpha}\left[1+\log \left(\frac{\det \left(\Sigma_{\mathscr{B}}^{T}\right)}
	{\det \left(\Sigma_{\mathscr{B}}^{0}\right)} \right)\right].\nonumber
	\end{align}
\end{cor}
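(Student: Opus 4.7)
The plan is to reduce Corollary \ref{alphacorr} directly to Lemma \ref{psilogdet} by showing that, term by term, the summand on the left-hand side is equal to (not merely bounded by) the corresponding summand $\min\{\|\mathscr{B}_t(\Sigma_{\mathscr{B}}^t)^{-1/2}\|^{2\alpha},1\}$ appearing in the lemma. Once this pointwise identity is established, the bound follows verbatim from Lemma \ref{psilogdet}.

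To prove the pointwise identity, write $x_t:=\|\mathscr{B}_t(\Sigma_{\mathscr{B}}^t)^{-1/2}\|\ge 0$ and do a two-case analysis using the hypothesis $\alpha\in(0,1]$, which ensures $2\alpha\le 2<4$. If $x_t\le 1$, then because $2\alpha\le 4$, we have $x_t^{4}\le x_t^{2\alpha}\le 1$, so $\max\{x_t^{4},x_t^{2\alpha}\}=x_t^{2\alpha}\le 1$, and therefore
\begin{align*}
\min\bigl\{\max\{x_t^{4},x_t^{2\alpha}\},1\bigr\} \;=\; x_t^{2\alpha} \;=\; \min\{x_t^{2\alpha},1\}.
\end{align*}
If instead $x_t>1$, then $x_t^{4}>x_t^{2\alpha}>1$, so $\max\{x_t^{4},x_t^{2\alpha}\}=x_t^{4}>1$, giving
\begin{align*}
\min\bigl\{\max\{x_t^{4},x_t^{2\alpha}\},1\bigr\} \;=\; 1 \;=\; \min\{x_t^{2\alpha},1\}.
\end{align*}
In both cases the two quantities coincide.

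Summing the resulting identity over $t\in\{0,1,\dots,T-1\}$ converts the left-hand side of the corollary exactly into the left-hand side of Lemma \ref{psilogdet}, and invoking that lemma yields the desired bound $2T^{1-\alpha}\bigl[1+\log(\det(\Sigma_{\mathscr{B}}^{T})/\det(\Sigma_{\mathscr{B}}^{0}))\bigr]$. There is no real obstacle here; the only thing to verify carefully is that $2\alpha\le 4$ (so the case $x_t\le 1$ really does give $\max=x_t^{2\alpha}$), which is immediate from $\alpha\in(0,1]$. In particular, no new information-gain or operator inequalities beyond those already used for Lemma \ref{psilogdet} are required.
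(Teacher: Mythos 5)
Your proof is correct and takes the same route the paper intends: the paper states Corollary \ref{alphacorr} without proof as an immediate consequence of Lemma \ref{psilogdet}, and your case analysis showing $\min\{\max\{x_t^{4},x_t^{2\alpha}\},1\}=\min\{x_t^{2\alpha},1\}$ for $x_t\ge 0$ and $\alpha\in(0,1]$ is exactly the missing justification. Nothing further is needed.
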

From \eqref{bound1_2} and Corollary \ref{alphacorr}, we obtain
\begin{align}
&\Exp\left[\sum_{t=0}^{T-1} {\rm term_1}\bigg\vert\bigcap_{t=0}^{T-1}\mathcal{E}^t\right]\nonumber\\
&\leq \Exp\left[\sum_{t=0}^{T-1}
\min\left\{
L\left\{4(1+C^{-1})\beta_{M}^t+1\right\}\max\left\{\left\|\mathscr{B}^t(\Sigma_{\mathscr{B}}^t)^{-\frac{1}{2}}\right\|^2,\left\|\mathscr{B}^t(\Sigma_{\mathscr{B}}^t)^{-\frac{1}{2}}\right\|^\alpha\right\}, 2\Lambda_{\rm max}\right\}\right]\nonumber\\
&\leq \Exp\left[\sum_{t=0}^{T-1}
\left\{
L\left\{4(1+C^{-1})\beta_{M}^t+1\right\}+2\Lambda_{\rm max}\right\}\min\left\{\max\left\{\left\|\mathscr{B}^t(\Sigma_{\mathscr{B}}^t)^{-\frac{1}{2}}\right\|^2,\left\|\mathscr{B}^t(\Sigma_{\mathscr{B}}^t)^{-\frac{1}{2}}\right\|^\alpha\right\}, 1\right\}\right]\nonumber\\
&\leq \sum_{t=0}^{T-1}\sqrt{\Exp\left[~[
L\left\{4(1+C^{-1})\beta_{M}^t+1\right\}+2\Lambda_{\rm max}]^2\right]}\sqrt{\Exp\left[\min\left\{\max\left\{\left\|\mathscr{B}^t(\Sigma_{\mathscr{B}}^t)^{-\frac{1}{2}}\right\|^4,\left\|\mathscr{B}^t(\Sigma_{\mathscr{B}}^t)^{-\frac{1}{2}}\right\|^{2\alpha}\right\}, 1\right\}\right]}\nonumber\\
&\leq \sqrt{\sum_{t=0}^T\Exp\left[~[
	L\left\{4(1+C^{-1})\beta_{M}^t+1\right\}+2\Lambda_{\rm max}]^2\right]}\sqrt{\Exp\left[\sum_{t=0}^{T-1}\min\left\{\max\left\{\left\|\mathscr{B}^t(\Sigma_{\mathscr{B}}^t)^{-\frac{1}{2}}\right\|^{4},\left\|\mathscr{B}^t(\Sigma_{\mathscr{B}}^t)^{-\frac{1}{2}}\right\|^{2\alpha}\right\},1\right\}\right]}\nonumber\\
&\leq\sqrt{T\cdot\texttt{value}}\sqrt{T^{1-\alpha}(2+\gamma_{T,\mathscr{B}}(\lambda))}.\label{term1}
\end{align}
Here, the first inequality is due to \eqref{bound1_2}; the third inequality uses $\Exp[ab]\leq\sqrt{\Exp[a^2]\Exp[b^2]}$; the forth inequality uses the Cauchy-Schwartz inequality; the last is from Corollary \ref{alphacorr}.
Also, 
\begin{align}
&\texttt{value}:=16L^2(1+C^{-1})^2\Exp[(\beta_{M}^T)^2]+(8L^2+16\Lambda_{\rm max}L)(1+C^{-1})\Exp[\beta_{M}^T]+4\Lambda_{\rm max}L+L^2+4\Lambda_{\rm max}^2\nonumber\\
&\leq C'\left\{L^2(1+C^{-1})^2\Exp[(\beta_{M}^T)^2]+(L^2+\Lambda_{\rm max}L)(1+C^{-1})\Exp[\beta_{M}^T]+\Lambda_{\rm max}^2+L^2\right\},\nonumber
\end{align}
for some constant $C'$.


Next, we turn to bound the latter term ${\rm term_2}$ of \eqref{regret1}; our analysis is based on that of ~\citep{kakade2020information}.
Simple calculations show that
\begin{align}
M^\star-\overline{M}^t=\lambda(\Sigma^t_{\mathscr{A}})^{-1}M^\star-(\Sigma^t_{\mathscr{A}})^{-1}\sum_{\tau=0}^{t-1}\sum_{n=0}^{N^\tau-1}\sum_{h=0}^{H^\tau_n-1}{\mathscr{A}^\tau_{h,n}}^\dagger\epsilon^\tau_{h,n},\nonumber
\end{align}
where $\epsilon^\tau_{h,n}$ is the sampled noise at $\tau$-th episode, $h$-th timestep, and $n$-th trajectory.
Now, by introducing a Hilbert space containing an operator of $\mathcal{L}(\mathcal{L}(\mathcal{H};\mathcal{H}'); \R)$, which is a Hilbert-Schmidt operator, because of Assumption \ref{asm1}, we can apply Lemma C.4 in~ \citep{kakade2020information} to our problem too.
Therefore, with probability at least $1-\delta_t$, it holds that
\begin{align}
	\left\|(\Sigma_{\mathscr{A}}^t)^{\frac{1}{2}}\left(M-\overline{M}^t\right)\right\|^2\leq \lambda\|M^\star\|^2+\sigma^2(8d_\phi\log(5)+8\log(\det(\Sigma^t_{\mathscr{A}})\det(\Sigma^0_{\mathscr{A}})^{-1})/\delta_t),\nonumber
\end{align}
and properly choosing $\delta_t$ leads to $\beta^t_{M}$ defined in Section \ref{app:definition}, and we obtain the result
\begin{align}
	\Pr{\left(\bigcup_{t=0}^{T-1}\overline{\mathcal{E}^t}\right)}\leq\frac{1}{2}.\nonumber
\end{align}

In our algorithm, transition data are chosen from any initial states and the horizon lengths vary; however, slight modification of the analysis of LC\textsuperscript{3} will give the following lemma.
\begin{lemma}[Modified version of Theorem 3.2 in \citep{kakade2020information}]
	\label{lc3lemma}
	Suppose Assumptions \ref{asm1}, \ref{assump:RKHS}, \ref{assump:const}, \ref{assump:boundcost}, and \ref{asm:moment_bound} hold.
	Then, the term ${\rm term_2}$ is bounded by
	\begin{align}
	&\Exp\left[\sum_{t=0}^{T-1} {\rm term_2}\bigg\vert\bigcap_{t=0}^{T-1}\mathcal{E}^t\right]\nonumber\\
	&\leq\sqrt{H\Mtwo}\sqrt{64 T(d_\phi+\log(T)+\gamma_{T,\mathscr{A}}(\lambda)+H)}\sqrt{\gamma_{T,\mathscr{A}}(\lambda)}.\nonumber
	\end{align}
\end{lemma}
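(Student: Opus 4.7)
The plan is to follow the simulation-lemma strategy of the underlying LC\textsuperscript{3} analysis, adapted to our episodic structure where an adversary picks $N^t$ initial states with variable horizons summing to at most $H$. Fix an episode $t$ and condition on $\bigcap_t \calE^t$, so that both $M^\star$ and $\hat M^t$ lie in $\textsc{Ball}_M^t$. First, I will telescope
\[
{\rm term_2} = J^{\Theta^t}(X_0^t;c^t) - J^{\Theta^t}(X_0^t;\hat M^t;c^t)
\]
into a sum of one-step model-prediction errors, by rolling out under $(\Theta^t, M^\star)$ for the first $h$ steps and under $(\Theta^t, \hat M^t)$ thereafter, and differencing across $h$. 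Because the noise $\epsilon(\omega)$ is identically distributed under the two models, linearity of expectation isolates the per-step residual $\mathscr{A}^t_{h,n}(M^\star - \hat M^t) = \Psi(\Theta^t)^\dagger (M^\star - \hat M^t)\boldsymbol{\phi}_{x^t_{h,n}}$ paired with the tail value function of $w^t$ composed with the learned rollout.

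Next, because $M^\star, \hat M^t \in \textsc{Ball}_M^t$, I bound the per-step residual in a data-adaptive way:
\begin{align}
\left\|\mathscr{A}^t_{h,n}(M^\star - \hat M^t)\right\|_{\R^{d_\phi}} \le \left\|\mathscr{A}^t_{h,n}(\Sigma^t_{\mathscr{A}})^{-1/2}\right\| \cdot \left\|(\Sigma^t_{\mathscr{A}})^{1/2}(M^\star - \hat M^t)\right\| \le 2\sqrt{\beta_M^t}\left\|\mathscr{A}^t_{h,n}(\Sigma^t_{\mathscr{A}})^{-1/2}\right\|.\nonumber
\end{align}
The tail value function is kept as an abstract random variable whose second moment along any rollout is bounded via Assumption~\ref{asm:moment_bound}, giving a uniform $L^2$-control by $\sqrt{\Mtwo}$. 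Pairing this with the per-step residual via Cauchy-Schwartz yields, for each $(t,n,h)$, a product of (a) a data-adaptive model-uncertainty scalar and (b) a square-integrable future-cost factor.

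The final step sums over $t$, $n$, $h$ and applies Cauchy-Schwartz twice. Within an episode I use $\sum_n H^t_n \le H$ from Assumption~\ref{assump:boundcost} to pay at most a $\sqrt{H}$ factor for aggregating tail costs against $\sqrt{\Mtwo}$. Across episodes I factor $\sqrt{\beta_M^T}$ out of the square-root and are left to control $\sum_{t,n,h}\left\|\mathscr{A}^t_{h,n}(\Sigma^t_{\mathscr{A}})^{-1/2}\right\|^2$, which is precisely the elliptical-potential sum; its expectation is bounded by $\gamma_{T,\mathscr{A}}(\lambda)$ by the same log-determinant telescoping used in Lemma~\ref{psilogdet} (via Kakade et al.'s Lemma C.5). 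Substituting $\beta_M^T = O\!\bigl(\sigma^2(d_\phi + \log T + \gamma_{T,\mathscr{A}})\bigr)$ and collecting the extra $H$ from the within-episode Cauchy-Schwartz gives $\sqrt{H\Mtwo}\cdot\sqrt{64T(d_\phi+\log T+\gamma_{T,\mathscr{A}}+H)}\cdot\sqrt{\gamma_{T,\mathscr{A}}}$.

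The hard part will be the telescoping simulation argument under a \emph{nonlinear} cost $w^t$ and the adversarially chosen bundle $X_0^t$: unlike linear-quadratic settings the tail value is not a linear functional of $\boldsymbol{\phi}$, so I cannot simply differentiate. The workaround is to avoid touching the tail value explicitly and instead dominate it in $L^2$ via $\Mtwo$, then pair it with the linear residual $\mathscr{A}^t_{h,n}(M^\star-\hat M^t)$ through Cauchy-Schwartz on the joint expectation; this is where the moment assumption is essential. A second subtle point is making Cauchy-Schwartz produce a single $\sqrt{H}$ rather than $\sqrt{TH}$, which requires grouping the tail-cost square-sum per episode before bounding and only then taking $\sqrt{T}$ across episodes.
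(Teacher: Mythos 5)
Your proposal is correct and follows essentially the same route as the paper, which does not write out this proof but explicitly defers to the analysis of Theorem 3.2 in \citep{kakade2020information} with a ``slight modification'' for the adversarial initial-state bundles and variable horizons summing to $H$ --- exactly the telescoping simulation argument, confidence-ball residual bound, $L^2$-domination of the tail value via Assumption~\ref{asm:moment_bound}, and elliptical-potential/information-gain aggregation that you reconstruct. Your closing remark about handling the nonlinear cost by Cauchy--Schwartz against the second moment rather than differentiating the tail value is precisely the mechanism in the LC\textsuperscript{3} proof, so there is no gap.
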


Combining all of the above results, we prove Theorem \ref{regrettheorem}:
\begin{proof}[Proof of Theorem \ref{regrettheorem}]
Using \eqref{term1} (which requires Assumptions \ref{asm1}, \ref{assump:RKHS}, \ref{asm:lambda}, and \ref{assump:boundcost}), Lemma \ref{lc3lemma} (which requires Assumptions \ref{asm1}, \ref{assump:RKHS}, \ref{assump:const}, \ref{assump:boundcost}, and \ref{asm:moment_bound}), and $\Pr{\left(\bigcup_{t=0}^{T-1}\overline{\mathcal{E}^t}\right)}\leq\frac{1}{2}$, it follows that
\begin{align}
&\Exp_{\rm \algname}\left[\textsc{Regret}_T\right]\nonumber\\
&\leq\sqrt{T\left\{C'\left\{L^2(1+C^{-1})^2\Exp[(\beta_{M}^T)^2]+(L^2+\Lambda_{\rm max}L)(1+C^{-1})\Exp[\beta_{M}^T]+\Lambda_{\rm max}^2+L^2\right\}\right\}}\sqrt{T^{1-\alpha}(2+\gamma_{T,\mathscr{B}}(\lambda))}\nonumber\\
&+\sqrt{H\Mtwo}\sqrt{64 T(d_\phi+\log(T)+\gamma_{T,\mathscr{A}}(\lambda)+H)}\sqrt{\gamma_{T,\mathscr{A}}(\lambda)}\nonumber\\
&+\frac{1}{2}\cdot (\Lambda_{\rm max}+\sqrt{\Mtwo})\nonumber\\
&\leq C_1T^{1-\frac{\alpha}{2}}(\tilde{d}_{T,1}+\tilde{d}_{T,2}),\nonumber
\end{align}
for some absolute constant $C_1$.
Therefore, the theorem is proved.
\end{proof}

\section{Reduction to eigenstructure assignment problem for linear systems}
\label{app:reduction_to_linear}
To see how our system model studied in \eqref{eq:finite} reduces to linear system case, take $\R^{d_{\mathcal{X}}}$ as $\mathcal{H}_0$ with the canonical basis (i.e., $\boldsymbol{\phi}_{x}=x$) and let
\begin{align}
	\Phi(\Theta)=[I_{d_{\mathcal{X}}}\otimes k_1^{\top}, I_{d_{\mathcal{X}}}\otimes k_2^{\top}, \ldots, I_{d_{\mathcal{X}}}\otimes k_{d_{\mathcal{X}}}^{\top}, I_{d_{\mathcal{X}}}]^{\top},\nonumber
\end{align}
where the feedback matrix is given by $K:=[k_1,k_2,\ldots,k_{d_{\mathcal{X}}}]$, and let
\begin{align}
	M^*=\left[[\boldsymbol{b}_{1},a_1]^{\top},\ldots,[\boldsymbol{b}_{d_{\mathcal{X}}},a_{d_{\mathcal{X}}}]^{\top}\right], \nonumber
\end{align}
where the entries of the row vector $\boldsymbol{b}_{i}$ are all zero except for the entries from the index $(i-1)d_{\mathcal{X}}d_{\mathcal{U}}+1$ to the index $id_{\mathcal{X}}d_{\mathcal{U}}$ given by ${\rm vec}\left(B^{\top}\right)$, and $A=[a_1^{\top},a_2^{\top},\ldots,a_{d_\mathcal{X}}^{\top}]$.
\clearpage
\section{Setups and results of simulations in the main body}
\label{sec:appsim}
Throughout the main body of this paper, we used the following version of Julia; for each experiment, the running time was less than around $10$ minutes.
\begin{verbatim}
Julia Version 1.5.3
Platform Info:
OS: Linux (x86_64-pc-linux-gnu)
CPU: AMD Ryzen Threadripper 3990X 64-Core Processor
WORD_SIZE: 64
LIBM: libopenlibm
LLVM: libLLVM-9.0.1 (ORCJIT, znver2)
Environment:
JULIA_NUM_THREADS = 12
\end{verbatim}
The licenses of Julia, OpenAI Gym, DeepMind Control Suite, Lyceum, and MuJoCo, are [The MIT License; Copyright (c) 2009-2021: Jeff Bezanson, Stefan Karpinski, Viral B. Shah, and other contributors:
https://github.com/JuliaLang/julia/contributors], [The MIT License; Copyright (c) 2016 OpenAI (https://openai.com)], [Apache License Version 2.0, January 2004 http://www.apache.org/licenses/], [The MIT License; Copyright (c) 2019 Colin Summers, The Contributors of Lyceum], and [MuJoCo Pro Lab license], respectively.

In this section, we provide simulation setups, including the details of environments (see also Figure \ref{fig:intro}) and parameter settings.
\begin{figure}[h]
	\begin{center}
		\hspace{-2.5em}
		\includegraphics[clip,width=0.9\textwidth]{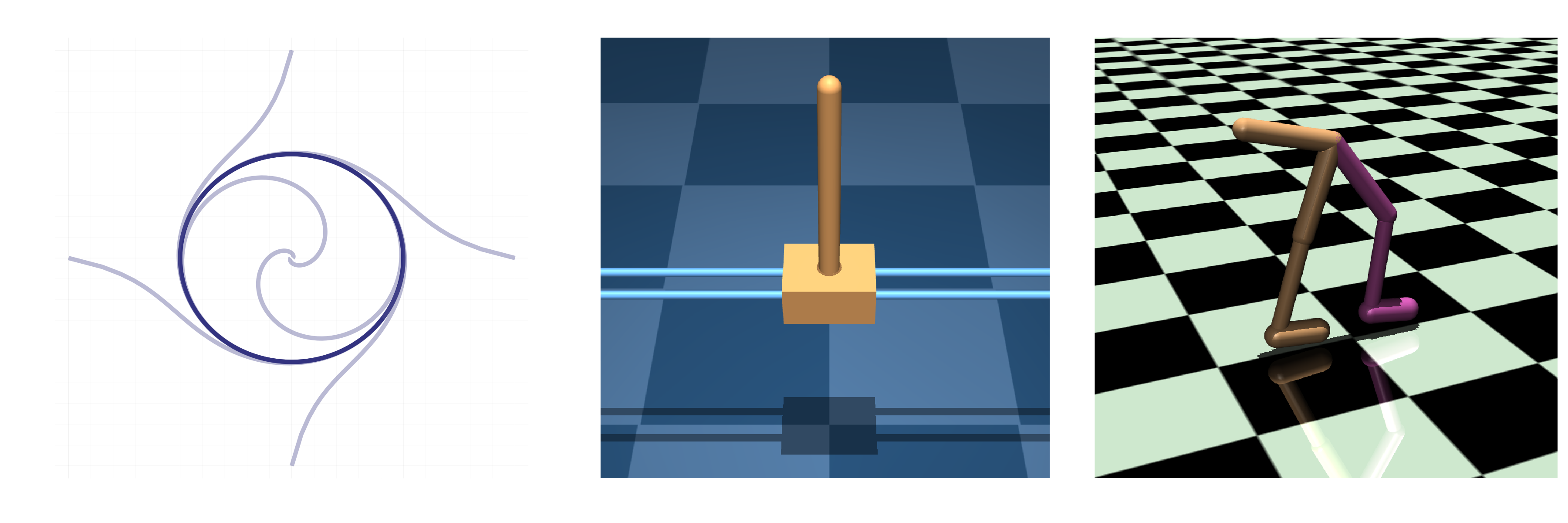}
	\end{center}
	\caption{Illustration of some dynamical systems we have used in this work.  Left: Simple limit cycle represented effectively by the Koopman modes.  Middle: DeepMind Control Suite~ \citep{tassa2018deepmind} Cartpole showing stable cycle with spectral radius regularization.  Right: OpenAI Gym~\citep{brockman2016openai} walker2d showing simpler movement cycle when the Koopman eigenvalues are regularized.} 
	\label{fig:intro}
\end{figure}
\subsection{Cross-entropy method}
\label{app:cem}
Throughout, we used CEM for dynamics parameter (policy) selection to approximately solve {\fwname}.
Here, we present the setting of CEM.

First, we prepare some fixed feature (e.g. RFFs) for $\phi$.
Then, at each iteration of CEM, we generate many parameters to compute the loss (i.e., the sum of the Koopman spectrum cost and {\it negative} cumulative reward) by fitting the transition data generated by each parameter to the feature to estimate its Koopman operator $\mathscr{A}$.
In particular, we used the following regularized fitting:
\begin{align}
\mathscr{A}=YX^\top(XX^\top+I)^{-1},\nonumber
\end{align}
where $Y:=[\boldsymbol{\phi}_{x_{h_1+1,1}},\boldsymbol{\phi}_{x_{h_2+1,2}},\ldots,\boldsymbol{\phi}_{x_{h_n+1,n}}]$ and $X:=[\boldsymbol{\phi}_{x_{h_1,1}},\boldsymbol{\phi}_{x_{h_2,2}},\ldots,\boldsymbol{\phi}_{x_{h_n,n}}]$.

If the feature spans a Koopman invariant space and the deterministic dynamical systems are considered, and if no regularization (i.e., the identity matrix $I$) is used, any sufficiently rich trajectory data may be used to exactly compute $\mathscr{K}(\Theta)$ for $\Theta$.
However, in practice, the estimate depends on transition data although the regularization mitigates this effect.
In our simulations, at each iteration, we randomly reset the initial state according to some distribution, and computed loss for each parameter generating trajectory starting from that state.

\subsection{Setups: imitating target behaviors through Koopman operators}
The discrete-time dynamics
\begin{align}
r_{h+1}=r_h+v_{r,h}\Delta t,~\theta_{h+1}=\theta_h+v_{\theta,h}\Delta t\nonumber
\end{align}
is considered and
the policy returns $v_{r,h}$ and $v_{\theta,h}$ given $r_h$ and $\theta_h$.
In our simulation, we used $\Delta t=0.05$.
Note the ground-truth dynamics
\begin{align}
\dot{r}=r(1-r^2),~\dot{\theta}=1,\nonumber
\end{align}
is discretized to
\begin{align}
r_{h+1}=r_h+r_h(1-r_h^2)\Delta t,~\theta_{h+1}=\theta_h+\Delta t.\nonumber
\end{align}
Figure \ref{fig:singleint_1} plots the ground-truth trajectories of observations and $x$-$y$ positions.

We trained the target Koopman operator using the ground-truth dynamics with random initializations; the hyperparameters used for training are summarized in Table \ref{tab:param1}.

Then, we used CEM to select policy so that the spectrum cost is minimized; the hyperparameters are also summarized in Table \ref{tab:param1}.

\begin{table}
	\caption{Hyperparameters used for limit cycle generation.}
	\label{tab:param1}
	\centering
	\begin{tabular}{l|c||l|c}
		\toprule
		CEM hyperparameter & Value & Training target Koopman operator & Value\\
		\midrule
		samples     & $200$ & training iteration & $500$ \\
		elite size      & $20$ & RFF bandwidth for $\phi$     & $3.0$  \\
		iteration           & $50$  & RFF dimension $d_\phi$    & $80$ \\
		planning horizon & $80$ & horizon for each iteration  & $80$ \\
		policy RFF dimension & $50$      &  & \\
		policy RFF bandwidth & $2.0$      &  &  \\
		\bottomrule
	\end{tabular}
\end{table}

We tested two forms of the spectrum cost; $\Lambda_1(\mathscr{A})=\|\mathbf{m}-\mathbf{m}^\star\|_1$ and
$\Lambda_2(\mathscr{A})=\|\mathscr{A}-\mathscr{A}^\star\|^2_{\rm HS}$.
The resulting trajectories are plotted in Figure \ref{fig:singleint_2} and \ref{fig:singleint_3}, respectively.
It is interesting to observe that the top mode imitation successfully converged to the desirable limit cycle while Frobenius norm imitation did not.
Intuitively, the top mode imitation focuses more on reconstructing the practically and physically meaningful behavior while minimizing the error on the Frobenius norm has no immediately clear physical meaning.

\begin{figure}[t]
	\begin{center}
		\hspace{-1em}
		\includegraphics[clip,width=0.9\textwidth]{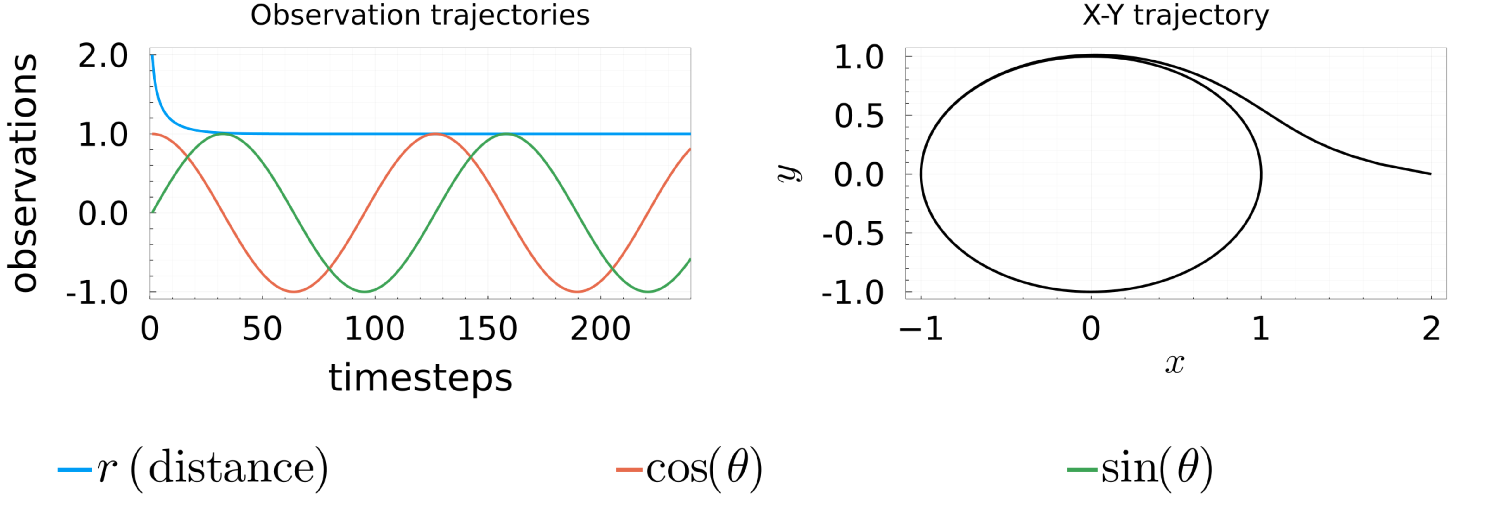}
	\end{center}
	\caption{The ground-truth trajectory of the limit cycle $\dot{r}=r(1-r^2),~\dot{\theta}=1$.  Left: Observations $r$, $\cos(\theta)$, and $\sin(\theta)$.  Right: $x$-$y$ positions.} 
	\label{fig:singleint_1}
\end{figure}

\begin{figure}[t]
	\begin{center}
		\hspace{-1em}
		\includegraphics[clip,width=0.9\textwidth]{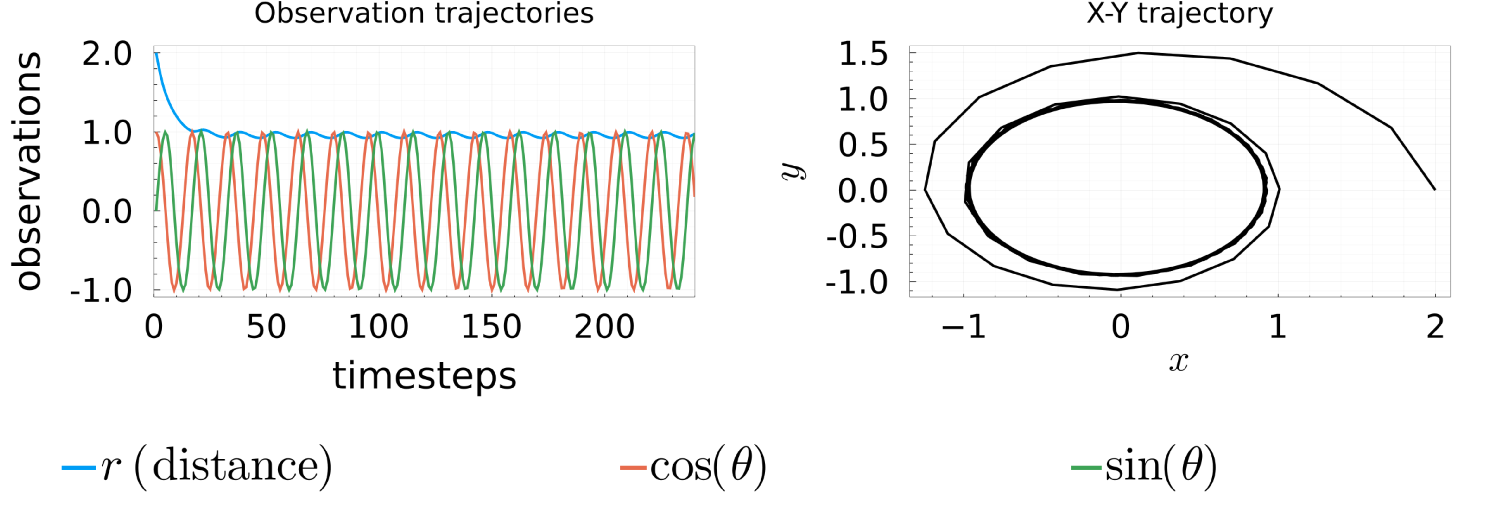}
	\end{center}
	\caption{The trajectory generated by RFF policies that minimize $\Lambda(\mathscr{A})=\|\mathbf{m}-\mathbf{m}^\star\|_1$.  Left: Observations $r$, $\cos(\theta)$, and $\sin(\theta)$.  Right: $x$-$y$ positions.} 
	\label{fig:singleint_2}
\end{figure}

\begin{figure}[t]
	\begin{center}
		\hspace{-1em}
		\includegraphics[clip,width=0.9\textwidth]{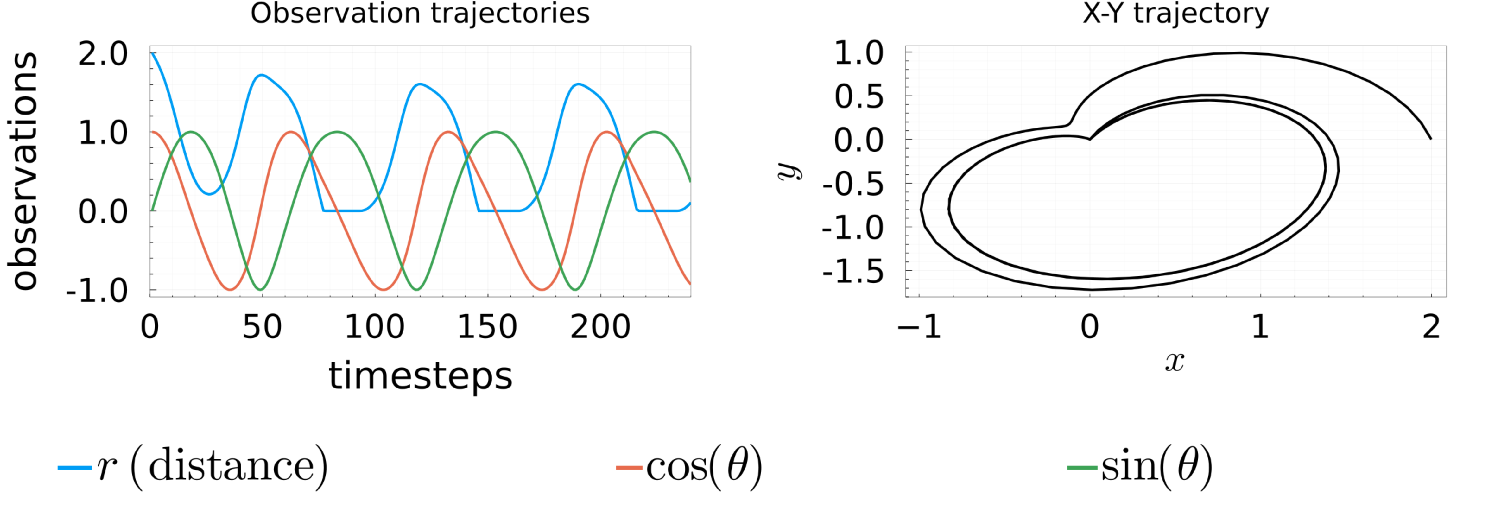}
	\end{center}
	\caption{The trajectory generated by RFF policies that minimize $\Lambda(\mathscr{A})=\|\mathscr{A}-\mathscr{A}^\star\|^2_{\rm HS}$.  Left: Observations $r$, $\cos(\theta)$, and $\sin(\theta)$.  Right: $x$-$y$ positions.} 
	\label{fig:singleint_3}
\end{figure}

\subsection{Setups: Generating stable loops (Cartpole)}
We used DeepMind Control Suite Cartpole environment with modifications; specifically, we extended the cart rail to $[-100,100]$ from the original length $[-5,5]$ to deal with divergent behaviors.
Also, we used a combination of linear and RFF features; the first elements of the feature are
simply the observation (state) vector, and the rest are Gaussian RFFs.
That way, we found divergent behaviors were well-captured in terms of spectral radius.
The hyperparemeters used for CEM are summarized in Table \ref{tab:param2}.

\begin{table}[t]
	\caption{Hyperparameters used for stable loop generation.}
	\label{tab:param2}
	\centering
	\begin{tabular}{l|c||l|c}
		\toprule
		Hyperparameters & Value & Hyperparameters & Value\\
		\midrule
		samples     & $200$ & elite size      & $20$ \\
		iteration           & $100$ & planning horizon & $100$ \\
		dimension $d_\phi$     & $50$      & RFF bandwidth for $\phi$     & $2.0$  \\
		policy RFF dimension & $100$  & policy RFF bandwidth & $2.0$  \\
		\bottomrule
	\end{tabular}
\end{table}

\subsection{Setups: Generating smooth movements (Walker)}
Because of the complexity of the dynamics, we used four random seeds in this simulation, namely,
$100$, $200$, $300$, and $400$.
We used a combination of linear and RFF features for both $\phi$ and the policy.
Note, according to the work~\citep{rajeswaran2017towards}, linear policy is actually sufficient for some tasks for particular environments. 
The hyperparemeters used for CEM are summarized in Table \ref{tab:param3}. 
\begin{table}[t]
	\caption{Hyperparameters used for Walker.}
	\label{tab:param3}
	\centering
	\begin{tabular}{l|c||l|c}
		\toprule
		Hyperparameters & Value & Hyperparameters & Value\\
		\midrule
		samples     & $300$ & elite size      & $20$ \\
		iteration           & $50$ & planning horizon & $300$ \\
		dimension of $d_\phi$     & $200$      & RFF bandwidth for $\phi$     & $5.0$  \\
		policy RFF dimension & $300$  & policy RFF bandwidth & $30.0$  \\
		\bottomrule
	\end{tabular}
\end{table}

The resulting trajectories of Walker are illustrated in Figure \ref{fig:rolling}.
The results are rather surprising; because we did not specify the height in reward, the dynamics with only cumulative cost showed rolling behavior (Up figure) to go right faster most of the time.
On the other hand, when the spectrum cost was used, the hopping behavior (Down figure) emerged.
Indeed this hopping behavior moves only one or two joints most of the time while fixing other joints, which leads to lower (absolute values of) eigenvalues.

The eigenspectrums of the resulting dynamics with/without the spectrum cost are plotted in Figure \ref{fig:walker_eig}.
In fact, it is observed that the dynamics when the spectrum cost was used showed consistently lower (absolute values of) eigenvalues; for the hopping behavior, most of the joint angles converged to some values and stayed there.

\begin{figure}[t]
	\begin{center}
		\includegraphics[clip,width=0.95\textwidth]{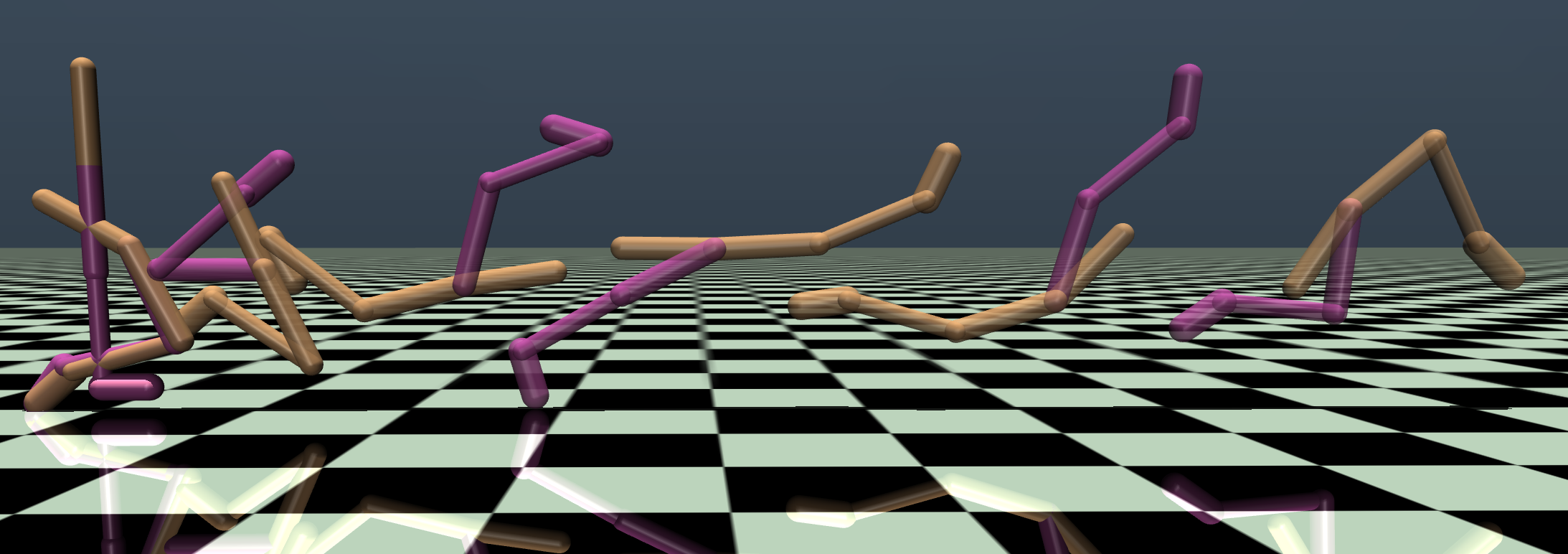}
		\includegraphics[clip,width=0.95\textwidth]{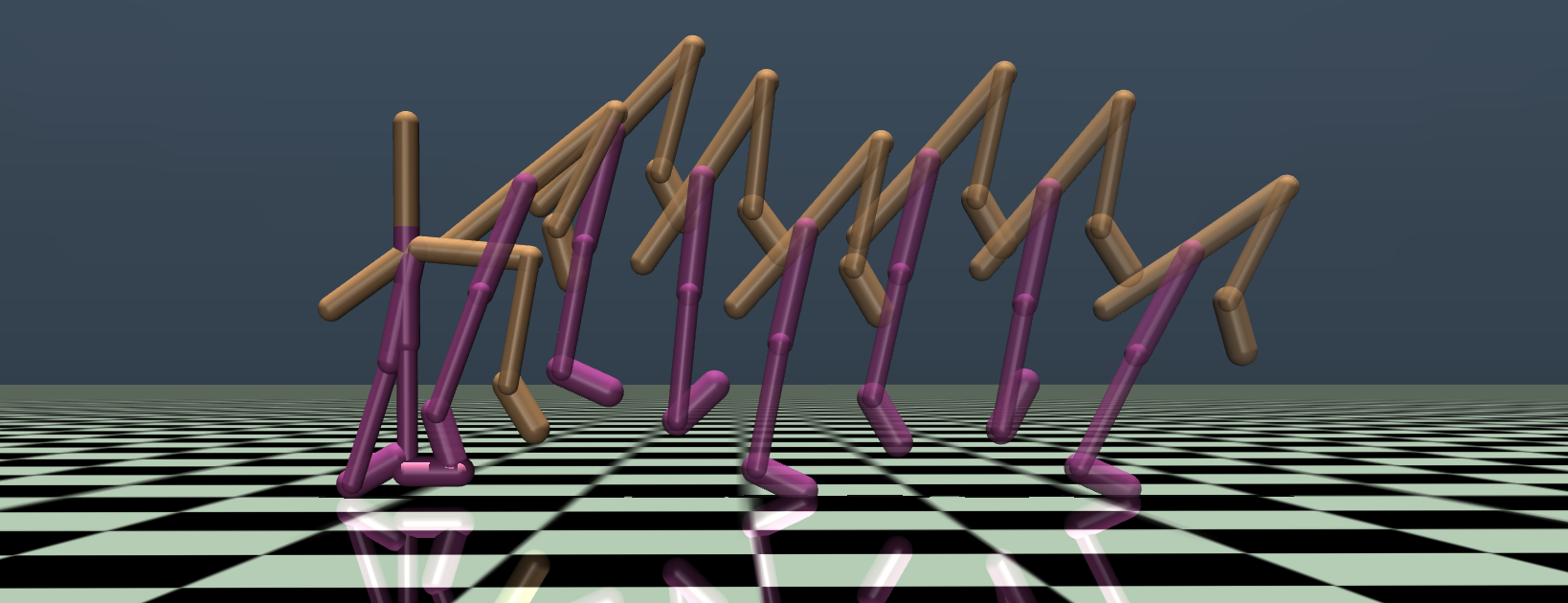}
	\end{center}
	\caption{Walker trajectories visualized via Lyceum.  Up: When only (single-step) reward $v-0.001\|a\|^2_{\R^{6}}$ is used, showing rolling behavior.  Down: When the spectrum cost $\Lambda(\mathscr{A})=5\sum_{i=1}^{d_\phi}|\lambda_i(\mathscr{A})|$ is used together with the reward, showing simple hopping behavior.} 
	\label{fig:rolling}
\end{figure}

\begin{figure}[t]
	\begin{center}
		\includegraphics[clip,width=0.95\textwidth]{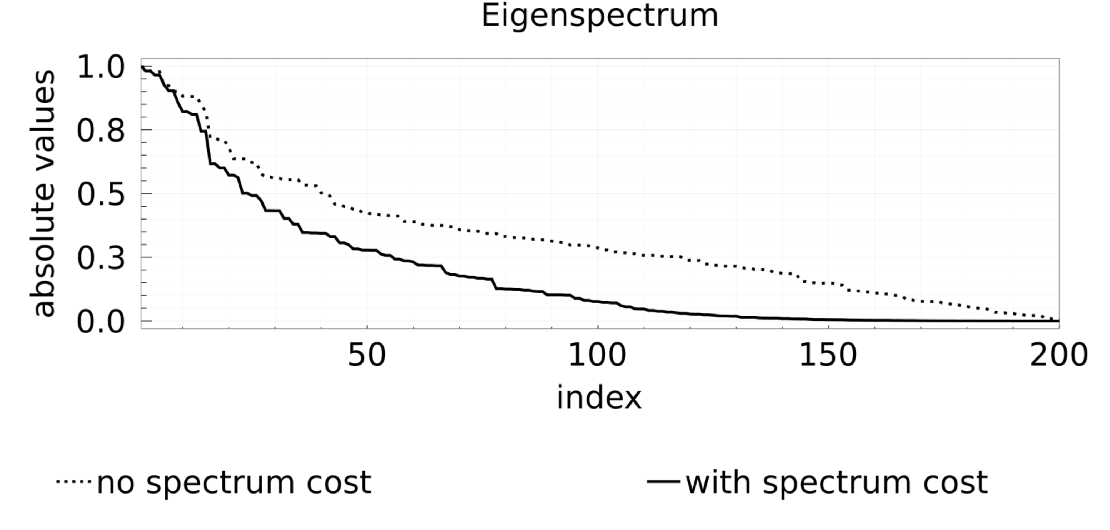}
	\end{center}
	\caption{Eigenspectrums showing absolute values of eigenvalues for the dynamics with/without the spectrum cost.} 
	\label{fig:walker_eig}
\end{figure}

\subsection{Setups: \algfullname}
\paragraph{Decomposable kernels case}
We explain how to reduce the memory size in a special decomposable kernel case.
Assume that we employ decomposable kernel with $B=I$ for simplicity.
Also, suppose $\Psi(\Theta)\in\R^{d_\Psi \times d_\phi}$ is of finite dimension.
For such a case, the dimension $d_\Psi=d_\zeta\cdot d_\phi$ where $d_\zeta$ is the dimension of $\mathcal{H}''$;
however, we do not need to store a covariance matrix of size $d_\phi^2d_\zeta\times d_\phi^2d_\zeta$ but only require
to update a matrix of size $d_\phi d_\zeta\times d_\phi d_\zeta$ which significantly reduces the memory size.
Specifically, we consider the model $\boldsymbol{\phi}_{x_{h+1}}={M'} (\boldsymbol{\phi}_{x_{h}}\otimes\boldsymbol{\zeta}(\Theta))$; then using $M'':={\rm reshape}(M', d_\phi, d_\zeta, d_\phi)$, we obtain $\mathscr{K}(\Theta)=[M''[:,:,1]\boldsymbol{\zeta}(\Theta),\ldots,M''[:,:,d_\phi]\boldsymbol{\zeta}(\Theta)]$.
Here, $\boldsymbol{\zeta}$ is the realization of $\zeta(\Theta)$ over some basis.
Now, we note that the dimension of $\boldsymbol{\phi}_{x_{h}}\otimes\boldsymbol{\zeta}(\Theta)$ is $d_\zeta\cdot d_\phi$.
Our practical (Thompson sampling version) algorithm is thus given by Algorithm \ref{alg:algorithmp}.
\begin{algorithm}[!t]
	\begin{algorithmic}[1]	
		\Require Parameter set $\Pi$; prior parameter $\lambda$; covariance scale $\iota\in \R_{\geq0}$.
		\State Prior distribution over $M'$ is given by ${(\Sigma_{M'}^0)}^{-1}$ where $\Sigma_{M'}^0:=\lambda I$
		\For{$t = 0 \dots T-1$}
		\State Adversary chooses $X^t_0$.
		\State Sample $\hat{M'}^t$  from $\mathcal{N}\left(\overline{M'}^t,{(\Sigma_{M'}^t)}^{-1}\right)$
		\State Solve $\Theta^t = \argmin_{\Theta\in\Pi}\Lambda[\hat{\mathscr{K}}^t(\Theta)]+ J^{\Theta}\left(X^t_0;\hat{M'}^t;c^t\right)$ (e.g., using CEM)
		\State Under the dynamics $\mathcal{F}^{\Theta^t}$, sample transition data $\tau^t := \{\tau^t_n\}_{n=0}^{N^t-1}$, where $\tau^t_n:=\{x^t_{h,n}\}_{h=0}^{H^t_n}$
		\State Update $\Sigma_{M'}^t$
		\EndFor
	\end{algorithmic}
	\caption{Practical Algorithm for {\algname}}
	\label{alg:algorithmp}
\end{algorithm}
In the algorithm, we used $\hat{\mathscr{K}}^t(\Theta)=[\hat{M''}^t[:,:,1]\boldsymbol{\zeta}(\Theta),\ldots,\hat{M''}^t[:,:,d_\phi]\boldsymbol{\zeta}(\Theta)]$, where $\hat{M''}^t:={\rm reshape}(\hat{M'}^t, d_\phi, d_\zeta, d_\phi)$.

\paragraph{Simple linear system experiment}
The hyperparameters used for {\algname} in the simple linear system experiment are summarized in Table \ref{tab:param_linear}.
\begin{table}[t]
	\caption{Hyperparameters used for {\algname} in the simple linear case.}
	\label{tab:param_linear}
	\centering
	\begin{tabular}{l|c||l|c}
		\toprule
		Hyperparameters & Value & Hyperparameters & Value\\
		\midrule
		prior parameter $\lambda$     & $0.05$ & covariance scale $\iota$      & $0.0001$ \\
		planning horizon & $50$ & &  \\
		\bottomrule
	\end{tabular}
\end{table}

\paragraph{Cartpole pretraining policies}
For training three policies, we used Model Predictive Path Integral Control (MPPI)~\citep{williams2017model} with the rewards
$(p+0.3)^2$, $(p-0.3)^2$, $(v+1.5)^2$, and $(v-1.5)^2$, where $p$ is the cart position and $v$ is the cart velocity.  Also, for all of the cases, the penalty $-100$ is added when $\cos(\theta)<0$, where $\theta$ is the pole angle, which aims at preventing the pole from falling down.

Because we need to have one more state dimension to specify which reward to generate, we used the analytical model of cartpole specified in OpenAI Gym.

Starting from random initial state, we first use MPPI to move to $p=-0.3$, then from there move to $p=0.3$; and we learn an RFF policy for this movement along with the Koopman operator.
Then, we randomly initialized the state to accelerate to $v=-1.5$, followed by a random initialization again to accelerate to $v=1.5$; we then learned two policies for those two movements.
We used the planning horizons of $100$ for every movement except for the movement going to $p=0.3$ where we used $120$ because it is following the previous movement.
We repeated this for $20$ iterations.

The parameter space $\Pi$ is a space of linear combinations of those three policies.
We summarized the hyperparemeters used for MPPI/pretraining in Table \ref{tab:parammppi}.

\begin{table}
	\caption{Hyperparameters of MPPI and pretraining.}
	\label{tab:parammppi}
	\centering
	\begin{tabular}{l|c||l|c}
		\toprule
		MPPI hyperparameters & Value & Pretraining hyperparameters & Value\\
		\midrule
		variance of controls     & $0.4^2$ & iteration & $20$ \\
		temperature parameter      & $0.1$ &policy RFF dimension & $2000$\\
		planning horizon           & $100/120$ &policy RFF bandwidth & $1.5$ \\
		number of planning samples & $524$ &dimension of $d_\phi$ & $60$ \\
	& &RFF bandwidth for $\phi$     & $1.5$ \\
		\bottomrule
	\end{tabular}  
\end{table}
\paragraph{Cartpole learning}
For learning, we used four random seeds, namely, $100$, $200$, $300$, and $400$.
The estimated spectrum cost curve represents the cost $\Lambda[\hat{\mathscr{K}}^t(\Theta^t)]$.
The hyperparameters used for CEM and {\algname} are summarized in Table \ref{tab:param4}.
We note that for {\algname} we added additional cost on the policy parameter exceeding its $\ell_\infty$ norm above $2.0$.
\begin{table}[t]
	\caption{Hyperparameters used for CEM/{\algname}.}
	\label{tab:param4}
	\centering
	\begin{tabular}{l|c||l|c}
		\toprule
		Hyperparameters for CEM & Value & Hyperparameters for {\algname} & Value\\
		\midrule
		samples     & $200$ & dimension of $d_\zeta$ & $50$\\
		elite size      & $20$ & RFF bandwidth for $\zeta$ & $5.0$\\
		planning horizon & $500$& prior parameter $\lambda$ & $1.0$ \\
		iteration           & $50$ & covariance scale $\iota$ & $0.0001$ \\
		\bottomrule
	\end{tabular}
\end{table}

\section{Further experimental analysis}
\label{sec:furtherexp}
In this section, we provide additional experiments.
Throughout this section, we used the following version of Julia as our computational resource has changed when conducting the experiments presented in this section.
\begin{verbatim}
	Julia Version 1.5.3
	Platform Info:
	OS: Linux (x86_64-pc-linux-gnu)
	CPU: Intel(R) Xeon(R) CPU E5-2620 v3 @ 2.40GHz
	WORD_SIZE: 64
	LIBM: libopenlibm
	LLVM: libLLVM-9.0.1 (ORCJIT, haswell)
	Environment:
	JULIA_NUM_THREADS = 12
\end{verbatim}
MuJoCo version 2.0 is used (license: MuJoCo Pro Individual license).

\subsection{Variations on stable Cartpole motions}
In our simulation of generating stable Cartpole motion, we have seen that it shows oscillating behavior when stability is enforced through the spectrum cost in addition to the reward that encourages the cartpole to have larger velocity.
To see this phenomenon more in details, we have conducted the same experiments for different time horizons, namely, $50$, $100$ and $200$.
Here, we use $\Lambda(\mathscr{A})=10^5\max(1,\rho(\mathscr{A}))$ ($10$ times more weight than that in the simulation experiment in the main body).
The resulting velocity trajectories with/without the spectrum cost are plotted in Figure \ref{fig:cartpole_diff_horizon}.  Also, the angle trajectories are plotted in Figure \ref{fig:cartpole_diff_horizon_angle}, where the zero lines are the threshold for adding penalty costs.  Note for the case of time horizon $200$, we used more intensive CEM search whose parameters are summarized in Table \ref{tab:param_add_cartpole}, but could not find a policy parameter that can keep the pole straight up.  Studying more sophisticated heuristic search algorithm will be an important direction of future research.
From Figure \ref{fig:cartpole_diff_horizon}, it is observed that the longer horizon may not indicate more oscillation ``cycles''.  In fact, our spectrum cost only regularizes the dynamics to be stable, which may include a motion where the velocity converges to some fixed value.
The spectrum radius for the cases of time horizon $50$, $100$ and $200$ without the spectrum cost is given by $1.003$, $1.00006$ and $1.002$, while that with the spectrum cost is $0.992$, $0.999$ and $0.997$.
While all of them show stable Koopman spectrum when the spectrum cost is used, the case for the time horizon $100$ shows particularly interesting behavior.  Please recall that the failure of keeping the pole straight up is not necessarily regarded as unstable dynamics over the specified state space {\em where the angle representation is bounded} but costs the learner within the cumulative cost term.
\begin{figure}[h]
	\begin{center}
		\includegraphics[clip,width=0.95\textwidth]{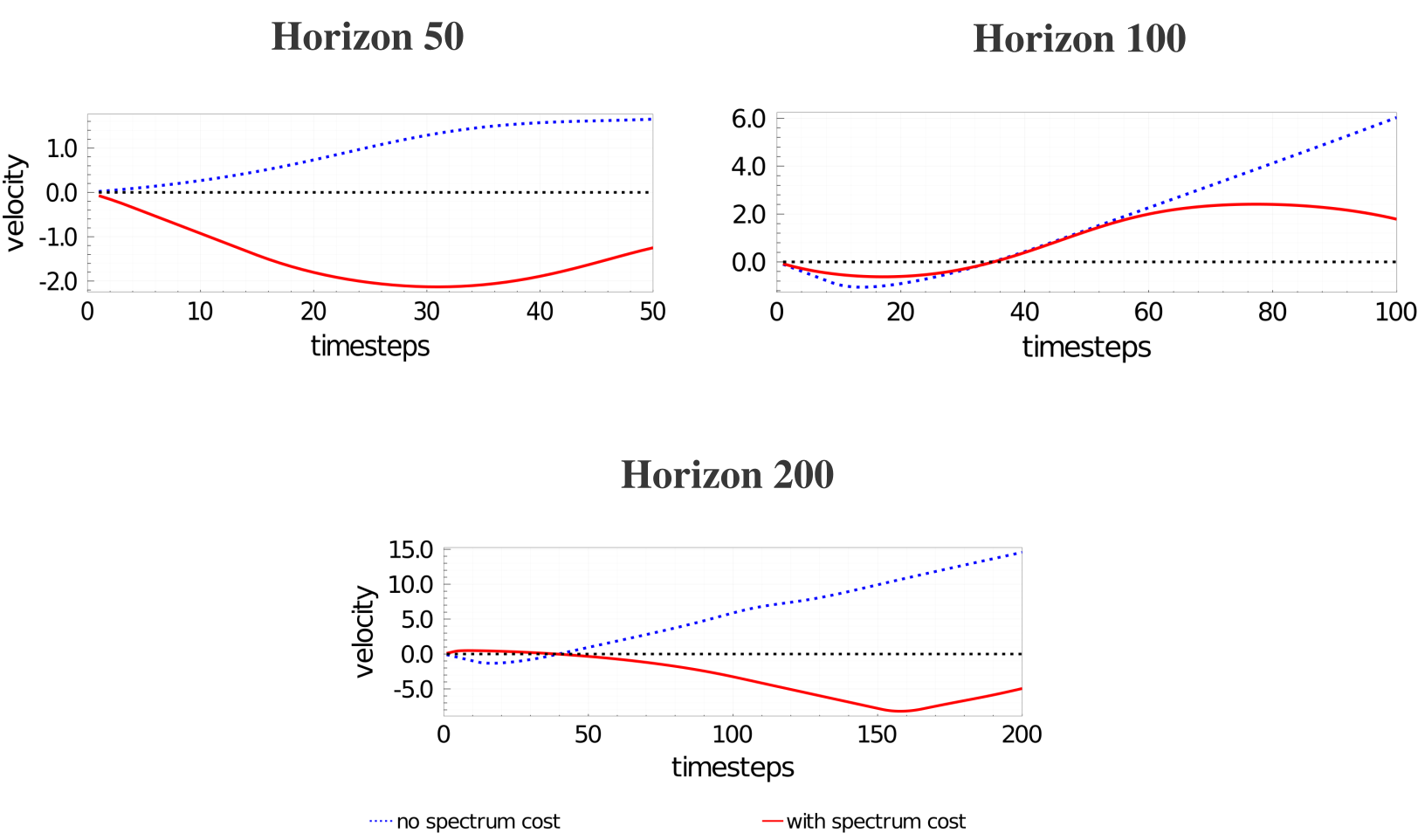}
	\end{center}
	\caption{Velocity trajectories of Cartpole for different time horizons ($50$, $100$ and $200$) with/without the spectrum cost.} 
	\label{fig:cartpole_diff_horizon}
\end{figure}
\begin{figure}[h]
	\begin{center}
		\includegraphics[clip,width=0.95\textwidth]{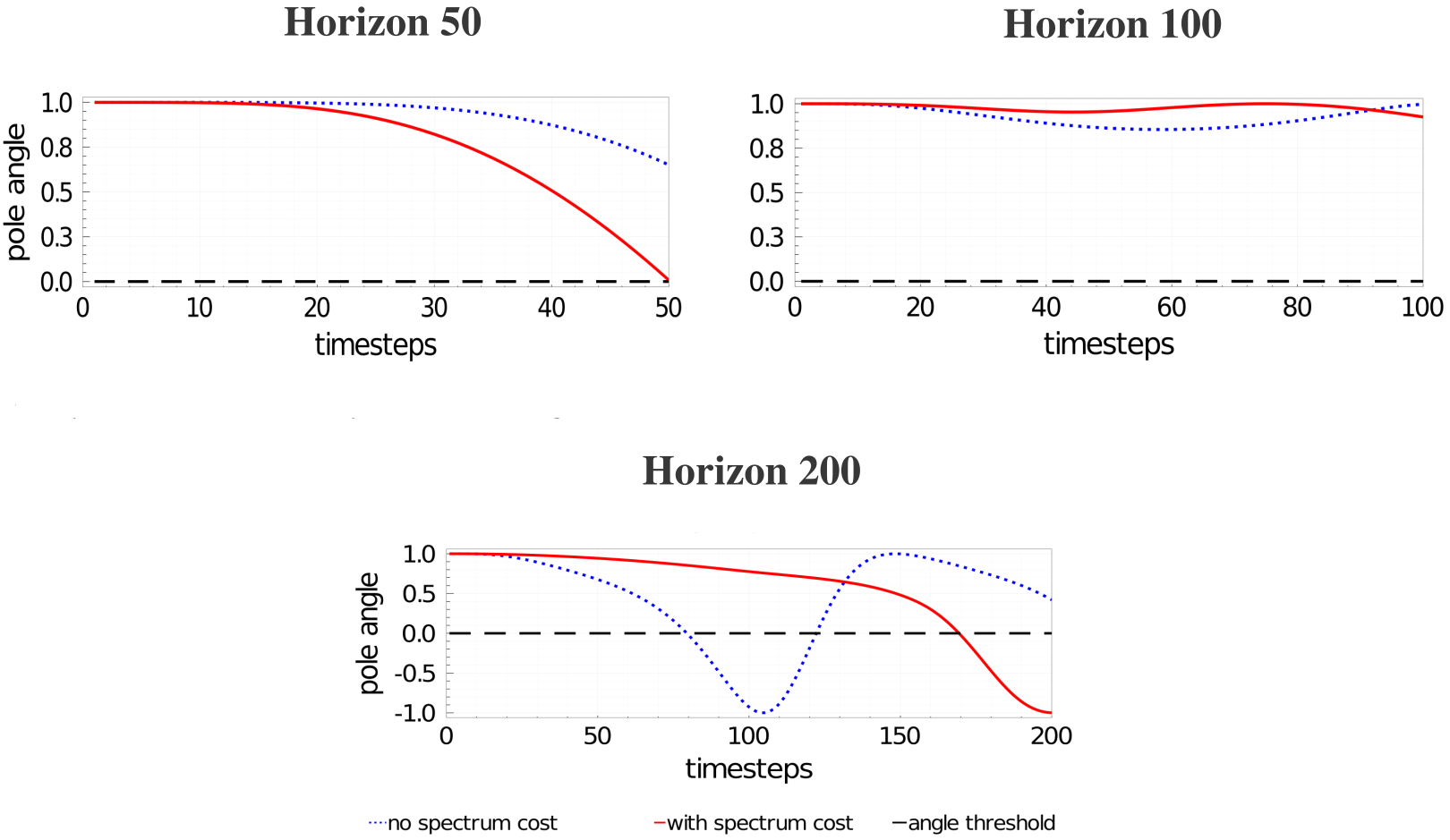}
	\end{center}
	\caption{Angle trajectories of Cartpole for different time horizons ($50$, $100$ and $200$) with/without the spectrum cost.} 
	\label{fig:cartpole_diff_horizon_angle}
\end{figure}

\begin{table}[t]
	\caption{Hyperparameters used for stabilized Cartpole for time horizon $200$.}
	\label{tab:param_add_cartpole}
	\centering
	\begin{tabular}{l|c||l|c}
		\toprule
		Hyperparameters & Value & Hyperparameters & Value\\
		\midrule
		samples     & $200$ & elite size      & $20$ \\
		iteration           & $1000$ & planning horizon & $200$ \\
		dimension $d_\phi$     & $50$      & RFF bandwidth for $\phi$     & $2.0$  \\
		policy RFF dimension & $300$  & policy RFF bandwidth & $2.0$  \\
		\bottomrule
	\end{tabular}
\end{table}

\subsection{Variations on smooth Walker motions}
To investigate smooth motion generations studied in the main body of this paper more, we conducted additional experiments.
\subsubsection{Smoothness comparison with increased action cost}
Especially, we also compare our {\fwname} for smoothness enhancements to the use of action costs in the Walker environment.
In this experiment, we used the hyperparameters summarized in Table \ref{tab:param_walker_add}.
We again used a combination of linear and RFF features for both $\phi$ and the policy.
Recall the default immediate reward is $v-0.001\|a\|^2_{\R^{6}}$, where $v$ is the velocity and $a$ is the action vector of dimension $6$.
Here, in addition to {\fwname}, we tested increased action cost scenarios where the immediate rewards are $v-0.01\|a\|^2_{\R^{6}}$ and $v-0.1\|a\|^2_{\R^{6}}$ respectively.
Across the six seed runs (of seed numbers of $100$, $200$, $300$, $400$, $500$, and $600$), we obtained the mean of the cumulative reward and the cumulative action cost (which is computed for the trajectories using $0.001\|a\|^2_{\R^{6}}$ for all of the cases), and the mean and standard deviation of the spectrum cost, all of which are summarized in Table \ref{tab:walker_cum_sp_comp}.
As observed, increased action cost in our scenarios shows lower spectrum cost; while {\fwname} shows better cumulative reward with better spectrum cost.  However, the motion generated by the increased action cost shows lower action penalty cost; which implies that the spectrum cost and the action cost have some correlation while they qualitatively prefer different motions.
We also measured the smoothness by another metric than the spectrum cost itself, which is defined by
\begin{align}
	{\rm Smoothness}(\tau):=\frac{1}{d_\mathcal{X}H}\sum_{h=0}^{H-1}\|x_{h+1}-x_{h}\|_1,\nonumber
\end{align}
where $\tau:=\{x_h\}_{h=0}^H$ is a trajectory.
The mean smoothness values across the runs for the motions generated by the CEM algorithms with default action cost, $10$ times more action cost, $100$ times more action cost, and with the spectrum cost are $0.082$, $0.033$, $0.007$, and $0.028$ respectively, and they appear to be consistent to the spectrum cost in this case.
The motions are visualized in Figure \ref{fig:walker_additional}; their joint trajectories are plotted in Figure \ref{fig:additional_joint_walker} and the eigenspectrums are given in Figure \ref{fig:additional_eigen}.
Note those motions are of those showing median values of the spectrum cost within the seed runs.

\begin{table}[t]
	\caption{Hyperparameters used for additional Walker smoothness experiments.}
	\label{tab:param_walker_add}
	\centering
	\begin{tabular}{l|c||l|c}
		\toprule
		Hyperparameters & Value & Hyperparameters & Value\\
		\midrule
		samples     & $300$ & elite size      & $20$ \\
		iteration           & $120$ & planning horizon & $300$ \\
		dimension of $d_\phi$     & $200$      & RFF bandwidth for $\phi$     & $5.0$  \\
		policy RFF dimension & $300$  & policy RFF bandwidth & $30.0$  \\
		\bottomrule
	\end{tabular}
\end{table}

\begin{table}[t]
	\caption{Cumulative reward, cumulative action cost (penalty), and spectrum cost comparisons.}
	\label{tab:walker_cum_sp_comp}
	\centering
	\begin{tabular}{l|c|c|c|c}
		\toprule
		Method (Env. setting) & Reward & Penalty & Spectrum cost (mean) & Spectrum cost (std)\\
		\midrule
		CEM (default action cost)     & $1011.5$ & $177.3$      & $317.0$ & $\pm 33.2$ \\
		CEM ($\times10$ action cost)    & $596.4$ & $10.9$      & $213.2$ & $\pm 50.0$ \\
		CEM ($\times100$ action cost)    & $63.4$ & $0.5$      & $88.8$ & $\pm 46.6$ \\
		CEM (with spectrum cost)    & $737.8$ & $78.1$      & $186.6$ & $\pm 88.4$ \\
		\bottomrule
	\end{tabular}
\end{table}

\begin{figure}[t]
	\begin{center}
		\includegraphics[clip,width=0.95\textwidth]{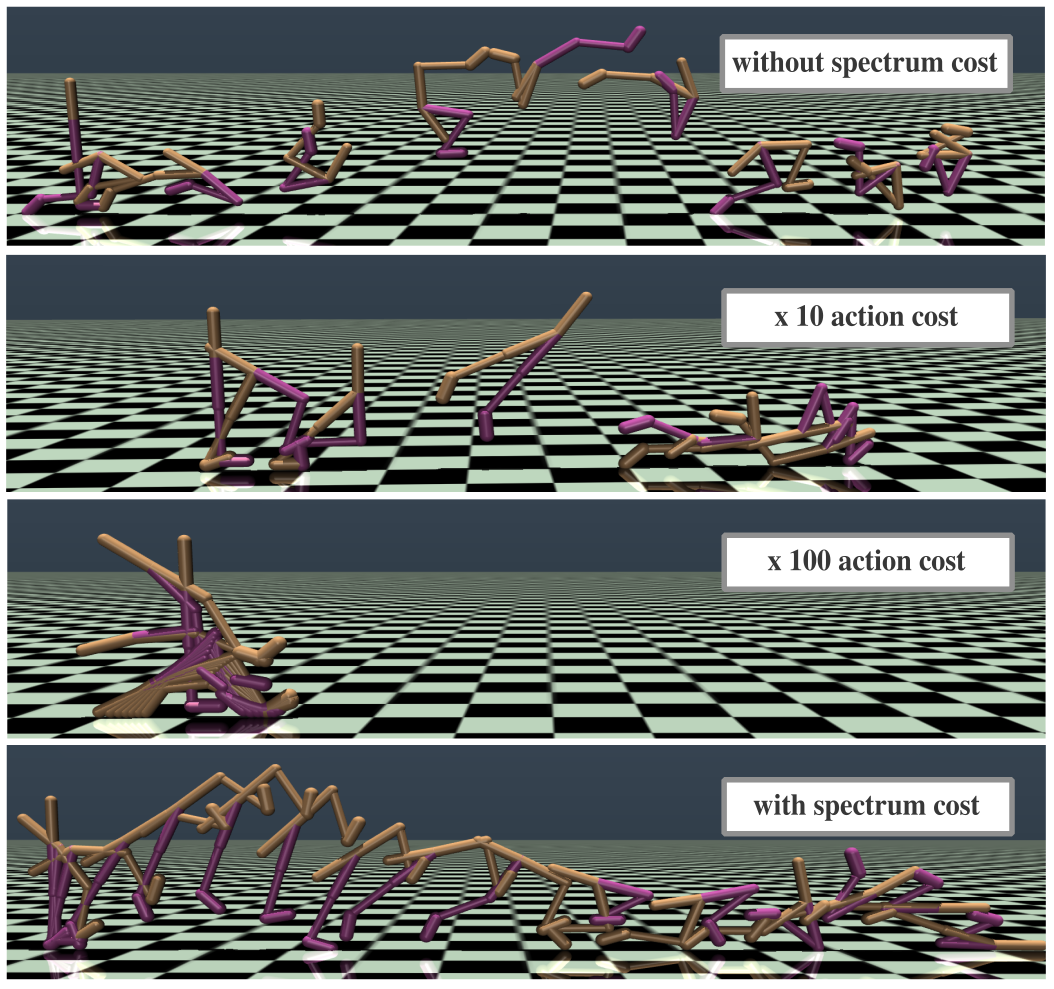}
	\end{center}
	\caption{Visualizations of Walker motions generated by the CEM algorithm with default action cost, $10$ times more action cost, $100$ times more action cost, and with the spectrum cost.  The motions are of those showing median values of the spectrum cost within the seed runs.  It is observed that the motion generated by the one with $10$ times more action cost is smooth but uses two feet to hop, which would reduce the magnitudes of actions applied to the joints.  The motion generated by the one with the spectrum cost again lifts one foot and hops; this specific visualized motion then shows a bit of rotation at the last moment.} 
	\label{fig:walker_additional}
\end{figure}

\begin{figure}[t]
	\begin{center}
		\includegraphics[clip,width=\textwidth]{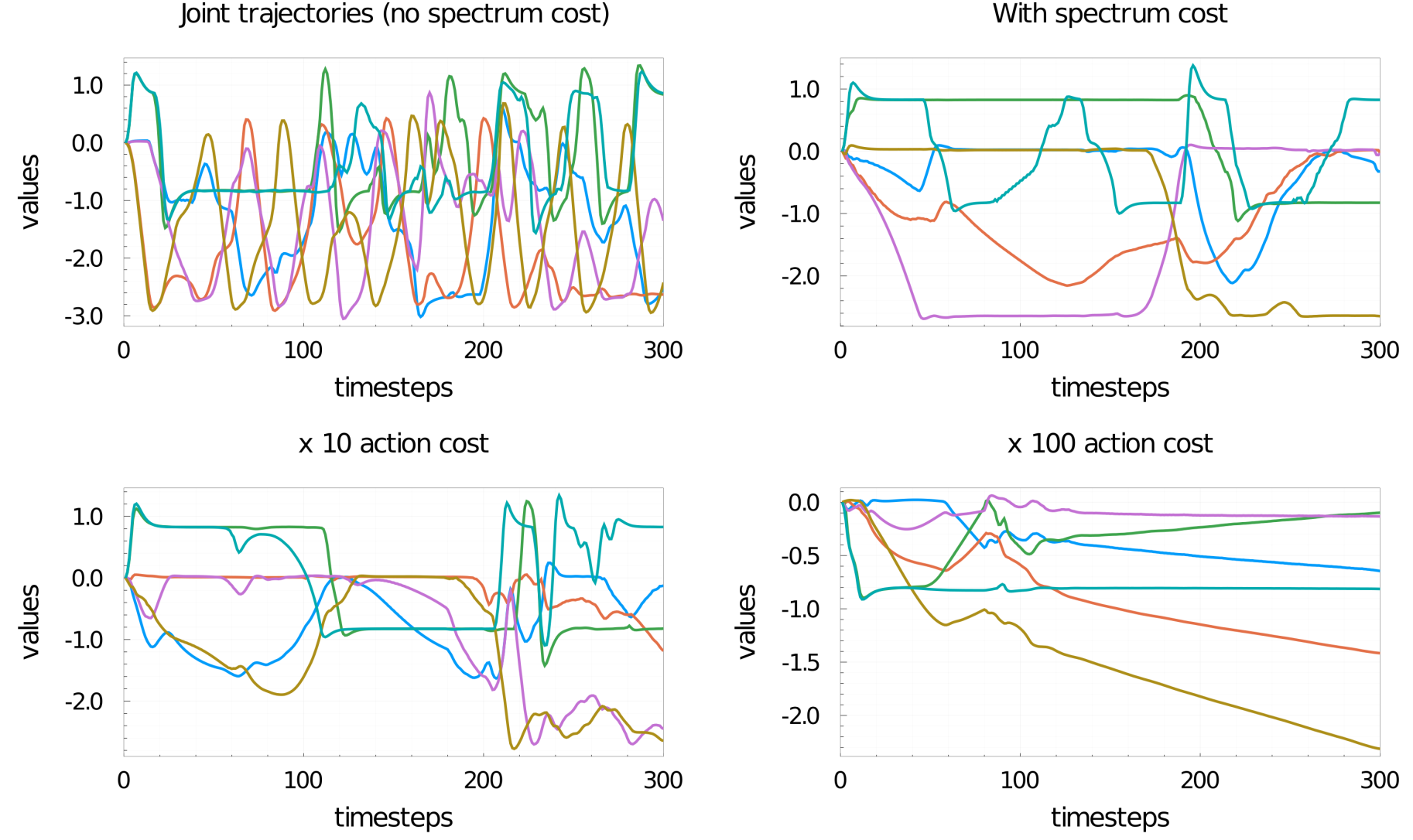}
	\end{center}
	\caption{Joint trajectories of Walker motions generated by the CEM algorithm with default action cost, $10$ times more action cost, $100$ times more action cost, and with the spectrum cost.  They are of those showing median values of the spectrum cost within the seed runs.} 
	\label{fig:additional_joint_walker}
\end{figure}
\begin{figure}[t]
	\begin{center}
		\includegraphics[clip,width=\textwidth]{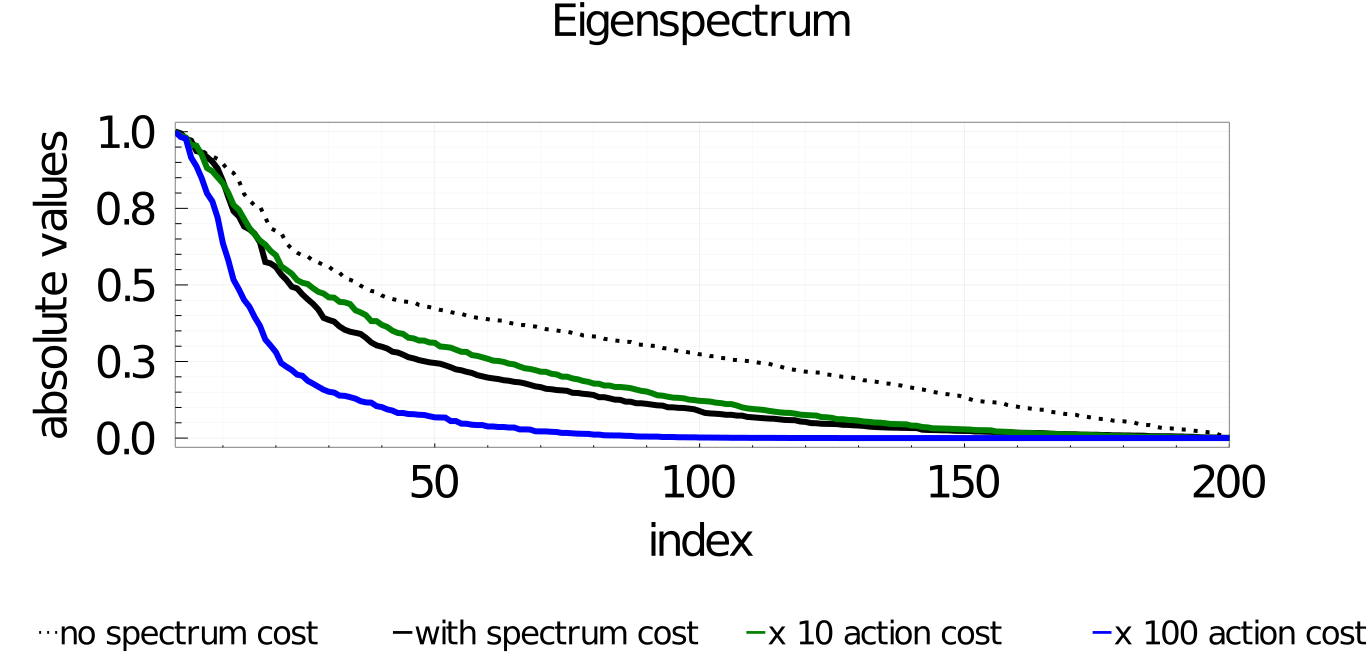}
	\end{center}
	\caption{Averaged eigenspectrums showing absolute values of eigenvalues for the dynamics with/without the spectrum cost and with $10$ times more action cost and $100$ times more action cost.} 
	\label{fig:additional_eigen}
\end{figure}
\clearpage
\subsubsection{Different feature space $\mathcal{H}_0$}
Next, we examine what happens when different feature space $\mathcal{H}_0$ is employed in practice.
In particular, we used a polynomial feature (spanned by $x,x^2,x^3,x^4,x^5$) for $\phi$ while the policy is again a combination of linear and RFF features.
The hyperparameters are the same except for the dimension of $d_\phi$ which is now $5d_{\mathcal{X}}$.

Now, the spectrum costs of the generated motions of the CEM algorithms with default action cost, $10$ times more action cost, $100$ times more action cost are given by $232.9\pm 13.6$, $194.5\pm 19.5$, $152.1\pm 42.8$, and $217.9\pm24.7$; and the mean of cumulative reward and action cost (penalty) of the one with spectrum cost defined over the polynomial feature space is $900.1$ and $143.3$.  It is observed that $10$ times more action cost led to lower spectrum cost in this case than {\fwname} while the cumulative reward of {\fwname} is much higher.  The smoothness measure for {\fwname} is now $0.064$, which is higher than the case with RFF features.

In fact, when visualizing the motion and joint trajectory Figure \ref{fig:poly_basis}, we observe that the walker also shows similar rolling behavior but with slightly better periodicity than that one without spectrum cost.
Eigenspectrums are shown in Figure \ref{fig:poly_eig}; where we see less difference among the ones with/without spectrum cost and with $\times 10$ action cost.

These results imply that the feature space selection influences the qualitative behavior difference in practice; please also see Remark \ref{rem:ksc}.

\begin{figure}[t]
	\begin{center}
		\includegraphics[clip,width=0.95\textwidth]{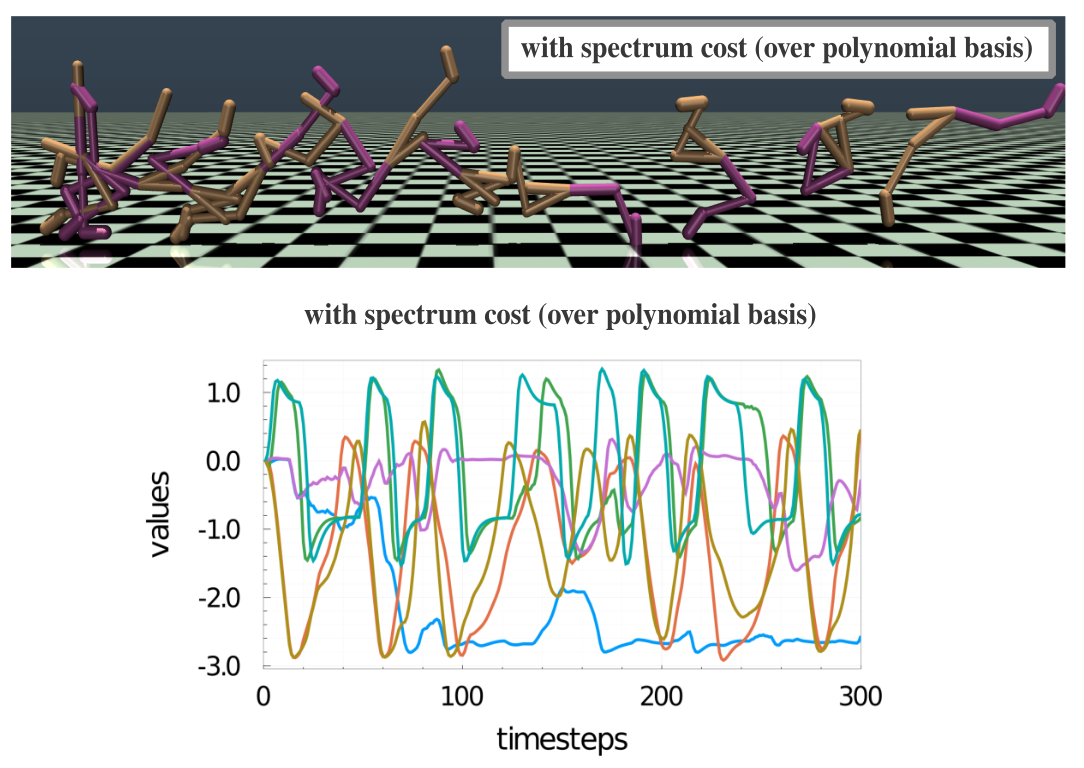}
	\end{center}
	\caption{Up: Visualization of Walker motions generated by the CEM algorithm with the spectrum cost where the feature space is spanned by $x,x^2,x^3,x^4,x^5$.  The motion is of that showing median value of the spectrum cost within the seed runs.  Down: Its joint trajectory.} 
	\label{fig:poly_basis}
\end{figure}

\begin{figure}[t]
	\begin{center}
		\includegraphics[clip,width=\textwidth]{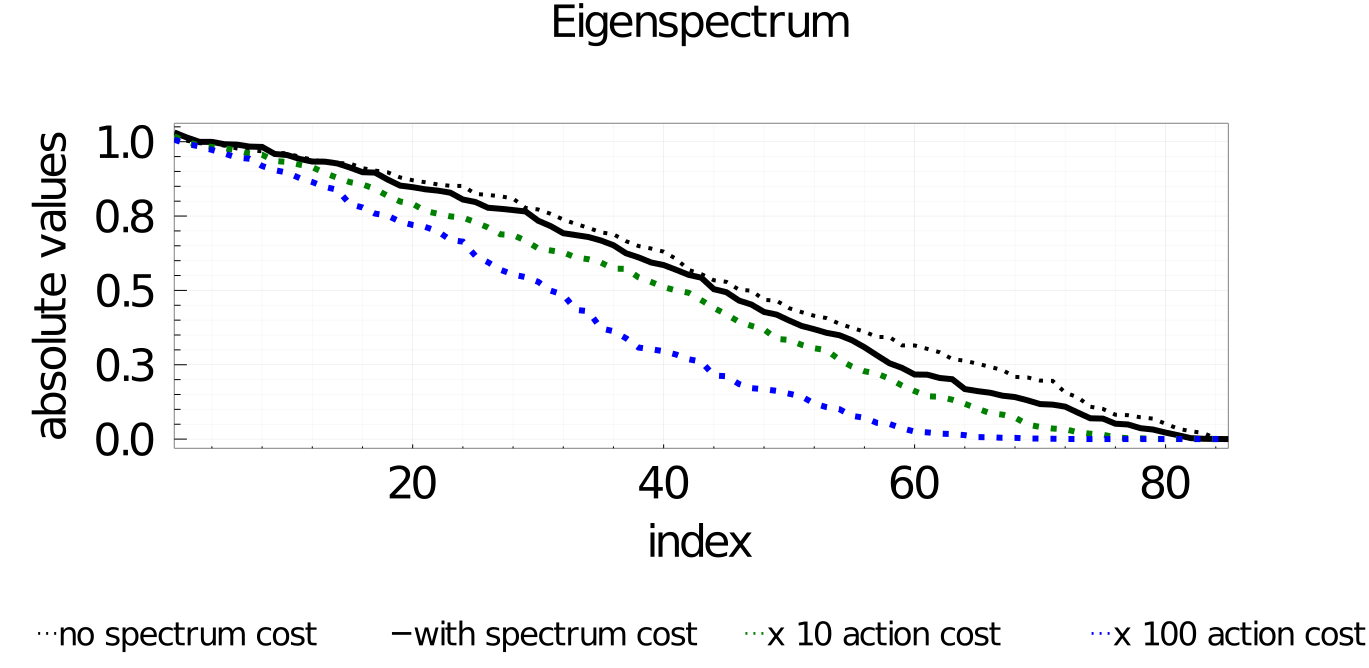}
	\end{center}
	\caption{Averaged eigenspectrums over the polynomial feature space showing absolute values of eigenvalues for the dynamics with/without the spectrum cost and with $10$ times more action cost and $100$ times more action cost.} 
	\label{fig:poly_eig}
\end{figure}

\end{document}